\documentclass[twoside,11pt]{article}

% Any additional packages needed should be included after jmlr2e.
% Note that jmlr2e.sty includes epsfig, amssymb, natbib and graphicx,
% and defines many common macros, such as 'proof' and 'example'.
%
% It also sets the bibliographystyle to plainnat; for more information on
% natbib citation styles, see the natbib documentation, a copy of which
% is archived at http://www.jmlr.org/format/natbib.pdf

% Available options for package jmlr2e are:
%
%   - abbrvbib : use abbrvnat for the bibliography style
%   - nohyperref : do not load the hyperref package
%   - preprint : remove JMLR specific information from the template,
%         useful for example for posting to preprint servers.
%
% Example of using the package with custom options:
%
% \usepackage[abbrvbib, preprint]{jmlr2e}

\usepackage{jmlr2e}

\usepackage{paralist, amsmath, amssymb, color, graphicx, hyperref, algorithm, algpseudocode, float, floatflt, cite, mathtools, bm}
\usepackage{tikz, pgfplots}

\usepackage{enumitem, bigstrut}
\usepackage{booktabs, multirow, threeparttable}
\usepackage{setspace}
\usepackage{mathabx}
\usepackage{thmtools, thm-restate, cleveref}
\usepackage{wrapfig}

\pgfplotsset{compat=1.18}

% Definitions of handy macros can go here

\newcommand{\ex}{\mathbb{E}}

\newcommand{\ep}{\epsilon}

% Heading arguments are {volume}{year}{pages}{date submitted}{date published}{paper id}{author-full-names}

\usepackage{lastpage}
%\jmlrheading{}{}{1-\pageref{LastPage}}{}{}{}{}
%\jmlrheading{23}{2024}{1-\pageref{LastPage}}{1/21; Revised 5/22}{9/22}{21-0000}{Yu-Wei Chen and Jordan Awan}

% Short headings should be running head and authors last names

\ShortHeadings{Near-Optimal Private Tests for Simple and MLR Hypotheses}{Chen, Pasupathy and Awan}
\firstpageno{1}
\raggedbottom

\begin{document}

\title{Near-Optimal Private Tests for Simple and MLR Hypotheses}

\author{\name Yu-Wei Chen \email chen4357@purdue.edu \\
       \addr Department of Statistics\\
       Purdue University \\
       West Lafayette, IN 47907, USA
        \AND
       \name Raghu Pasupathy \email pasupath@purdue.edu  \\
       \addr Department of Statistics\\
       Purdue University \\
       West Lafayette, IN 47907, USA
       \AND
       \name Jordan Awan \email jaa557@pitt.edu  \\
       \addr Department of Statistics\\
       University of Pittsburgh \\
       Pittsburgh, PA 15260, USA}

\editor{}

\maketitle

\begin{abstract}%   <- trailing '%' for backward compatibility of .sty file
    We develop a near-optimal testing procedure under the framework of Gaussian differential privacy for simple,  as well as one- and two-sided tests under monotone likelihood ratio conditions. Our mechanism is based on a private mean estimator with data-driven clamping bounds, whose leading term of the population risk matches that of the non-private sample mean (including the constant) and second term matches the private minimax rate up to logarithmic factors. Using this estimator, we construct private test statistics that achieve the same asymptotic relative efficiency as the non-private, most powerful tests while maintaining conservative type I error control. In addition to our theoretical results, our numerical experiments show that our private tests outperform competing DP methods and offer comparable power to the non-private most powerful tests, even at moderately small sample sizes and privacy loss budgets.
\end{abstract}

\begin{keywords}
  Differential Privacy (DP), Hypothesis Testing, Monotone Likelihood Ratio, Instance-Optimal
\end{keywords}

\section{Introduction}

Differential Privacy (DP), introduced by \citet{dwork2006calibrating}, protects individual privacy by injecting calibrated randomness into data processing; this noise complicates statistical analysis and can lead to biased or invalid conclusions if ignored \citep{Alexis2020, Christopher2021}. As a result, classical inferential tools have been extended to the DP setting, including hypothesis testing \citep{gaboardi2016differentially, awan2018binomial}, confidence regions \citep{wang2019differentially, covington2021unbiased,wang2025optimal}, and finite-sample inference \citep{awan2025simulation}.

Of all of the statistical tasks, hypothesis testing is one of the most fundamental, being a primary tool in scientific research, as well as having connections to confidence intervals and minimax lower bounds \citep{ibragimovHasminskii1981}.
%
%. Hypothesis tests  provide a clear and concise framework for framing scientific questions as a decision problem between two disjoint sets of distributions, $H_0$ and $H_1$. Hypothesis tests also have many connections to other areas of statistics, such as in the construction of confidence intervals or as a key component in minimax lower bounds for estimation problems \citep{ibragimovHasminskii1981}. A well-designed test must balance the asymmetric roles of the hypotheses: high power is needed to detect deviations from $H_0$, but excessive rejection of $H_0$ carries statistical and practical costs. This trade-off becomes increasingly delicate under privacy constraints \citep{awan2023canonical}.
%
Given the crucial role of hypothesis testing in statistics, developing its private counterparts is important for the adoption of DP methods. 

A critical issue in private testing is small-sample performance, often overlooked in the computer science literature but crucial in clinical trials and experimental designs \citep{sakpal2010sample, ledolter2020focus}, where data collection is costly and time-consuming.  More broadly, this concern reflects a central question in DP: how large must a sample be before privacy-induced noise becomes negligible in practice? While \citet{canonne2019structure} established the optimal sample complexity for simple hypothesis testing using a clamped and noised per-sample log-likelihood statistic, their use of a fixed clamping range fails to attain optimal constants, resulting in a loss of effective sample size.

This difficulty points to a deeper, unresolved challenge in private testing: the absence of an exact analogue of the \emph{Neyman–Pearson Lemma} under differential privacy.  While \citet{awan2023canonical} proved that there exists a most powerful DP test for simple hypotheses, there is no known closed form for this test, beyond simple settings such as Bernoulli data \citep{awan2018binomial}. %For more general hypotheses, near-optimality therefore appears to be a natural and informative goal---matching the first-order asymptotic performance of the classical optimal test whenever possible---and may offer insight into the structure of an optimal private test.

In this paper, we address both challenges---small-sample inefficiency and the absence of a Neyman–Pearson–type testing principle under DP---in a one-dimensional setting, which underlies higher-dimensional problems \citep{casella2002statistical, luenberger2008linear}. Our method is built around a private mean estimator that is minimax-rate optimal and preserves non-private asymptotic relative efficiency. This estimator is obtained via a data-dependent clamping rule that is constructed using private quantiles and is inspired by \citet{canonne2019structure} and \citet{huang2021instance}. Leveraging the private mean estimator, our method yields a near-optimal test for simple and monotone likelihood ratio (MLR) hypotheses. Our results are developed in the Gaussian DP framework \citep{dong2022gaussian}, which is increasingly adopted as the state of the art \citep{gomez2025gdpReport} and also implies zero-concentrated DP \citep{bun2016concentrated}, which was used in the 2020 U.S. Decennial Census products \citep{abowd20222020}.

\paragraph{Contributions:}
\begin{itemize}
\item \textbf{Rank-error calibration in DP quantile estimation.}
Our analysis corrects technical errors in \citet{huang2021instance} by identifying the crucial role that bin width serves. In particular, we refine the configuration of the noisy binary search used for DP quantile estimation, obtain a corrected rank error bound, and derive a tighter bound for a fixed failure probability.

\item \textbf{Near-optimal private mean estimation.}
We develop a private mean estimator whose population risk, accounting for both the sampling variability and privacy-induced randomness, matches the DP minimax rates up to log-factors. This estimator thus has the same asymptotic distribution as the sample mean, a key ingredient for our private testing procedure. This mechanism is also of independent interest, outperforming competing methods in our simulation studies.

\item \textbf{Near-optimal private tests for simple and MLR hypotheses.}
Building on these components, we introduce a unified framework for private hypothesis testing under simple, one-sided MLR, and two-sided MLR hypotheses. The resulting tests achieve near-optimal power, attaining the same asymptotic efficiency as their non-private counterparts, and are supported by rigorous theoretical guarantees and practical guidance for implementation, including conservative type I error guarantees.
\end{itemize}

\textbf{Organization:}
Section~2 reviews relevant background on hypothesis testing and Gaussian differential privacy. Section~3 introduces the two core algorithms, \texttt{GDP-Quant} and \texttt{GDP-MeanEst}, and establishes their theoretical properties. Section~4 presents our private testing procedures and establishes their asymptotic properties. Section~5 contains numerical experiments to evaluate the performance of our methods against competing approaches, first comparing 
%first demonstrates the strength of our
\texttt{GDP-MeanEst} against other DP mean estimation algorithms and then comparing our DP testing procedure against alternatives. 
%existing methods in simulations. Then, we apply it to simple and MLR hypothesis testing problems and compares their performance with competing DP tests. 
Section~6 discusses implications and possible directions for future work.

\textbf{Related Work:}
%There is now a sizeable literature on DP statistical inference and DP hypothesis testing in particular; 
This work most closely builds upon the contributions of \citet{canonne2019structure} and \citet{huang2021instance}. \citet{canonne2019structure} analyze a randomized, clamped variant of the log-likelihood ratio test, which they showed has optimal sample complexity for simple hypotheses. \citet{huang2021instance} developed an instance-optimal private mean estimation mechanism which first estimates the location of the data using a DP quantile mechanism, clamps the data to this estimated range, and then adds Gaussian noise to the clamped mean. While our data-dependent clamping procedure is most inspired by \citet{huang2021instance}, data-dependent clamping has appeared in several other works \citep{smith2011privacy-preserving,biswas2020coinpress,covington2021unbiased}, with varying details. While not explicitly designed for simple and MLR testing, goodness-of-fit tests such as the Kolmogorov-Smirnov and Cramer-von-Mises tests can be applied in these settings; \citet{awan2025differentially} developed DP analogues of these tests, which we compare against.  

The statistical literature on differentially private hypothesis testing traces back to work on private chi-squared test statistics for genomic and clinical trial data \citep{Duy2009dpClinical, Uhler2013privacyPreserving}. This line of research was later extended to differentially private versions of classical finite-sample tests for categorical data, including tests of independence, goodness-of-fit, and distributional closeness \citep{wang2015revisiting, gaboardi2016differentially, kakizaki2017dpChi, ryan2017aNewClass}. \citet{karwa_finite_2017} derive the first finite-sample differentially private confidence interval for Gaussian data, which can also be used for hypothesis testing.

There are also several DP tests based on robust or distribution-free statistics, which yield powerful results in small samples \citep{couch2019differentially,awan2025differentially}. Furthermore, following the universally optimal binomial test of \citep{awan2018binomial}, a series of works has investigated optimal private tests under various settings \citep{awan2020differentially,awan2023canonical}. However, universally optimal pure-DP tests are known not to exist in larger domains or multi-dimensional settings \citep{brenner2014impossibility, awan2023canonical,awan2022log}. While this paper focuses on the ``central model'' of differential privacy, where the data is held by a trusted curator, there is also a substantial line of research on hypothesis testing under local differential privacy as well \citep{or2018local, gaboardi2018local, jayadev2019testWithout}.

For hypothesis testing with generic distributions, several perspectives—including approximation, clamping, and empirical distributions—have been explored. \citet{wang2018statistical} construct statistical approximating distributions for DP statistics and apply them to testing problems. \citet{awan2025simulation} and \citet{wang2025optimal} use simulation-based inference strategies to produce hypothesis tests and confidence itnervals from privatized test statistics. \citet{kazan2023test} and \citet{pena2025differentially} developed a DP wrapper to combine multiple hypothesis tests each computed on subsets of the data, which have guarantees on the type I error, but limited power \citep{awan2025simulation}.

\section{Background} \label{sec: background}

We introduce the necessary background for hypothesis testing and Gaussian differential privacy.

\subsection{Hypothesis Testing}

Let $\Theta_0,\Theta_1$ be a partition of the parameter space $\Theta$ and $\underline{x} = (x_1, \ldots , x_n) \in \mathcal{X}^n$ be distributed $x_i\overset{\mathrm{i.i.d.}}{\operatorname*{\sim}}f_\theta$, where $\theta\in\Theta.$  
%\begin{definition}[Hypothesis Test] \label{def: hypothesis test}
A (randomized) test of $H_0:\theta\in\Theta_0$ versus $H_1:\theta\in\Theta_1$ is a measurable function $\phi:\mathcal{X}^n\to[0,1]$. A test $\phi$ is said to be level-$\alpha$ if $\sup_{\theta\in\Theta_0}\mathbb{E}_{f_\theta}\phi\leq\alpha.$ The power of $\phi$ at $\theta$ is denoted $\beta_\phi(\theta)=\mathbb{E}_{f_{\theta}}\phi$.   
%\end{definition}

%In Definition~\ref{def: hypothesis test}, 
We interpret $\phi(\underline{x})$ is the probability of rejecting the null hypothesis, given an observation $\underline{x} \in \mathcal{X}^n.$ The outcome of a test is either ``Reject'' or ``Fail to Reject'' with respective probabilities $\phi(\underline{x})$, and $1-\phi(\underline{x})$. The following are two useful concepts in hypothesis testing. Let $\{\phi_n\}$ be a sequence of test functions. If $\ex_{f_{\theta_0}} \phi_n \to \alpha$ with $\theta_0 \in \Theta_0$, then $\phi_n$ is said to have asymptotic level $\alpha$. If $\beta_{\phi_n}(\theta_1) \to 1$ with $\theta_1 \in \Theta_1$, then $\phi_n$ is said to be consistent.

\begin{definition}[Relative efficiency, \citealp{van2000asymptotic}]
Let $\varphi_{\nu,n}$ be a test of $H_0:\theta=\theta_0$ versus $H_1:\theta=\Theta_1$, based on $n$ observations, where $\nu\to\infty$ indexes the asymptotics. For each $\nu$, let $n_\nu$ denote the minimal sample size for which $\varphi_{\nu,n}$ satisfies the requirements as follows. \textbf{(i) Pitman setting.}
Under local alternatives $\theta_\nu=\theta_0+h\nu^{-1/2}\in \Theta_1$, $n_\nu$ is the smallest $n$ for which $\varphi_{\nu,n}$ attains asymptotic level $\alpha$ and some power $\gamma\in(\alpha,1)$. \textbf{(ii) Bahadur setting.} For a fixed alternative $\theta_1\neq\theta_0$, $n_\nu$ is the smallest $n$ such that $P_{\theta_1}(\varphi_{\nu,n}=0)\le a_\nu$ with $a_\nu\downarrow 0$. Then, in either setting, for two sequences of tests $\varphi^{(1)}_{\nu,n}$ and $\varphi^{(2)}_{\nu,n}$, the (asymptotic) relative efficiency is $\mathrm{ARE}(\varphi^{(1)},\varphi^{(2)}) = \lim_{\nu\to\infty}\frac{n^{(2)}_\nu}{n^{(1)}_\nu}$.
\end{definition}

\begin{proposition}[Bahadur slope, \citealp{bahadur1967rates}]
If the type II error converges: 
\[-\frac{1}{n}\log P_{\theta_1}\left( \varphi_{\nu,n}^{(i)}=0 \right)\to c^{(i)}(\theta_1),\] then $c^{(i)}(\theta_1)$ is the Bahadur slope, and $\mathrm{ARE} = \frac{c^{(1)}(\theta_1)}{c^{(2)}(\theta_1)}$.
\end{proposition}

While consistency is a minimal requirement, ensuring that a test can distinguish the null from the alternative, relative efficiency more directly captures asymptotic power. Given a most powerful test, another test is asymptotically equivalent if its ARE equals one, whereas lesser values indicate a lower effective sample size compared to the optimal test.

\begin{definition}[Monotone Likelihood Ratio]
A family of densities $\{f(x;\theta): \theta \in \Theta\}$ has a \emph{monotone likelihood ratio} (MLR) in a statistic $t(x)$ if, for any $\theta_2 > \theta_1$, $\frac{f(x;\theta_2)}{f(x;\theta_1)} = g(t(x))$ for some nondecreasing function $g$ of $t(x)$.
\end{definition}

The MLR property is central in classical hypothesis testing, as it allows for the existence of a uniformly most powerful (UMP) test. % the optimal test to only depend on a single statistic. 
A classical example is the exponential family, with density $f(x\mid\theta) = h(x)\exp\!\left(\omega(\theta)^\top t(x) - A(\theta)\right)$, where $t(x)$ is a sufficient statistic, $\omega(\theta)$ the natural parameter, and $A(\theta)$ the log-partition function. In a one-parameter exponential family with monotone $\omega(\theta)$, the following is increasing in $t(x)$ and thus satisfies the MLR property:
$$
\frac{f(x;\theta_2)}{f(x;\theta_1)} = \exp\!\big((\omega(\theta_2)-\omega(\theta_1)) t(x) - (A(\theta_2)-A(\theta_1))\big).
$$
%which is increasing in $t(x)$.% Hence, the exponential family satisfies the MLR property. 

\subsection{Differential Privacy}

Differential Privacy (DP) \citep{dwork2006calibrating} provides a probabilistic framework for quantifying privacy risks in data analysis. It ensures that the output distribution
of a privacy mechanism changes only marginally when the data of an individual is altered, thereby limiting the information an adversary can infer about that individual. 

%\begin{definition}[Privacy Mechanism]
    Given a set $\mathcal{X}^n$, which is the collection of all possible databases, a \emph{privacy mechanism} $\mathcal{M}$ is defined as a set of probability measures $\{ \mathcal{M}(\underline{x}) \mid \underline{x} \in \mathcal{X}^n \}$ which take values on a common 
    measurable space $\mathcal{Y}$.
%\end{definition}

% In the original work, the notion of $(\varepsilon,\delta)$-DP is defined for a mechanism $\mathcal{M}$, \textbf{(KEEP)} that if for all neighboring datasets $\underline{x}, \underline{x}' \in \mathcal{X}^n$ differing in one record and for all measurable sets $S$, 
% $$
% \Pr[\mathcal{M}(\underline{x}) \in S] \le e^{\ep} \Pr[\mathcal{M}(\underline{x}') \in S] + \delta,
% $$
% where $\epsilon$ controls the multiplicative privacy loss and $\delta$ allows a small additive probability of failure. Larger $\epsilon$ gives higher accuracy but weaker protection. 

DP is formalized in terms of the similarity between two distributions from the privacy mechanism, when one data point is changed. Different DP frameworks use alternative ways to quantify this ``similarity.'' \citet{dong2022gaussian} uses a hypothesis testing formulation to define Gaussian differential privacy (GDP), which is becoming accepted as the state-of-the-art DP framework \citep{gomez2025gdpReport}:

\begin{definition}[Gaussian Differential Privacy, \citealp{dong2022gaussian}]
A mechanism $\mathcal{M}$\\ satisfies $\ep$-GDP, if for any two neighboring datasets $\underline{x}, \underline{x}' \in \mathcal{X}^n$ differing in one record, any hypothesis test that tries to distinguish whether $\underline{x}$ or $\underline{x}'$ was used by $\mathcal{M}$ has type I error and power $(\alpha, \beta)$ satisfying
$$
\Phi\bigl(\Phi^{-1}(1 - \alpha) - \ep\bigr) \le 1-\beta,
$$
where $\Phi$ is the cumulative distribution function of the standard normal distribution.
\end{definition}

The parameter $\ep$ quantifies that discerning the values for one individual is at least as hard as testing $H_0: N(0,1)$ versus $H_1: N(\ep,1)$. Thus, smaller $\ep$ means outputs are harder to tell apart, thereby yielding stronger protection but typically at the cost of lower utility. 

A common technique to satisfy differential privacy is by an additive noise mechanism. To satisfy GDP, Gaussian noise can be added with scale proportional to the sensitivity of the statistic and inversely proportional to $\ep$:

\begin{proposition}[Gaussian Mechanism, \citealp{dong2022gaussian}]\label{prop:gdp_mechanism}
Let $f: \mathcal{X}^n \to \mathbb{R}^d$. The Gaussian mechanism $\mathcal{M}(D) = f(D) + \mathcal{N}\!\left(0, \tfrac{(\Delta_2 f)^2}{\ep^2} \mathbf{I}_d\right)$ satisfies $\ep$-GDP, where $\Delta_2 f = \max_{D, D'} \| f(D) - f(D') \|_2,$ is the $\ell_2$-sensitivity of $f$.
\end{proposition}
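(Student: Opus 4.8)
The plan is to reduce the $d$-dimensional Gaussian testing problem to a one-dimensional one and then read off the optimal type II error in closed form. Fix neighboring databases $D, D'$ and write $\mu_0 = f(D)$, $\mu_1 = f(D')$, and $\sigma = \Delta_2 f/\ep$, so that $\mathcal{M}(D) \sim N(\mu_0, \sigma^2 \mathbf{I}_d)$ and $\mathcal{M}(D') \sim N(\mu_1, \sigma^2 \mathbf{I}_d)$; by definition of $\ep$-GDP it suffices to show that every test $\phi$ of $H_0:\mathcal{M}(D)$ versus $H_1:\mathcal{M}(D')$ with type I error $\alpha_\phi \le \alpha$ has power $\beta_\phi$ satisfying $1-\beta_\phi \ge \Phi(\Phi^{-1}(1-\alpha)-\ep)$. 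If $\mu_0 = \mu_1$ the two distributions coincide, the power equals the type I error, and the claim follows from monotonicity of $\Phi$ together with $\ep \ge 0$; so assume $\mu_0 \ne \mu_1$ and set $u = (\mu_1 - \mu_0)/\lVert \mu_1 - \mu_0 \rVert_2$.

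First I would write out the log-likelihood ratio between the two Gaussians and observe that it is an affine function of the scalar $\langle u, y\rangle$ alone. By the Neyman--Pearson lemma (allowing randomization on the boundary), among all tests of level at most $\alpha$ the minimum type II error is attained by a monotone function of $Z := \langle u, Y\rangle$, so it suffices to lower-bound the type II error in the one-dimensional experiment generated by $Z$. Under $H_0$, $Z \sim N(\langle u,\mu_0\rangle, \sigma^2)$; under $H_1$, $Z \sim N(\langle u,\mu_1\rangle, \sigma^2)$; the two means differ by $\langle u, \mu_1-\mu_0\rangle = \lVert \mu_1-\mu_0\rVert_2$. Standardizing via $z \mapsto (z - \langle u,\mu_0\rangle)/\sigma$ turns this into the canonical problem of testing $N(0,1)$ against $N(\eta,1)$ with $\eta := \lVert \mu_1-\mu_0\rVert_2/\sigma = \ep\,\lVert f(D)-f(D')\rVert_2/\Delta_2 f \le \ep$, where the inequality is exactly the definition of $\ell_2$-sensitivity.

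Next, for the canonical problem the level-$\alpha$ Neyman--Pearson test rejects when the standardized statistic exceeds $t = \Phi^{-1}(1-\alpha)$, which has power $1 - \Phi(t-\eta)$ and hence type II error $\Phi(\Phi^{-1}(1-\alpha)-\eta)$; this is the smallest type II error over all tests of level at most $\alpha$ (the bound at level exactly $\alpha$ also covers level $\le\alpha$ since $\alpha \mapsto \Phi(\Phi^{-1}(1-\alpha)-\eta)$ is non-increasing in $\alpha$). Because $\Phi$ is increasing and $\eta \le \ep$, this quantity is at least $\Phi(\Phi^{-1}(1-\alpha)-\ep)$. Tracing the reduction back, any test $\phi$ with $\alpha_\phi \le \alpha$ therefore satisfies $1-\beta_\phi \ge \Phi(\Phi^{-1}(1-\alpha)-\ep)$, and since $D, D'$ were arbitrary neighbors, $\mathcal{M}$ is $\ep$-GDP.

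The step to get right is the dimension reduction: one must justify that the projection $Z$ onto the direction $\mu_1 - \mu_0$ is a sufficient statistic and that the Neyman--Pearson optimal tests can be taken monotone in it, so that the $d$-dimensional trade-off between type I and type II error is identical to (not merely bounded by) that of the two collinear one-dimensional Gaussians. The remaining ingredients -- the closed form of the Gaussian type II error function and its monotonicity in the separation $\eta$ -- are routine.
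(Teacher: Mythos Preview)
The paper does not supply its own proof of this proposition: it is stated in the background section as a cited result from \citet{dong2022gaussian}, with no accompanying argument in the main text or the appendix. Your proof is correct and is essentially the standard argument given in that reference---reduce the $d$-dimensional Gaussian testing problem to one dimension via sufficiency of the projection onto $\mu_1-\mu_0$, compute the Neyman--Pearson trade-off $\Phi(\Phi^{-1}(1-\alpha)-\eta)$ for $N(0,1)$ versus $N(\eta,1)$, and use monotonicity in $\eta\le\ep$ to conclude. There is nothing to compare against in the paper itself.
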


Differential privacy has a few key properties, including composition and invariance to post-processing. Composition concerns itself with the cumulative privacy cost after multiple sequential releases: If $\mathcal{M}_1$ and $\mathcal{M}_2$ satisfy $\ep_1$-GDP and $\ep_2$-GDP respectively, then the sequential release $(\mathcal{M}_1(\underline{x}), \mathcal{M}_2(\underline{x},\mathcal{M}_1(\underline{x})))$ satisfies $\sqrt{\ep_1^2 + \ep_2^2}$-GDP. Invariance to post-processing says that the output of a DP mechanism cannot be made less private by any data-independent procedure:  Let $\mathcal{M}: \mathcal{X}^n \to \mathcal{R}$ be a randomized mechanism that satisfies $\ep$-GDP. Then for any (possibly randomized) measurable function $g: \mathcal{R} \to \mathcal{R}'$, the post-processed mechanism $g \circ \mathcal{M}: \mathcal{X}^n \to \mathcal{R}'$ also satisfies $\ep$-GDP.

\algrenewcommand\algorithmicrequire{\textbf{Input:}}
\algrenewcommand\algorithmicensure{\textbf{Output:}}

\section{Problem Formulation and Basic Algorithms} \label{sec: problem formulation and basic algorithms}

In this section, we first formulate our problem and set up necessary notation. Then, we refine the quantile estimation algorithm of \citet{huang2021instance} and propose our private mean estimation method, which will be applied to obtain near-optimal DP tests in Section~\ref{sec: near-optimal private tests}. 

\subsection{Problem Formulation} \label{sec: problem formulation}

Consider a sensitive i.i.d. dataset $\underline{X} =(X_1,\ldots,X_n)$ with $\ex X_i = \mu$ as our parameter of interest. {While this section addresses general quantile and mean estimation, in our testing setting---particularly in Section~\ref{sec: near-optimal private tests}---$X_i$ refers to a test statistic computed from the $i$-th individual's data, rather than the raw data itself. %The following assumptions are imposed on this quantity, as it is the object to which our private quantile and mean estimation procedures are applied.}

We impose two mild assumptions on the sample distribution: for $i \in [n]$,
\begin{itemize}[leftmargin=*]
    \item[] (A.1) $X_i$ has a continuous distribution with a bounded density, $f$. That is, $f(x) \leq M < \infty$ for all $x \in \mathbb{R}$.
    
    \item[] (A.2) $X_i - \mu$ are subexponential.  %$X_i \overset{iid}{\sim} F$ from a subexponential distribution, scale $s$ and bounded density $f(x) \leq M$ for all $i \in [n]$. 
    Namely, there exists scale parameter $s>0$ such that $P(|X_i -\mu|\geq c) \leq 2 \exp(-c/s)$ for all $c>0$.
\end{itemize}

%The mean parameter $\mu$ need not be assumed bounded. However, if an initial range for $\mu$ is available, a more appropriate search range may be used. See Remark~\ref{remark: initial range of mu} for practical guidance.

Our intermediate goal is to develop a private mean estimator $\mu_{DP}$, satisfying $\epsilon$-GDP, such that
\begin{align} \label{eq: goal}
    |\mu_{DP}(\underline{X};\ep) - \Bar{X}| = \widetilde{O}_p \left( \frac{s}{\ep n} \right),
\end{align}
which matches the DP term in the minimax lower bound that we derive for mean estimation under subexponential-tailed distributions in Proposition~\ref{prop: GDP minimax mean estimation lower bound}, adapting the proof techniques of \citet{barber2014privacy}. We then apply the private estimator to the data, where the $X_i$'s serve as the log-likelihood ratio and MLR statistic in Section~\ref{sec: near-optimal private tests}.

\subsection{Private Quantile Estimation} \label{sec: private quantile estimation}

Our motivation for applying a private quantile estimation step prior to private mean estimation stems from the relatively low minimax cost of differentially private selection. Our algorithm builds on that of \citet{huang2021instance} with several refinements, and note that a similar approach was independently proposed by \citet{drechsler2022nonparametric}. It works by a series of binary DP tests \citep{blum2013learning}, which iteratively narrow down a region likely to contain the target sample quantile.  %Denote the targeted quantile is $q$ and the number of data points as $n$. %Algorithm \texttt{GDP-Quant} aims to return an $x$ such that $x \in [x_{(nq-\tau)}, x_{(nq+\tau)}]$ with high probability, where $\tau$ is defined as the rank error.
%The version of \texttt{GDP-Quant} in \cite{huang2021instance} searches only over integer points. While this may not pose a problem if the algorithm subsequently refines its search, this was not the case in practice, which can lead to additional rank errors.

\renewcommand\algorithmicrequire{\textbf{Input:}}
\renewcommand\algorithmicensure{\textbf{Output:}}

\begin{algorithm}[htbp]
\caption{Differentially Private Quantile Selection; \texttt{GDP-Quant}($\mathcal{D},a,b,T,q,\epsilon$)} \label{alg: DP_quantile_selection}
\begin{algorithmic}[1]
\Require data set $\mathcal{D}: x_{(1)} \leq \cdots \leq x_{(n)}$, search range $[a,b]$, number of steps $T$, targeted quantile level $q \in (0,1)$ and privacy budget $\ep>0$
    \State $x_i \leftarrow \max\left\{\min \{x_i, b\}, a \right\}$
    \State left $\leftarrow a$, right $\leftarrow b$
    \State $Z_1,\ldots,Z_T \overset{iid}{\sim}  N\left(0, \frac{T}{\ep^2}\right)$
    \State mid $\leftarrow$ (left + right)/2
    \For{$t=1,\ldots,T$}
        \State noisyCount $\leftarrow \#\{j: a \leq x_j \leq \text{mid}, \forall x_j \in \mathcal{D} \} + Z_t$
        \If{noisyCount $< nq$} 
            \State left $\leftarrow$ mid 
        \Else 
            \State right $\leftarrow$ mid
        \EndIf
        \State mid $\leftarrow$ (left + right)/2
    \EndFor 
\Ensure mid
\end{algorithmic}
\end{algorithm}

To achieve the intermediate goal in \eqref{eq: goal}, we first clarify the rationale for performing private selection prior to private mean estimation}, and identify a key aspect overlooked by \citet{huang2021instance}—the role of bin width. Private selection mechanisms are designed to identify relevant statistics through noisy queries while incurring only a logarithmic cost in the sample size, thereby effectively reducing the sensitivity scale of the data range for subsequent private mean estimation. Prominent examples include \emph{Report Noisy Max}, the \emph{Exponential Mechanism}, and \emph{Above Threshold} \citep{dwork2009complexity, dwork2014algorithmic}.

Regarding the bin width, their search is conducted over integers via the floor function, which fixes the bin width at $1$. When multiple points fall within the same bin, their algorithm can no longer distinguish among them, leading to increased rank errors. This loss of resolution is the main reason the rank-error guarantee claimed in \citet{huang2021instance} is incorrect.

Algorithm~\ref{alg: DP_quantile_selection} serves as a key building block for our subsequent approach. Compared to \citet{huang2021instance}, it allows for non-integer search points, with more significant differences lying in our analysis and parameter selection. For instance, the number of iterations $T$ is treated as a free parameter. This flexibility in the choice of $T$ directly determines the discretization induced by our algorithm. After $T$ iterations, the search range $[a,b]$ is partitioned into subintervals $B_k = [a + (k-1)w,; a + kw]$ for $k \in [2^T]$, where $w = (b-a)/2^T$ is the bin width (discretization error). This makes the choice of $T$ critical to the error trade-off: increasing $T$ reduces discretization error but increases rank error due to smaller per-iteration privacy budgets.

Now, we analyze the accuracy of Algorithm~\ref{alg: DP_quantile_selection} over the DP randomness, holding the data fixed:

%Unlike \cite{huang2021instance}, which restricts candidate points to integer values (via flooring at each step), our algorithm allows intermediate candidates to take arbitrary real values through averaging in the binary search procedure.

\begin{restatable}[GDP-Quant]{lem}{dpQuantile} \label{lem: GDP-Quant}
 Let $N_k = \sum_{i=1}^n \mathbf{1}\{ x_i \in B_k \}$. Assume $N_k\leq1$ for all $k=[2^T]$, then Algorithm \texttt{GDP-Quant} satisfies $\epsilon$-GDP and returns a private quantile with rank error less than $\tau + 1$ with probability at least $1-\beta$, if $\tau = \frac1\ep \sqrt{2T \log \frac{T}{\beta}}$.
\end{restatable}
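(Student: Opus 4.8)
The plan is to analyze the noisy binary search in Algorithm~\ref{alg: DP_quantile_selection} by conditioning on the event that all of the Gaussian perturbations $Z_1,\dots,Z_T$ are simultaneously small, and to argue that on this event the algorithm behaves like an exact (noiseless) binary search up to a controlled additive slack $\tau$ in each comparison. First I would establish the privacy claim: each iteration releases one count query with $\ell_2$-sensitivity $1$ (changing one record changes the count $\#\{j : a\le x_j\le\mathrm{mid}\}$ by at most $1$, since the data are first clamped to $[a,b]$), and Gaussian noise $N(0,T/\ep^2)$ is added, so by Proposition~\ref{prop:gdp_mechanism} each release is $(\ep/\sqrt{T})$-GDP; composing $T$ such releases via the GDP composition rule gives $\sqrt{T\cdot(\ep/\sqrt{T})^2} = \ep$-GDP. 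Post-processing (the sequence of left/right updates and the final output) does not degrade this.

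Next, the utility argument. Set $\tau = \tfrac1\ep\sqrt{2T\log(T/\beta)}$. By the standard Gaussian tail bound $P(|N(0,\sigma^2)|\ge t)\le 2\exp(-t^2/(2\sigma^2))$ with $\sigma^2 = T/\ep^2$, a union bound over $t=1,\dots,T$ gives $P(\max_t |Z_t| \ge \tau) \le T\cdot 2\exp(-\ep^2\tau^2/(2T))$; I would choose $\tau$ so this is at most $\beta$ — note one should check whether the factor $2$ forces $\tau = \tfrac1\ep\sqrt{2T\log(2T/\beta)}$ or whether a slightly sharper one-sided bound recovers exactly the stated constant, but morally $\tau = \tfrac1\ep\sqrt{2T\log(T/\beta)}$ is the right scale. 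Condition on the event $E = \{\max_t|Z_t|<\tau\}$, which has probability at least $1-\beta$.

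On $E$, I track the invariant that the returned interval's left endpoint has true count (number of data points $\le$ left) at most $\lceil nq\rceil$-ish from below and the right endpoint has true count at least $nq$-ish from above, each up to an error of $\tau$. Concretely: whenever the test reads noisyCount $<nq$ and we move left $\leftarrow$ mid, the true count at mid satisfies $C(\mathrm{mid}) < nq + \tau$; whenever we move right $\leftarrow$ mid, we have $C(\mathrm{mid}) \ge nq - \tau$. After $T$ steps the search interval has width $w=(b-a)/2^T$ and, by assumption (B), contains at most one data point; the output mid lies in this final interval, so the true rank of the output differs from $nq$ by at most $\tau$ plus the at-most-$1$ data point that can sit in the terminal bin — giving rank error less than $\tau+1$. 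I would phrase the rank error as $|C(\mathrm{out}) - nq|$ and carefully convert the per-step comparison slack into a bound on this final quantity, which is where assumption (B) ($N_k\le 1$) is essential: without it, a whole cluster of points in the terminal bin would blow up the rank error, which is precisely the gap in \citet{huang2021instance}.

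The main obstacle I anticipate is the bookkeeping in the last step: correctly propagating the $\pm\tau$ comparison error through the binary search to a clean statement about the \emph{rank} (not the location) of the output, and pinning down the exact additive constant ($\tau+1$ vs.\ $\tau$ plus a boundary term). The monotonicity of $k\mapsto C(a+kw)$ makes the interval-shrinking argument go through, but one must be attentive to the half-open vs.\ closed bin conventions $B_k=[a+(k-1)w, a+kw]$ and to the fact that ``mid'' is re-centered after each update; getting the ``$+1$'' exactly right (rather than ``$+2$'') is the delicate point.
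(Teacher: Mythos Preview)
Your approach is correct and structurally matches the paper's: privacy by composing $T$ Gaussian releases of sensitivity-1 counts, then a high-probability bound on the noise, then a deterministic rank-error argument under assumption~(B). Two small differences are worth noting. First, your flagged concern about the factor $2$ is exactly where the paper sharpens things: it introduces a sign $D_t\in\{-1,+1\}$ encoding the \emph{true} direction of the comparison at step $t$ and observes that noise in the correct direction can never cause a wrong branch, so it suffices to control the one-sided event $\{\max_t D_t Z_t>\tau\}$, which gives $T\exp(-\ep^2\tau^2/(2T))\le\beta$ without the $2$ and recovers $\tau=\tfrac{1}{\ep}\sqrt{2T\log(T/\beta)}$ on the nose. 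Second, for the deterministic part the paper argues by contradiction (assume the output has rank error $>\tau+1$, trace back to the first step at which the offending endpoint was set, and derive a violation of the one-sided noise bound), whereas you maintain a forward invariant on $C(\text{left})$ and $C(\text{right})$; both are fine, and your invariant argument becomes clean once you use~(B) to get $C(\text{right})-C(\text{left})\le 1$ on the terminal bin, which is precisely where the ``$+1$'' comes from.
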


The statement of Lemma~\ref{lem: GDP-Quant} raises the central role played by the bin width. Although implicit, the assumption on $N_k$—that each bin $B_k$ contains at most one data point—effectively requires the number of steps $T$ to be sufficiently large. Increasing $T$ refines the binning but also enlarges the rank-error bound $\tau$, thereby revealing a trade-off between discretization error and rank error.

\begin{remark}
While \citet{huang2021instance} claimed that their DP quantile  algorithm (similar to Algorithm \ref{alg: DP_quantile_selection}) attains rank error~$\tau$ with probability $\sqrt{\log(b-a)\,\log(\log(b-a)/\beta)/(2\rho)}$, where~$\rho$ is the privacy parameter under concentrated differential privacy, we show in Example~\ref{eg: Huang's issues} in Section \ref{appendix: proof} that this is incorrect. Intuitively, the issue arises because their algorithm can be ``off by one bin'' and has no control over the number of data points falling within each bin. Lemma~\ref{lem: GDP-Quant} provides a corrected analysis. %Contrary to the suggestion in \citet{huang2021instance} that the result is straightforward, establishing the correct bound requires a careful, study of the algorithm.
\end{remark}

\subsection{Private Mean Estimation} \label{sec: private mean estimation}

This section moves from high-level intuition to our main theoretical result. We explain, at a conceptual level, why our private mean estimation approach can attain minimax-rate optimality up to log factors and perform well with relatively small sample sizes (on the scale of hundreds to thousands). We then contrast our design principles with those of \citet{huang2021instance}, which is based on a Shifted-Clipped-Mean estimator (denoted as \texttt{Shifted-CM}), and \citet{biswas2020coinpress} (\texttt{Coinpress}), and show how these differences lead to suboptimality in their methods.

A heursitc approach to private mean estimation is to first identify a data “center” and then clamp observations to a symmetric ball. Both competing methods follow this strategy: \texttt{Shifted-CM} privately estimates the median, and then estimates a tail quantile of the radius of the centered data, while \texttt{Coinpress} iteratively constructs a confidence ball based on a privatized mean and variance. Although intuitive, symmetric projection can be suboptimal, particularly under skewed or heavy-tailed distributions.

In contrast, \texttt{GDP-MeanEst} estimates a clamping range by applying \texttt{GDP-Quant} to each tail of the data, using only a small fraction of the total privacy budget.  % to build a data-dependent clamping range, 
%ensuring that most observations remain unclamped while using only a small fraction of the privacy budget to estimate this range. The key difference allows 
This allows our clamping region to adapt to the data distribution, while 
%is conceptual: rather than targeting the mean itself, our approach prioritizes preserving the bulk of the data at the clamping stage, even under asymmetry or heavy tails, so that subsequent averaging incurs minimal bias and noise. This design 
minimizing the additive noise needed to protect privacy. 
%leading to sharper constants than existing methods, as illustrated by the
Our empirical results in Section~\ref{sim: private mean estimation} show that \texttt{GDP-MeanEst} significantly outperforms competing methods.

\begin{algorithm}[htbp] 
\caption{Differentially Private Instance Mean Estimation; \texttt{GDP-MeanEst}$(\mathcal{D},a,b,\ep_q,\ep_m, \eta)$} \label{alg: DP_instance_mean_estimation}
\begin{algorithmic}[1]
\Require data set $\mathcal{D}: x_{(1)} \leq \cdots \leq x_{(n)}$, search range $[a,b]$, privacy budgets $\ep_q, \ep_m>0$ and $\eta>2$
    \State $x_i \leftarrow \max\left\{\min \{x_i, b\}, a \right\}$
    \State $T \leftarrow \lceil \log_2 [(b-a) n^\eta] \rceil$  
    \State $q_l = (\tau + 2)/n$ and $q_u = 1 - (\tau + 1)/n$, where $\tau = \sqrt{2T\log\frac{T}{n^{2-\eta}}} / \epsilon_q$
    \State {$x_{(nq_l)}^{DP} \leftarrow \texttt{GDP-Quant}(\mathcal{D}, a, b, T, q_l, \ep_q)$}
    \State {$ x_{(nq_u)}^{DP} \leftarrow \max\{\texttt{GDP-Quant}(\mathcal{D},a,b,T,q_u,\ep_q), x_{(nq_l)}^{DP}\} $}
    \State $\widetilde{x}_i = \max\left\{ \min\{x_i, x_{(nq_l)}^{DP}\}, x_{(nq_u)}^{DP}\right\}$
    \State $z_m \sim N \left(0, (x_{(nq_u)}^{DP} - x_{(nq_l)}^{DP} )^2/(n^2 \ep_m^2) \right)$
\Ensure $\frac{1}{n} \sum_{i=1}^n \widetilde{x}_i + z_m$
\end{algorithmic}
\end{algorithm}

\begin{proposition}[GDP Guarantee]\label{prop: both algorithms are GDP}
Algorithm \texttt{GDP-MeanEst}$(\mathcal{D},a,b,\ep_q,\ep_m, \eta)$ satisfies $\ep$-GDP, where $\ep^2=2\ep_q^2+\ep_m^2$.
\end{proposition}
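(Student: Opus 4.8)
The plan is to decompose \texttt{GDP-MeanEst} into three conceptual stages and then reassemble their privacy guarantees using the sequential composition and post-processing properties of GDP recalled in Section~2. The stages are: (i) the deterministic pre-clamping $x_i\leftarrow\max\{\min\{x_i,b\},a\}$; (ii) the two calls to \texttt{GDP-Quant} that produce the clamping endpoints $c_l:=x_{(nq_l)}^{DP}$ and $c_u:=x_{(nq_u)}^{DP}$; and (iii) the noisy clamped mean. Stage (i) is a fixed, data-independent coordinatewise map that sends neighboring databases to neighboring (or identical) databases, so it contributes no privacy loss and can be folded into the input domain; note also that $T$, $q_l$, $q_u$, and $\tau$ depend only on the public quantities $a,b,n,\eta,\ep_q$, never on the data.

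For stage (ii) I would invoke Lemma~\ref{lem: GDP-Quant}: each \texttt{GDP-Quant} call is run with budget $\ep_q$ and is therefore $\ep_q$-GDP (the privacy half of that lemma does not use assumption (B), which is only needed for the accuracy claim). The two calls are executed sequentially on the same data with independent Gaussian noise, so by composition their joint output is $\sqrt{\ep_q^2+\ep_q^2}=\sqrt2\,\ep_q$-GDP; the maximum taken in line~5 is a data-independent post-processing of this pair, so the released endpoints remain $\sqrt2\,\ep_q$-GDP, and by construction $c_u\ge c_l$.

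For stage (iii) the key step is to condition on the realized endpoints $(c_l,c_u)$ with $c_l\le c_u$. Given these, the algorithm clamps each $x_i$ into the interval with endpoints $c_l,c_u$ and releases the resulting sample mean plus $N\!\big(0,(c_u-c_l)^2/(n^2\ep_m^2)\big)$. I would check that changing one record alters exactly one clamped value, by at most $c_u-c_l$, so that the $\ell_2$-sensitivity of $\underline x\mapsto n^{-1}\sum_i\widetilde x_i$ is at most $(c_u-c_l)/n$; the Gaussian mechanism (Proposition~\ref{prop:gdp_mechanism}) then gives $\ep_m$-GDP for this release, and crucially the parameter $\ep_m$ is the same for every fixed value of $(c_l,c_u)$. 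Applying adaptive sequential composition to stages (ii) and (iii) — valid precisely because $\ep_m$ does not depend on the realized endpoints — yields $\sqrt{(\sqrt2\,\ep_q)^2+\ep_m^2}$-GDP for the joint release $\big((c_l,c_u),\ \tfrac1n\sum_i\widetilde x_i+z_m\big)$; since the algorithm outputs only the second coordinate, post-processing invariance gives $\ep$-GDP with $\ep^2=2\ep_q^2+\ep_m^2$.

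The main obstacle is precisely the data-dependent noise scale: the Gaussian mechanism cannot be applied off the shelf because the variance in line~7 is itself a function of the privatized quantiles. Resolving this requires the conditioning argument above, together with the adaptive (rather than merely non-adaptive) form of GDP composition, and a small verification that $c_u\ge c_l$ always holds — guaranteed by the maximum in line~5 — so that the interval width, and hence both the claimed sensitivity bound and the noise variance, are well defined and nonnegative.
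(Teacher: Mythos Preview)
Your argument is correct and is precisely the route the paper has in mind: the paper does not spell out a proof of this proposition, treating it as an immediate consequence of the GDP composition rule and the Gaussian mechanism stated in Section~2. Your three-stage decomposition, the conditioning/adaptive-composition step to handle the data-dependent noise scale, and the observation that line~5 guarantees $c_u\ge c_l$ are exactly the details one needs to fill in.
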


More precisely, \texttt{GDP-MeanEst}, described in
Algorithm~\ref{alg: DP_instance_mean_estimation} applies %provides an effective approach to private mean estimation based on
\texttt{GDP-Quant} as follows: It specifies the number of search steps as $T = \left\lceil \log_2 \bigl[(b-a) n^\eta\bigr] \right\rceil$ and targets the quantile levels $q_l = (\tau + 2)/n$ and $q_u = 1 - (\tau + 1)/n$. The key intuition of the choices is that we intentionally target quantiles slightly inward from the extremes in a principled way, leveraging the subexponential behavior of the data (even without prior knowledge of the dataset). This ensures, with high probability, that the search remains within the true data range, which in turn keeps the sensitivity well-controlled and allows for accurate estimation.

Algorithm~\ref{alg: DP_instance_mean_estimation} treats the initial search range $[a,b]$ and the privacy budget allocation as free parameters; specific choices are imposed in Definition~\ref{def: setting of GDP-MeanEst} to achieve optimality matching the private minimax lower bound up to logarithmic factors.

\begin{definition} \label{def: setting of GDP-MeanEst}
    Let $\underline{X}$ be a $n$-dimensional data vector drawn from a distribution, and $\ep$ as the GDP parameter. Define
    \begin{align*}
        \mu_{DP} (\underline{X};\ep) = \texttt{GDP-MeanEst}(\underline{X},a,b,\ep_q,\ep_m,\eta)
    \end{align*}    
    as our private mean estimator, where $a$, $b$, $\ep_q$, and $\ep_m$ are speficied as follows:  $[a, b] = [-v(\log n)^{p}, b(n) = v(\log n)^p]$ is the initial search range and $T = \lceil \log_2 [(b-a) n^\eta] \rceil$ is the number of iterations, with $p>1$ and $\eta, v>0$, $\ep_q = \ep /(\log n)^k$ and $\ep_m = \ep \sqrt{1-\frac{2}{(\log n)^{2k}}}$ with constant $k \in (0,1]$ such that $\ep^2 = 2\ep_q^2 + \ep_m^2$. %Given these values, define
    %\begin{align*}
    %    \mu_{DP} (\underline{X};\ep) = \texttt{GDP-MeanEst}(\underline{X},a,b,\ep_q,\ep_m,\eta)
    %\end{align*}    
    %as our private mean estimator.
\end{definition}

%\cite{huang2021instance} proves the two algorithms under concentrated differential privacy \citep{bun2016concentrated}. The Guassian DP guarantee follows directly from composition.

% If, for example, $m$ data points cluster too closely together, it turns out that the resulting rank error can increase to $\tau + m$. \cite{huang2021instance} implicitly assume running the algorithm indefinitely and condition (B); without this, the rank error can be arbitrarily large (see Example~\ref{eg: Huang's issues} in the appendix). This motivates using finer bins so that, with high probability, each data point drawn from a subexponential distribution falls into its own bin.

Using Lemma~\ref{lem: GDP-Quant}, the next theorem quantifies the gap between our private mean estimator and its non-private counterpart. The bound accounts for both the privacy-induced randomness and the variability due to sampling. 

\begin{restatable}[\texttt{GDP-MeanEst} Utility]{thm}{dpInstMeanEstUtility} \label{thm: GDP-MeanEst Utility}
    Assume $\underline{X}$ satisfies (A.1) and (A.2). Given by Definition~\eqref{def: setting of GDP-MeanEst}, the private mean estimate $\mu_{DP} (\underline{X},\ep)$ has absolute error
    \begin{align*}
    |\mu_{DP}(\underline{X};\ep) - \bar{X}| &= O_p\left(\frac{s(\log n)^{1+k} \sqrt{T\log T}}{\ep n}\right) \\
    &= \widetilde{O}_p \left( \frac{s}{\ep n} \right),
    \end{align*}
    %where $T = O_p (\log_2 u_s + \log_2 \eta + \log_2 \log(n) + \eta \log_2 n)$
    where $T(n,v,p,\eta) = \Theta(\log v + p \log_2 (\log n) + \eta \log_2 n)$. Marginally, $\mu_{DP} (\underline{X})$ and $\bar{X}$ share the same asymptotic distribution under the central limit theorem.
\end{restatable}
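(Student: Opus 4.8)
The plan is to decompose the error $\mu_{DP}(\underline X;\ep)-\bar X$ into three contributions and bound each: (i) the additive Gaussian noise $z_m$; (ii) the bias introduced by clamping the data to the interval $[x^{DP}_{(nq_u)}, x^{DP}_{(nq_l)}]$ returned by the two calls to \texttt{GDP-Quant}; and (iii) the discretization/rank error from \texttt{GDP-Quant} itself, controlled by Lemma~\ref{lem: GDP-Quant}. The organizing idea is that, with the parameter choices of Definition~\ref{def: setting of GDP-MeanEst}, the true mean $\mu$ lies well inside $[a,b]=[-v(\log n)^p, v(\log n)^p]$ with high probability (by the subexponential tail (A.2) a union bound over $n$ samples puts all $|X_i-\mu|\lesssim s\log n$, which is $o((\log n)^p)$ since $p>1$), so no observation is truncated at the crude outer bounds $a,b$. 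The remaining clamping happens at the private quantiles $x^{DP}_{(nq_l)}, x^{DP}_{(nq_u)}$ with $q_l=(\tau+2)/n$ and $q_u=1-(\tau+1)/n$.

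First I would establish the ``conditioning event'' on which everything works: that \texttt{GDP-Quant} succeeds (rank error $<\tau+1$) for both tail calls, and that assumption (B) of Lemma~\ref{lem: GDP-Quant} holds, i.e. each bin of width $w=(b-a)/2^T$ contains at most one data point. Here the choice $T=\lceil\log_2[(b-a)n^\eta]\rceil$ gives $w\le n^{-\eta}$, and since $X_i$ has a bounded density $M$ (assumption (A.1)), the probability that any two of the $n$ points fall in a common bin is $O(n^2\cdot M\cdot n^{-\eta})=O(n^{2-\eta})$, which is $o(1)$ for $\eta>2$; this is exactly why the failure probability $n^{2-\eta}$ appears in the definition of $\tau$. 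On this event, the clamping thresholds are the true $(q_l - O(\tau/n))$ and $(q_u + O(\tau/n))$ sample quantiles up to $O(1/n)$ in rank, so the number of genuinely clamped observations is $O(\tau)$ from each tail, i.e. $O(\tau)$ total. Each such clamped point is moved by at most the spread $x^{DP}_{(nq_u)}-x^{DP}_{(nq_l)}$, which (again by the subexponential tail, since the clamped sample quantiles sit at rank $\Theta(\tau)$ and $n-\Theta(\tau)$) is $O(s\log(n/\tau))=\widetilde O(s)$. Hence the clamping bias is $O(\tau\cdot s\log n / n)$.

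Next I would bound the noise term: $z_m\sim N(0,(x^{DP}_{(nq_u)}-x^{DP}_{(nq_l)})^2/(n^2\ep_m^2))$, so $|z_m|=O_p(\text{spread}/(n\ep_m))=\widetilde O_p(s/(n\ep_m))$, and since $\ep_m=\ep\sqrt{1-2(\log n)^{-2k}}\ge \ep/\sqrt2$ eventually, this is $\widetilde O_p(s/(\ep n))$. Collecting terms, the dominant piece is the clamping bias $O(\tau s\log n/n)$ with $\tau=\sqrt{2T\log(T/n^{2-\eta})}/\ep_q=\Theta(\sqrt{T\log T}/\ep_q)$ and $\ep_q=\ep/(\log n)^k$, giving the stated $O_p\!\big(s(\log n)^{1+k}\sqrt{T\log T}/(\ep n)\big)$; substituting $T=\Theta(\log v+p\log_2\log n+\eta\log_2 n)=\Theta(\log n)$ collapses this to $\widetilde O_p(s/(\ep n))$. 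Finally, for the CLT statement I would invoke \eqref{eq: goal}: since $\bar X-\mu = O_p(s/\sqrt n)$ by the CLT and $\mu_{DP}-\bar X = \widetilde O_p(s/(\ep n)) = o_p(1/\sqrt n)$, Slutsky's theorem gives $\sqrt n(\mu_{DP}-\mu)\Rightarrow N(0,\var(X_1))$, the same limit as $\sqrt n(\bar X-\mu)$.

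The main obstacle I expect is the careful bookkeeping in step two: translating the \emph{rank}-error guarantee of Lemma~\ref{lem: GDP-Quant} into a \emph{value}-error / bias statement requires controlling how far apart order statistics at ranks $\Theta(\tau)$ and $n-\Theta(\tau)$ can be, which forces a simultaneous use of both (A.1) (to verify (B), i.e. no bin collisions) and (A.2) (to bound the spread and to show the outer bounds $a,b$ are never active). Making the event on which all of this holds have probability $1-o(1)$ — uniformly enough to conclude an $O_p$ bound rather than a merely in-probability-at-fixed-$n$ bound — is the delicate part; everything else is a union bound, the Gaussian tail, and Slutsky.
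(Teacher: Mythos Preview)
Your proposal is correct and follows essentially the same route as the paper: fix a failure probability, build a high-probability conditioning event out of (a) a subexponential tail bound forcing all $|X_i-\mu|\le r^*/2=O(s\log n)$, (b) the bin-collision argument from (A.1) with $w\le n^{-\eta}$ to verify assumption~(B) and hence Lemma~\ref{lem: GDP-Quant} for both \texttt{GDP-Quant} calls, and (c) a Gaussian tail bound on $z_m$; then decompose $|\mu_{DP}-\bar X|\le |z_m|+|\bar X_c-\bar X|$, bound the clamping bias by (number of clamped points $=O(\tau)$) $\times$ (maximum displacement) $/n$, substitute $\tau=\Theta\big((\log n)^k\sqrt{T\log T}/\ep\big)$, and finish with Slutsky. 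One small slip: the displacement of a clamped tail point is bounded by the full data range $r^*=O(s\log n)$, not by the clamping-interval width $x^{DP}_{(nq_u)}-x^{DP}_{(nq_l)}$ (a point far in the left tail can be moved by more than that width), but since both quantities are $\Theta(s\log n)$ on the conditioning event this does not affect the rate.
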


\begin{proof}\textbf{Sketch.}
    Lemma~\ref{lem: subexpoenntial assumed} shows that Assumptions (A.1)–(A.2) imply condition (B) with high probability. Combined with Lemma~\ref{lem: GDP-Quant}, this yields a high-probability bound for \texttt{GDP-Quant} under subexponential data. The $O_p$ result then follows by decomposing the error into Gaussian noise and rank error terms as $n$ increases.
\end{proof}

\begin{remark} \label{remark: initial range of mu}
    If an initial range for $\mu$ is available, say $l_\mu \leq \mu \leq u_\mu$, the search range can be set as $[a, b] = [l_\mu-v(\log n)^{p}, b(n) = u_\mu+v(\log n)^p]$. This improves finite-sample performance, while Theorem~\ref{thm: GDP-MeanEst Utility} still holds.
\end{remark}

Theorem~\ref{thm: GDP-MeanEst Utility} is the main technical result of the paper. It characterizes the role of each parameter and matches the GDP minimax lower bound up to logarithmic factors (see Proposition~\ref{prop: GDP minimax mean estimation lower bound}). By contrast, the approach of \citet{huang2021instance} does not attain this rate even in the Gaussian mean estimation setting, as it fixes the bin width at $O(1/\sqrt{n})$. 

It is important to note that the absolute error bound in Theorem~\ref{thm: GDP-MeanEst Utility} is stated at the population level, accounting for both privacy-induced randomness from Algorithm~\ref{alg: DP_instance_mean_estimation} and sampling variability from subexponential data. Under subexponential distributions, the trimmed sensitivity grows only logarithmically with the initial search range, a consequence of our design that allows the range to adapt to the underlying distribution, while clamping only a small fraction of observations (with $v$ a user-chosen constant). Finally, our privacy budget allocation emphasizes accurate mean estimation while ensuring that the private quantile steps contribute only a logarithmic dependence on the sample size.

\section{Near-Optimal Private Tests} \label{sec: near-optimal private tests}

In this section, we present our results on private testing, which apply \texttt{GDP-MeanEst} developed in Section~\ref{sec: problem formulation and basic algorithms}. Our goal is to show that the resulting DP tests achieve the same \emph{asymptotic relative efficiency} as their non-private, most powerful counterparts, and therefore have asymptotically optimal power. In this paper, by near-optimality we mean that our method matches the non-private benchmark at the level of the leading constant, while the additional error due to privacy is controlled up to logarithmic factors relative to the DP minimax rate established earlier in Section~\ref{sec: problem formulation}. This contrasts with existing approaches: the private likelihood-ratio–based tests of \citet{canonne2019structure} do not attain $\mathrm{ARE}=1$ due to fixed clamping, while applying the normal mean estimation method of \citet{huang2021instance} to log-likelihood statistics leads to additional inefficiency arising from discretization.
%%% the nonprivate is matching exactly (leading constant), for the private term is off by logrithmic

The log-likelihood ratio statistic is central to hypothesis testing. Let $P$ and $Q$ be distinct probability measures on $(\mathcal{X},\mathcal{A})$, mutually absolutely continuous. Let $x_i$ be i.i.d. samples from a distribution and $\ell(x_i; P,Q) = \log\frac{P(x_i)}{Q(x_i)}$ be the log likelihood ratio. For convenience, we denote $\underline{\ell}(\underline{x};P,Q) = (\ell(x_1; P,Q),\ldots,\ell(x_n; P,Q))$ as the $n$-dimensional log-likelihood ratio vector.

To derive the non-private optimal test for distinguishing distribution $P$ and $Q$, the  {Neyman--Pearson Lemma} considers the \emph{log-likelihood ratio} statistic:

\[\mathrm{LLR} (\underline{x}; P,Q) = \sum_{i=1}^n \ell(x_i; P,Q).\]

\subsection{Simple Hypothesis} \label{sec: gdpSimpleHT}

For simple hypotheses, likelihood-based tests provide exact control of error rates and large-deviation behavior, forming the basis of asymptotic efficiency theory. In this section, we first show that any non-trivial fixed clamping of LLR tests such as \texttt{ncLLR} in \citet{canonne2019structure}, necessarily incurs a strict loss in asymptotic relative efficiency. We then prove that $\mu_{DP}(\underline{\ell};\ep)$ attains the same asymptotic efficiency as the non-private {Neyman–Pearson test}.

\begin{restatable}{prop}{AREofncLLR}\label{prop: AREofncLLR}
For $H_0:P$ versus $H_1:Q$, let $Y=g(X):=[\ell(X)]_a^b$ be the clamped log-likelihood ratio for some $a<0<b$, and let $P_Y$ and $Q_Y$ denote its laws under $P$ and $Q$.  Assume clamping is nontrivial: $P(\ell(X)\notin[a,b])>0$ and $Q(\ell(X)\notin[a,b])>0$, and that $\ell$ is non-constant on each of the sets $\{x:\ell(x)\le a\}$ and $\{x:\ell(x)\ge b\}$. 
Then any test based on $(Y_1,\dots,Y_n)$, with additive noise independent of $n$, has Bahadur slope at most $2\mathrm{KL}(Q_Y\|P_Y)$, where $\mathrm{KL}(Q\|P)=\int \log\!\left(\frac{dQ}{dP}\right)\,dQ$ is the Kullback--Leibler divergence. Moreover,
$$
\mathrm{KL}(Q_Y\|P_Y) < \mathrm{KL}(Q\|P),
$$
so every $Y$-based test has Bahadur ARE $<1$ relative to the classical LLR test.
\end{restatable}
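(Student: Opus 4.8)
The plan is to establish the Bahadur slope bound first, then the strict KL inequality, and finally combine them. For the slope bound: the classical LLR test based on the true statistic $\ell$ has Bahadur slope exactly $2\,\mathrm{KL}(Q\|P)$ by Stein's lemma / Cram\'er-type large-deviation arguments (this is the standard Bahadur-slope computation for the likelihood ratio test, see \citealp{bahadur1967rates}). After clamping, the data $(Y_1,\dots,Y_n)$ are i.i.d.\ from $P_Y$ under $H_0$ and $Q_Y$ under $H_1$; adding noise that does not scale with $n$ is a fixed post-processing channel. The key observation is that the optimal Bahadur slope achievable from any statistic computed on $(Y_1,\dots,Y_n)$ plus $n$-independent noise is at most $2\,\mathrm{KL}(Q_Y\|P_Y)$: the noised channel only degrades the ability to distinguish the product measures $P_Y^{\otimes n}$ from $Q_Y^{\otimes n}$, and by the data-processing inequality the per-sample KL cannot increase, so the exponential rate of type~II error under any fixed type~I constraint is bounded by $\mathrm{KL}$ of the clamped (and hence of the further post-processed) law. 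I would phrase this via the known characterization: for i.i.d.\ data from $R_0$ vs.\ $R_1$, the maximal Bahadur slope of any test sequence is $2\,\mathrm{KL}(R_1\|R_0)$, apply it with $R_0 = (\text{noise})\ast P_Y$, $R_1 = (\text{noise})\ast Q_Y$, and then use data processing $\mathrm{KL}\big((\text{noise})\ast Q_Y \,\|\, (\text{noise})\ast P_Y\big) \le \mathrm{KL}(Q_Y\|P_Y)$.

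Next, the strict inequality $\mathrm{KL}(Q_Y\|P_Y) < \mathrm{KL}(Q\|P)$. Write the clamping map $g$ as the identity on $\{a < \ell(X) < b\}$ and constant ($=a$ or $=b$) on each tail. In general $\mathrm{KL}(Q_Y\|P_Y) \le \mathrm{KL}(Q\|P)$ by the data-processing inequality, since $Y$ is a (deterministic) function of $X$. Equality in data processing holds iff $g$ is a sufficient statistic, equivalently iff $dQ/dP$ is $\sigma(Y)$-measurable, i.e.\ constant on the fibers of $g$. On the lower tail $\{\ell(X)\le a\}$ the fiber is a single atom of $Y$ (value $a$), and by hypothesis $\ell$ — hence $dQ/dP = e^{\ell}$ — is non-constant there; likewise on the upper tail. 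Since the clamping is nontrivial, at least one of these tails has positive $P$- and $Q$-measure, so $dQ/dP$ fails to be $\sigma(Y)$-measurable on a set of positive measure, and the data-processing inequality is strict. I would make this rigorous either by the equality-case analysis of the DPI for $f$-divergences, or more self-containedly by a direct Jensen argument: writing $\mathrm{KL}(Q\|P) - \mathrm{KL}(Q_Y\|P_Y)$ as an expectation of a conditional KL divergence between the conditional laws of $X$ given $Y$ under $Q$ and under $P$, which is strictly positive on the tail atoms where those conditionals differ.

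Finally, combining: the $Y$-based test sequence has Bahadur slope $\le 2\,\mathrm{KL}(Q_Y\|P_Y) < 2\,\mathrm{KL}(Q\|P) = c^{\text{LLR}}(\theta_1)$, so by the Bahadur-slope formula $\mathrm{ARE} = c^{(Y)}/c^{\text{LLR}} < 1$. I expect the main obstacle to be the slope upper bound: one has to argue carefully that \emph{no} test built from the clamped-and-noised data — not merely the clamped LLR test — can exceed the rate $2\,\mathrm{KL}(Q_Y\|P_Y)$. This requires invoking the converse part of Bahadur's theory (the optimality of the likelihood ratio among all test statistics for the exponential rate of type~II error), applied to the post-processed i.i.d.\ model, and checking that adding fixed (non-$n$-scaling) noise genuinely reduces to an i.i.d.\ testing problem between two fixed distributions so that the converse applies. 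A secondary technical point is ensuring $\mathrm{KL}(Q_Y\|P_Y)$ and $\mathrm{KL}(Q\|P)$ are both finite (so the strict inequality is meaningful), which follows from mutual absolute continuity of $P,Q$ together with the subexponential-type integrability already in force, but should be stated.
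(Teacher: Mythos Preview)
Your proposal is correct and follows essentially the same route as the paper: Chernoff--Stein (Stein's lemma) to bound the Bahadur slope of any $Y$-based test by $2\,\mathrm{KL}(Q_Y\|P_Y)$, and the conditional-KL decomposition (KL chain rule) to get strictness of $\mathrm{KL}(Q_Y\|P_Y)<\mathrm{KL}(Q\|P)$ from non-constancy of $\ell$ on the clamped fibers. Two minor remarks: with the paper's convention $\ell=\log(dP/dQ)$ you have $dQ/dP=e^{-\ell}$ rather than $e^{\ell}$ (immaterial to the argument), and your per-sample convolution picture for the noise is not quite the right model when noise is added once to the aggregate statistic---but since the noise is $n$-independent it is $o(n)$ relative to the sum and does not affect the large-deviation exponent, which is all that is needed (the paper is equally terse on this point).
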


\begin{remark}
The non-constant assumption on the likelihood rules out the extreme cases in which $P$ and $Q$ are proportional on the clamp region and clamping would cause no information loss. %When this proportionality does not happen and clamping is not one-to-one, the test can no longer distinguish how likely those original $X$-values were under $P$ versus $Q$. 
%The corresponding conditional laws $P_{X\mid Y=y}$ and $Q_{X\mid Y=y}$ therefore differ, and part of the information about the original data is irretrievably lost.
\end{remark}

Proposition~\ref{prop: AREofncLLR} shows that, while \texttt{ncLLR} achieves optimal sample complexity up to constant factors, it incurs a loss in effective sample size. This highlights an inherent limitation of approaches such as \citet{canonne2019structure}, where fixed clamping is central to the design. In contrast, our approach uses minimal and data-adaptive clamping, which preserves more information from the data.

The {Neyman--Pearson Lemma} states that, for testing $H_0:\theta=\theta_0$ versus $H_1:\theta=\theta_1$, the (possibly randomized) likelihood ratio test $\psi$ calibrated to satisfy $\mathbb{E}_{f_{\theta_0}}\psi=\alpha$ is the most powerful (MP) level-$\alpha$ test. With the utility result in Theorem~\ref{thm: GDP-MeanEst Utility}, we can establish the following theorem that recovers the same $\mathrm{ARE}$ as the non-private optimal test, providing a stronger notion of optimality.

\begin{restatable}[GDP Simple Hypothesis Test]{thm}{gdpSimpleHT} \label{thm: gdpSimpleHT}
For testing $ H_0:\theta = \theta_0$ against $H_1:\theta = \theta_1$ ($\theta_0<\theta_1$), if the data likelihood under $H_0$ and $H_1$ is $f(\underline{x};\theta_0)$ and $f(\underline{x};\theta_1)$ respectively and assume (A.1) and (A.2) for $\ell(x_i; f_{\theta_0}, f_{\theta_1}) = \log \left( f_{\theta_0}(x_i) /f_{\theta_1} (x_i) \right)$, for all $i \in [n]$. Consider $\mu_{DP}(\underline{\ell},\ep) = \texttt{GDP-MeanEst}(\underline{\ell},a,b,\ep_q,\ep_m,\eta)$ as defined in Definition~\ref{def: setting of GDP-MeanEst}, where $\underline \ell$ is the vector of $\ell(x_i;f_{\theta_0},f_{\theta_1})$. T
hen,
$$
\phi(\underline{x}) :=
\begin{cases}
1, &  \mu_{DP}(\underline{\ell},\ep) > k(n,\ep)\\
\gamma, & \mu_{DP}(\underline{\ell},\ep)  = k(n,\ep) \ , \\
0, & \mu_{DP}(\underline{\ell},\ep)  < k(n,\ep)
\end{cases}
$$
is a level-$\alpha$ test that satisfies $\ep$-GDP and has Bahadur $\text{ARE}(\phi, \psi) = 1$ when $k(n,\ep)$ is chosen such that $\ex_{\theta_0} \phi = \alpha$. %Here, $\ell(x_i;f_{\theta_0},f_{\theta_1})
\end{restatable}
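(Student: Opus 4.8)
The $\ep$-GDP claim is immediate: by Proposition~\ref{prop: both algorithms are GDP} the estimator $\mu_{DP}(\underline\ell;\ep)$ is $\ep$-GDP under the budget split of Definition~\ref{def: setting of GDP-MeanEst} (where $2\ep_q^2+\ep_m^2=\ep^2$), and $\phi$ applies to it only the data-independent threshold $k(n,\ep)$ together with an independent randomization, so post-processing invariance finishes this part. For exact level I would use that $\mu_{DP}(\underline\ell;\ep)$ carries the additive Gaussian term $z_m$, so that conditionally on the data and the privatized quantiles its law is Gaussian, and marginally under $\theta_0$ it is a mixture of Gaussians together with at most countably many atoms (occurring only on the null event that the two privatized quantiles collapse and $z_m$ degenerates). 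Its distribution function therefore runs continuously from $0$ to $1$, so $k(n,\ep)$ and $\gamma$ may be chosen exactly as in the classical randomized Neyman--Pearson construction to force $\mathbb{E}_{\theta_0}\phi=\alpha$; since $\Theta_0=\{\theta_0\}$ this is level $\alpha$. (If only asymptotic level is wanted, Theorem~\ref{thm: GDP-MeanEst Utility} gives $\mu_{DP}(\underline\ell;\ep)=\bar\ell+o_p(n^{-1/2})$, so the classical calibration already achieves level $\alpha+o(1)$.)

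\textbf{ARE: reduction and the classical slope.} The real content is the Bahadur equivalence, i.e.\ showing $P_{\theta_1}(\phi=0)$ and $P_{\theta_1}(\psi=0)$ decay at the same exponential rate. First I would record the classical rate. With $\ell=\log(f_{\theta_0}/f_{\theta_1})$ one has $\mathbb{E}_{\theta_0}\ell=\mathrm{KL}(f_{\theta_0}\|f_{\theta_1})=:D_0>0$ and $\mathbb{E}_{\theta_1}\ell=-\mathrm{KL}(f_{\theta_1}\|f_{\theta_0})<0$, and the level-$\alpha$ likelihood-ratio test rejects for extreme $\bar\ell$ with critical value $k_n^\psi\to D_0$; since $\{\psi=0\}$ is then a large-deviation event under $f_{\theta_1}$, Cram\'er's theorem for $\tfrac1n\sum_i\ell(x_i)$ under $f_{\theta_1}$ — whose cumulant generating function $\lambda\mapsto\log\int f_{\theta_0}^{\lambda}f_{\theta_1}^{1-\lambda}$ is finite on $[0,1]$ with derivative $D_0$ at $\lambda=1$ — gives $-\tfrac1n\log P_{\theta_1}(\psi=0)\to D_0=:c_\psi$. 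The level step above also gives $k(n,\ep)\to D_0$, so $D_0$ is the target rate for $\phi$.

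\textbf{ARE: sandwiching the private statistic.} Write $\hat a=x_{(nq_l)}^{DP}$, $\hat b=x_{(nq_u)}^{DP}$ for the privatized clamping bounds, so $a\le\hat a\le\hat b\le b$ with $b-a=2v(\log n)^p$. Because successive clampings to the nested intervals $[\hat a,\hat b]\subseteq[a,b]$ collapse to clamping to $[\hat a,\hat b]$, one has $\mu_{DP}(\underline\ell;\ep)=\tfrac1n\sum_i[\ell(x_i)]_{\hat a}^{\hat b}+z_m$, and since $[\ell]_{\hat a}^{\hat b}$ lies between $\min\{\ell,\hat b\}$ and $[\ell]_a^b+(b-a)\mathbf{1}\{\ell<\hat a\}$ I get the deterministic sandwich
\[
\tfrac1n\textstyle\sum_{i}\min\{\ell(x_i),\hat b\}+z_m\ \le\ \mu_{DP}(\underline\ell;\ep)\ \le\ \tfrac1n\textstyle\sum_{i}[\ell(x_i)]_a^b+\tfrac{(b-a)\,N^-}{n}+z_m ,
\]
with $N^-=\#\{i:\ell(x_i)<\hat a\}\le nq_l+r$ and $r$ the rank error of the lower \texttt{GDP-Quant} call. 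The bracketing quantities are i.i.d.\ \emph{bounded} averages with no data-dependent clamping and no quantile noise; since $b=v(\log n)^p\to\infty$ and $\ell$ is subexponential under $f_{\theta_1}$, the cumulant generating function of $[\ell]_a^b$ converges to that of $\ell$ (the truncation bias is super-polynomially small), so the associated Cram\'er rate functions converge and, at $k(n,\ep)-\delta_n\to D_0$, the rate tends to $D_0$; the matching Cram\'er lower bound holds for $\min\{\ell,\hat b\}$ on the high-probability event $\{\hat b\ge b_0(n)\}$ with $b_0(n)\to\infty$ slowly. It then remains to show the two correction terms are negligible \emph{at the exponential scale} $e^{-nD_0}$: $z_m\sim N(0,\sigma^2)$ with $\sigma\le(b-a)/(n\ep_m)=\widetilde O(n^{-1})$ gives $P(|z_m|>\delta_n)\le e^{-\Omega(\delta_n^2 n^2/(\log n)^{2p})}$, while $P(r>R)$ is bounded by the Gaussian tail of $\max_t|Z_t|$ plus the probability $2^{T}(Mwn)^{R-1}$ that some width-$w$ bin receives $R$ points; choosing $\delta_n\downarrow0$ slowly makes all of these $o(e^{-nD_0})$, whence $-\tfrac1n\log P_{\theta_1}(\phi=0)\to D_0=c_\psi$ and $\mathrm{ARE}(\phi,\psi)=1$.

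\textbf{Main obstacle.} The crux is exactly that last point: the privacy perturbation must be controlled not merely in probability — where Theorem~\ref{thm: GDP-MeanEst Utility} already gives $\widetilde O_p(s/(\ep n))$ — but at the exponential scale $e^{-nD_0}$. The Gaussian piece is harmless, but \texttt{GDP-Quant}'s rank error is only \emph{polynomially} small at the failure probability of Lemma~\ref{lem: GDP-Quant}, so it must be reanalyzed through the finer tail estimates above; this in turn forces the binning to be fine enough — equivalently, the number of search steps $T$, i.e.\ the exponent $\eta$ of Definition~\ref{def: setting of GDP-MeanEst}, to be taken large enough — that ``many points in one bin'' and ``off-by-one-bin'' become super-polynomially rare, while still keeping \texttt{GDP-Quant}'s accuracy inside the $o(n^{-1/2})$ window of Theorem~\ref{thm: GDP-MeanEst Utility}. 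Exhibiting one choice of the free parameters $(v,p,\eta,k)$ that simultaneously secures the convergence of the Cram\'er rates near $D_0$, the exponential smallness of the quantile perturbation, and the exact level calibration is the main bookkeeping I would need to carry out.
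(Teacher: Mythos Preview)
Your privacy and level arguments match the paper's. On the ARE, however, the paper takes a much shorter route than you do: it simply notes that Theorem~\ref{thm: GDP-MeanEst Utility} gives $\mu_{DP}(\underline\ell;\ep)$ and $\bar\ell_n$ the same central-limit-theorem limit under $\theta_0$ and the same law-of-large-numbers limit under any $\theta$, then feeds both statistics into Theorem~14.22 of \citet{van2000asymptotic} with the quadratic rate $e_i(y)=(y-\theta_0)^2/\sigma_0^2$ (justified in one line by ``sub-Gaussian concentration'') and $\mu_i(\theta)=\theta$; since these coincide for $i=1,2$, the efficiency ratio $e_1(\mu_1(\theta_1))/e_2(\mu_2(\theta_1))$ equals $1$ and the proof ends. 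There is no large-deviation sandwich and no exponential-scale control of the quantile mechanism---everything runs through the $o_p(n^{-1/2})$ equivalence already established in Theorem~\ref{thm: GDP-MeanEst Utility}. Your proposal instead targets the \emph{exact} Cram\'er slope $\mathrm{KL}(f_{\theta_0}\|f_{\theta_1})$ via a deterministic sandwich of $\mu_{DP}$ between i.i.d.\ bounded averages, which is a genuinely different and more ambitious program: it would make the slope here match the KL quantities used in Proposition~\ref{prop: AREofncLLR}, whereas the paper's quadratic rate generally does not. You have also correctly isolated the real difficulty on that route---Lemma~\ref{lem: GDP-Quant} guarantees only polynomially small failure probability, so killing the \texttt{GDP-Quant} perturbation at the $e^{-nD_0}$ scale would require the finer bin-occupancy tail analysis you sketch and a careful choice of $(v,p,\eta,k)$. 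If your aim is only to reproduce the paper, drop the sandwich entirely and cite Theorem~\ref{thm: GDP-MeanEst Utility} together with van der Vaart's Theorem~14.22; if your aim is the exact slope, your outline is sound, but the obstacle you flag is genuine and is not addressed in the paper.
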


% \begin{proof}[Proof Sketch]
%     To establish the asymptotic relative efficiency, we invoke Theorem~14.22 of \citet{van2000asymptotic}, whose conditions are satisfied via sub-Gaussian concentration inequalities and the law of large numbers.
% \end{proof}

Theorem~\ref{thm: gdpSimpleHT} serves as a GDP analogue of the {Neyman--Pearson Lemma}, achieving the same asymptotic efficiency. The key idea behind Theorem~\ref{thm: gdpSimpleHT} can be understood through the lens of asymptotic relative efficiency (ARE) via the Bahadur slope. In the simple hypothesis setting, $\mathrm{ARE}$ captures not only the power of a test at a fixed alternative, but also how well the test maintains power as the alternative approaches the null at a certain rate. A test is considered stronger in this sense if it can still reliably distinguish the hypotheses even when they become increasingly similar.
        
The Bahadur slope formalizes this idea by quantifying the rate at which the test’s error probability decays. Proposition~\ref{prop: AREofncLLR} shows that any fixed clamping strategy degrades this rate (under the stated conditions), meaning the test loses its ability to distinguish hypotheses when they converge too quickly.
        
In contrast, our method avoids this limitation by using minimal and adaptive clamping, which preserves more information from the data. As a result, it achieves $\mathrm{ARE} = 1$, indicating that it retains the same asymptotic efficiency as the nonprivate test in this stronger sense.

\begin{remark} 
    It is important to note that all of the results in Section~4 only require the \emph{test statistics} to have sub-exponential tails, rather than the underlying data distribution. Consequently, even when the data are heavy-tailed (e.g., Cauchy), the framework may still apply provided that the constructed statistics exhibit suitable concentration behavior.
\end{remark}

\subsection{One-sided Hypothesis with MLR} \label{sec: gdpOneSidedHT}

One-sided hypotheses are also of particular interest, as many applications---such as clinical trials and design of experiments---aim to detect a strictly positive or negative effect. Moreover, many common statistical models, including one-parameter exponential families, satisfy MLR, under which the likelihood ratio test yields a UMP test for one-sided hypotheses.

Let $\Phi$ be a set of tests. Then, $\psi\in\Phi$ is the UMP level $\alpha$ test among $\Phi$ for $H_0:\theta\in\Theta_0$ versus $H_1:\theta\in\Theta_1$ if (i) $\sup_{\theta\in\Theta_0} \beta_{\psi}(\theta)\leq\alpha$ and (ii) for any $\phi\in\Phi$ such that $\sup_{\theta\in\Theta_{0}}\beta_{\phi}(\theta)\leq\alpha$ we have ${\beta_{\psi}}(\theta) \geq {\beta_{\phi}}(\theta)$ for all $\theta\in\Theta_1$.

For testing $H_0:\theta\le\theta_0$ against $H_1:\theta>\theta_0$, if $\{f(x;\theta)\}$ has a monotone likelihood ratio in $t(x)$, then the test $\phi$ that rejects for large values of $t$---with a possibly randomized cutoff---and satisfies $\ex_{f_{\theta_0}} \phi = \alpha$ is the UMP level-$\alpha$ test. This is the \emph{Karlin--Rubin Theorem}.  With the utility result in Theorem~\ref{thm: GDP-MeanEst Utility}, we have the following analogous result.

\begin{restatable}[GDP One-Sided Hypothesis Test with MLR]{thm}{gdpOneSidedHT} \label{thm: gdpOneSidedHT}
For testing $H_0:\theta\le\theta_0$ against $H_1:\theta>\theta_0$, if $\{f(x;\theta)\}$ has a monotone likelihood ratio in $t(x)$ and assume (A.1) and (A.2) for $t(x)$. Consider $\mu_{DP}(\underline{t},\ep) = \texttt{GDP-MeanEst}(\underline{t},a,b,\ep_q,\ep_m,\eta)$ as defined in Definition~\ref{def: setting of GDP-MeanEst}, where $\underline{t} =(t(x_1),\ldots,t(x_n))$ is the $n$-dimensional MLR statistic vector. Then,
\begin{align*} %\label{eq: DP UMP test}
\phi(\underline{x}) =
\begin{cases}
1, & \mu_{DP}(\underline{t},\ep) > k(n,\ep)\\
r, & \mu_{DP}(\underline{t},\ep) = k(n,\ep)\\
0, & \mu_{DP}(\underline{t},\ep) < k(n,\ep),
\end{cases}    
\end{align*}
is a level-$\alpha$ that satisfies $\ep$-GDP and and has Pitman $\text{ARE}(\phi, \psi) = 1$ when $k(n,\ep)$ is chosen such that $\ex_{\theta_0} \phi = \alpha$.
\end{restatable}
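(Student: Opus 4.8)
I would prove the three parts of the statement in turn: (a) $\phi$ satisfies $\ep$-GDP, (b) $\phi$ is level $\alpha$, and (c) the Pitman efficiency $\mathrm{ARE}(\phi,\psi)$ equals $1$. Part (a) is immediate: $\mu_{DP}(\underline{t},\ep)=\texttt{GDP-MeanEst}(\underline{t},a,b,\ep_q,\ep_m,\eta)$ with $\ep^2=2\ep_q^2+\ep_m^2$ is $\ep$-GDP by Proposition~\ref{prop: both algorithms are GDP}, and $\phi$ depends on the data only through $\mu_{DP}(\underline{t},\ep)$ (thresholding at $k(n,\ep)$, with an external coin $r$ on the boundary), so invariance to post-processing yields $\ep$-GDP for $\phi$; since $z_m$ has a density, $P(\mu_{DP}(\underline{t},\ep)=k(n,\ep))=0$ and the tie rule $r$ is irrelevant almost surely.

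\textbf{Level $\alpha$.} With $k(n,\ep)$ chosen so that $\ex_{\theta_0}\phi=\alpha$, it suffices to show the power function $\theta\mapsto\ex_\theta\phi$ is nondecreasing, for then $\sup_{\theta\le\theta_0}\ex_\theta\phi=\ex_{\theta_0}\phi=\alpha$. Under MLR in $t$, the law of each $t(X_i)$ is stochastically nondecreasing in $\theta$, so by independence the vector $\underline{t}=(t(X_1),\dots,t(X_n))$ is stochastically nondecreasing in $\theta$ in the coordinatewise order; hence it is enough that the mechanism-averaged rejection probability $h(\underline{t}):=\ex[\phi\mid\underline{t}]$ be coordinatewise nondecreasing in $\underline{t}$. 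Conditioning on the Gaussian increments $Z_1,\dots,Z_T$ inside \texttt{GDP-Quant}, a coordinatewise increase of $\underline{t}$ can only decrease every bin count, which (by induction on the depth of the noisy binary search) can only move the returned quantile to the right; hence both clamp endpoints, and therefore the clamped mean $\frac{1}{n}\sum_i\widetilde{t}_i$, are nondecreasing in $\underline{t}$. The delicate point is that the Gaussian scale in \texttt{GDP-MeanEst} is proportional to the clamp width $x_{(nq_u)}^{DP}-x_{(nq_l)}^{DP}$, which also moves with $\underline{t}$; one controls this using that the width changes by at most two bin widths $2^{-T}(b-a)$ under a coordinatewise increase, so that the conditional rejection probability $P(\mu_{DP}(\underline{t},\ep)>k(n,\ep)\mid\underline{t},Z_{1:T})$---a normal tail with mean $\frac{1}{n}\sum_i\widetilde{t}_i$ and standard deviation proportional to the width---remains nondecreasing. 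Averaging over $Z_{1:T}$ and over $\underline{t}$ then gives monotone power.

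\textbf{Pitman ARE $=1$.} The key step is to invoke Theorem~\ref{thm: GDP-MeanEst Utility} with $\underline{X}$ replaced by $\underline{t}$, which is legitimate because (A.1)--(A.2) are assumed for $t$: $\mu_{DP}(\underline{t},\ep)=\bar t+\widetilde O_p\!\big(s/(\ep n)\big)=\bar t+o_p(n^{-1/2})$ with $\bar t=\frac{1}{n}\sum_i t(x_i)$. Hence $\sqrt n\,\big(\mu_{DP}(\underline{t},\ep)-\ex_\theta t(X)\big)$ has the same weak limit as $\sqrt n\,\big(\bar t-\ex_\theta t(X)\big)$ whenever the latter converges. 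Along local alternatives $\theta_n=\theta_0+h n^{-1/2}$, a Lindeberg CLT together with $\ex_{\theta_n}t(X)=\ex_{\theta_0}t(X)+h n^{-1/2}\,\partial_\theta\ex_\theta t(X)\big|_{\theta_0}+o(n^{-1/2})$ and continuity of $\theta\mapsto\var_\theta t(X)$ give $\sqrt n\,\big(\bar t-\ex_{\theta_0}t(X)\big)\Rightarrow N\big(0,\var_{\theta_0}t(X)\big)$ under $\theta_0$ and $\Rightarrow N\big(h\,\partial_\theta\ex_\theta t(X)|_{\theta_0},\,\var_{\theta_0}t(X)\big)$ under $\theta_n$; the same two limits hold with $\mu_{DP}(\underline{t},\ep)$ in place of $\bar t$. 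Therefore $k(n,\ep)=\ex_{\theta_0}t(X)+z_{1-\alpha}\sqrt{\var_{\theta_0}t(X)/n}+o(n^{-1/2})$ with $z_{1-\alpha}=\Phi^{-1}(1-\alpha)$, and the power of $\phi$ at $\theta_n$ converges to $\Phi(h e-z_{1-\alpha})$, where $e=\partial_\theta\ex_\theta t(X)|_{\theta_0}\big/\sqrt{\var_{\theta_0}t(X)}$. Running the identical computation for the non-private Karlin--Rubin UMP test $\psi$, which rejects for large $\sum_i t(x_i)$, gives the same limiting local power $\Phi(h e-z_{1-\alpha})$. Since $\phi$ and $\psi$ have the same continuous, strictly increasing local power curve in $h$, the minimal sample sizes needed to reach any fixed power $\gamma\in(\alpha,1)$ satisfy $n^{(\psi)}_\nu/n^{(\phi)}_\nu\to1$, i.e.\ $\mathrm{ARE}(\phi,\psi)=1$.

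\textbf{Main obstacle.} The hard part will be transferring Theorem~\ref{thm: GDP-MeanEst Utility}, proved for a fixed sampling distribution, to the triangular array $\{t(X_i)\}$ generated under the moving alternatives $\theta_n$: one needs (A.1)--(A.2) for $t(X)$ to hold with constants $M,s$ uniform over a shrinking neighborhood of $\theta_0$ (so the hidden logarithmic factors stay bounded), and the $o_p(n^{-1/2})$ remainder---as well as the CLT---must hold along the sequence of alternatives rather than under a single law; establishing this uniformity, via the MLR regularity and continuity of $\theta\mapsto\big(\ex_\theta t(X),\var_\theta t(X)\big)$, is where most of the work lies. A secondary delicate point is the monotonicity step in the level argument, which is nontrivial only because \texttt{GDP-MeanEst} clamps with data-dependent endpoints and injects Gaussian noise whose scale depends on those endpoints.
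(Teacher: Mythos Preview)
Your privacy and Pitman-ARE arguments track the paper's proof closely. The paper invokes Proposition~\ref{prop: both algorithms are GDP} for $\ep$-GDP, uses Theorem~\ref{thm: GDP-MeanEst Utility} to get $\mu_{DP}(\underline{t},\ep)=\bar t_n+o_p(n^{-1/2})$, and then cites Theorem~14.19 of \citet{van2000asymptotic} to obtain Pitman slope ratio $1$; you carry out the same steps with the local-power computation spelled out, and you correctly flag the triangular-array uniformity issue under $\theta_n=\theta_0+h/\sqrt n$, which the paper absorbs silently into the van~der~Vaart citation.

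The genuine gap is in your level-$\alpha$ argument. The paper does not attempt this part at all: it calibrates $k(n,\ep)$ so that $\ex_{\theta_0}\phi=\alpha$ and moves on, never verifying $\sup_{\theta\le\theta_0}\ex_\theta\phi\le\alpha$. You try to supply the missing monotonicity, but the step ``the width changes by at most two bin widths $2^{-T}(b-a)$ under a coordinatewise increase'' is false. Conditioning on $Z_{1:T}$, a coordinate increase in $\underline{t}$ can flip an early comparison in \texttt{GDP-Quant}, after which the two binary searches (original vs.\ increased data) proceed in disjoint half-intervals and the returned midpoint can shift by as much as $(b-a)/2$; since the lower and upper \texttt{GDP-Quant} calls use independent noise vectors, their outputs can shift by very different amounts, so the clamp width is not controlled. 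Because the conditional rejection probability $\Phi\!\big((\tfrac1n\sum_i\widetilde t_i-k)/\sigma(\underline t)\big)$ is decreasing in $\sigma$ when the clamped mean exceeds $k$ and increasing when it falls below, an uncontrolled width breaks the coordinatewise monotonicity you need. Either a different device is required here, or the level claim must be read---as the paper effectively does---as size at $\theta_0$ only.
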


% \begin{proof}[Proof Sketch]
%     Using the level-$\alpha$ conditions, we can quantify the threshold for test $\phi$ through that of $\psi$ given by {Karlin--Rubin Theorem}. The difference between the two thresholds is again $\widetilde{O}(1/n)$. Given any $\theta \in \Theta_1$, this gives $|\beta_{\phi}(\theta) - \beta_\psi(\theta)| = \widetilde{O}(1/\sqrt{n})$ by the mean-value theorem on Gaussian CDFs.
% \end{proof}

%Note that Theorem~\ref{thm: gdpSimpleHT} 
%applies \texttt{GDP-MeanEst} 
%to the sufficient statistic $t$ of the MLR family,
%rather than to the log-likelihood ratio.
\subsection{Two-sided Hypothesis with Exponential Family} \label{sec: gdpTwoSidedHT}

Although there is no UMP test for a two-sided hypothesis, we can derive optimality under a smaller class of tests; classically, tests are restricted to be unbiased: A test $\phi$ is \emph{unbiased} if $\beta_\phi(\theta) \le \alpha$ under $H_0$ and $\beta_\phi(\theta) \ge \alpha$ under $H_1$. %The tests from Section~\ref{sec: gdpSimpleHT} and \ref{sec: gdpOneSidedHT} given by {Neyman--Pearson Lemma} and {Karlin--Rubin Theorem} are both unbiased.

For testing $H_0:\theta=\theta_0$ against $H_1:\theta\neq\theta_0$, if $\{f(x;\theta)\}$ is a one-parameter exponential family, then the test $\phi$ that rejects for extreme values of the sufficient statistic $t(x)$---with possibly randomized cutoffs---and satisfies the level-$\alpha$ condition $\ex_{f_{\theta_0}}\phi=\alpha$ together with the unbiased condition $\ex_{f_{\theta_0}}[\phi(x)\, t(x)]=\alpha\,\ex_{f_{\theta_0}}[t(x)]$ is the UMP unbiased level-$\alpha$ test. Combined with the utility result in Theorem~\ref{thm: GDP-MeanEst Utility}, we obtain the following analogous result.

\begin{restatable}[GDP Two-Sided Hypothesis Test with Exponential Family]{thm}{gdpTwoSidedHT} \label{thm: gdpTwoSidedHT}
For testing $H_0:\theta=\theta_0$ against $H_1:\theta\neq\theta_0$, if $\{f(x;\theta)\}$ belongs to a one-parameter exponential family distribution and assume (A.1) and (A.2) for $t(x)$. Consider $\mu_{DP}(\underline{t},\ep) = \texttt{GDP-MeanEst}(\underline{t},a,b,\ep_q,\ep_m,\eta)$ as defined in Definition~\ref{def: setting of GDP-MeanEst}, where $\underline{t} =(t(x_1),\ldots,t(x_n))$ is the $n$-dimensional MLR statistic vector. Then,
\begin{align*} 
\phi(\underline{x}) =
\begin{cases}
1, & \mu_{DP}(\underline{t},\ep) < k_{l}(n,\ep)\\
1& \mu_{DP}(\underline{t},\ep) > k_{u}(n,\ep)\\
r_a, & \mu_{DP}(\underline{t},\ep) = k_{l}(n,\ep)\\
r_b, & \mu_{DP}(\underline{t},\ep) = k_{u}(n,\ep)\\
0, & k_{l}(n,\ep) \le \mu_{DP}(\underline{t},\ep) \le k_{u}(n,\ep)
\end{cases}
\end{align*}
is an asymptotically unbiased level-$\alpha$ test that satisfies $\ep$-GDP and has Pitman $\text{ARE}(\phi, \psi) = 1$ when $k_{l}(n,\ep)$ and $k_{u}(n,\ep)$ are chosen such that 
\[\ex_{\theta_0} [1\{\mu_{DP}(\underline{t},\ep) < k_l(n,\ep)\}] = \ex_{\theta_0} [1\{\mu_{DP}(\underline{t},\ep) > k_u(n,\ep)\}] = \frac{\alpha}{2} + o(1).\]
\end{restatable}

% \begin{proof}[Proof Sketch.]
%     Using the level-$\alpha$ and the unbiased conditions, we can quantify the threshold for test $\phi$ through that of $\psi$. The differences between the two upper thresholds and between the two lower thresholds are both $\widetilde{O}(1/n)$. Given any $\theta \in \Theta_1$, this again gives $|\beta_{\phi}(\theta) - \beta_\psi(\theta)| = \widetilde{O}(1/\sqrt{n})$.
% \end{proof}

% \begin{remark}
%     For testing $H_0:\theta_a\le \theta \le\theta_b$ against $H_1:\theta< \theta_a \text{ or } \theta>\theta_b$, we can derive a similar private test with the same order of the optimality power gaps. Its derivation is more straightforward than Theorem~\ref{thm: gdpTwoSidedHT} because the constraints are merely two level-$\alpha$ conditions.
% \end{remark}

In Theorem~\ref{thm: gdpTwoSidedHT}, the test $\phi$ is asymptotically level-$\alpha$ because of its asymptotic unbiasedness. However, any bias in its local power expansion---equivalently, any deviation from the derivative-zero condition at $\theta_0$---is of smaller order and therefore does not affect the Pitman efficiency. In practice, a conservative type I error guarantee can be achieved by Monte Carlo calibration, detailed in the next section.

\section{Simulation}

In this section, we first demonstrate the strength of \texttt{GDP-MeanEst} against other private mean estimators. Then, we apply \texttt{GDP-MeanEst} to simple and MLR hypothesis testing problems and compares their performance with other private testing methods. Throughout the simulation section, we fix type I error to be $0.05$, privacy budgets  $\ep \in \{0.5, 1, 2\}$ and vary sample sizes. Each configuration is replicated $1{,}000$ times. The source codes are available at \url{https://github.com/y-w-chen/GDP_NearOptimTest}.

For conservative p-value calculation, we adopt the Monte Carlo procedure from \citet{barber2022testing}: if the test statistic such as the log likelihood ratio test is determined such that smaller values are seen as evidence against the null. Let $\mathbb{I}_m := \mathbf{1}\!\left\{ t(\underline{x}^{(m)}) \ge t(\underline{x}) \right\}$. Given $M$ i.i.d. draws from the null distribution, $\underline{x}^{(1)},\ldots,\underline{x}^{(M)}$, a p-value is,  
\begin{align} \label{eq: p-value}
\text{p-val}_t(\underline{x},\underline{x}^{(1)},\ldots,\underline{x}^{(M)}) := \frac1{M+1} \left(1+\sum_{m=1}^M \mathbb{I}_m \right).
\end{align}

For methods not originally designed under Gaussian DP, we replace their noise-adding distributions (Laplace or Tulap) with Gaussian noise calibrated under its composition rules, so that all methods achieve equally tight privacy guarantees. Additional simulation results can be found in Appendix~\ref{appendix: simulation}.

\subsection{Private Mean Estimation} \label{sim: private mean estimation}

In this section, we compare five private mean estimation methods: (i) \texttt{GDP-MeanEst}, (ii) \texttt{CoinPress} \citep{biswas2020coinpress}, (iii) \texttt{Shifted-CM} \citep{huang2021instance}, (iv) \texttt{Karwa-Vadhan} \citep{karwa_finite_2017} and (v) mean estimation with data-dependent clamps that grow with the sample size, \texttt{Naive-DD}. The non-private sample mean is used as a benchmark. 

We consider three data-generating distributions: a Gamma distribution with shape parameter $2$ and rate $0.5$, a Logistic distribution with location $5$ and scale $2$, and a Gaussian distribution with mean $3$ and variance $1$. The sample size ranges from $100$ to $100{,}000$. The Gamma and Logistic distributions are chosen to represent, respectively, skewed and heavy-tailed settings.%, complementing the discussion in Section~\ref{sec: private mean estimation}.

\begin{figure}[t]
    \centering
    \includegraphics[width=0.75\linewidth]
    %[scale=0.165]
    {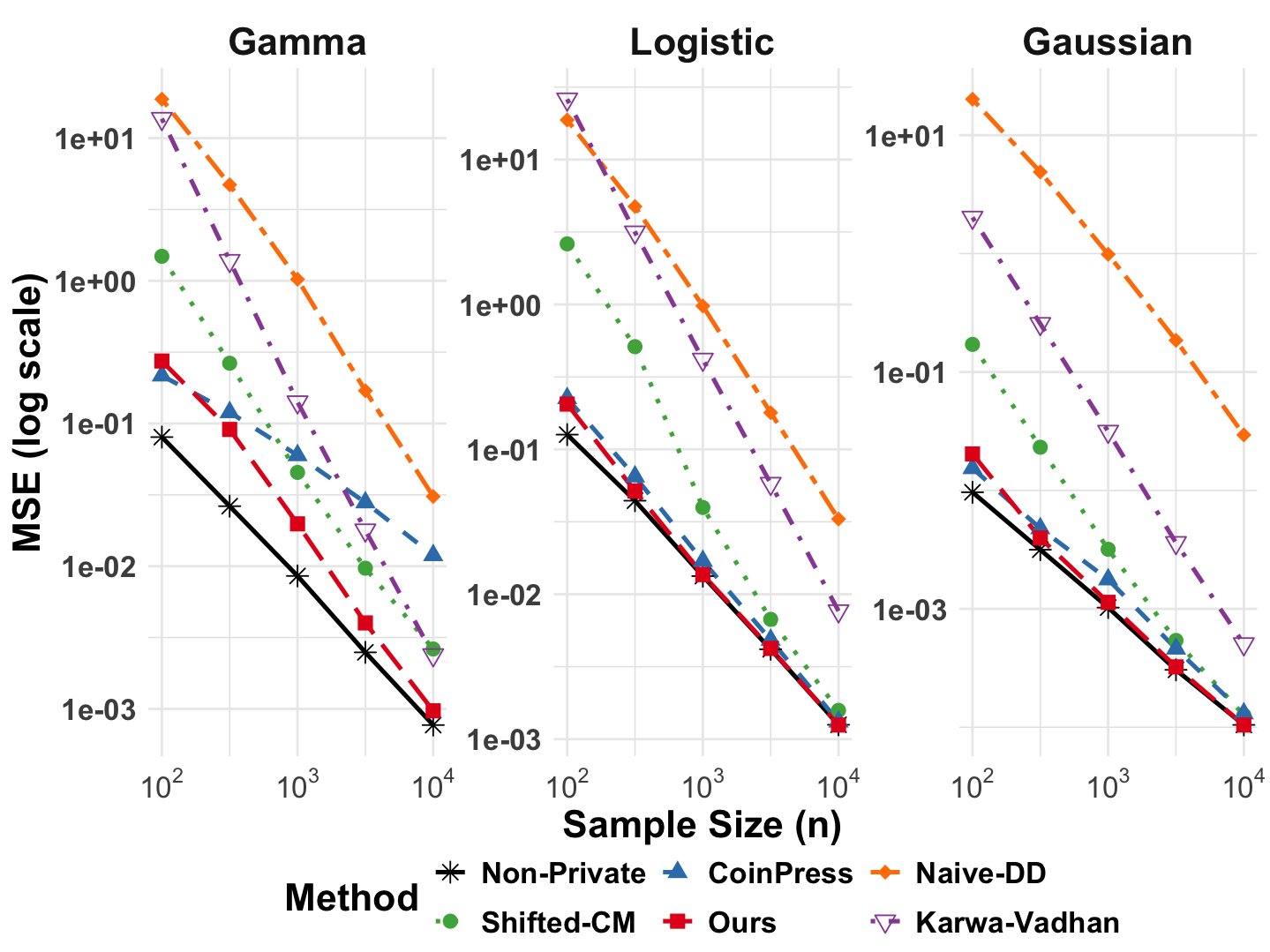}
    \caption{GDP mean estimation comparison ($\epsilon=1$)}
    \label{fig:GDP_MeanEst_Comparison_ep1} 
\end{figure}

Figure~\ref{fig:GDP_MeanEst_Comparison_ep1} shows that our private mean estimator outperforms competing methods across all three data distributions for sample sizes $n \ge 10^{2.5}$. Moreover, our method matches the slope of the non-private benchmark, indicating the same convergence rate. It also achieves consistently smaller error across sample sizes; on the log–log scale, this appears as a downward shift of the curve, reflecting a smaller leading constant. As the sample size increases, the gap between our method and the non-private benchmark narrows down quickly, suggesting that our method approaches the leading constant of the non-private procedure. This behavior is consistent with the $\widetilde{O}_p$ bound established in Theorem~\ref{thm: GDP-MeanEst Utility}.

\subsection{Simple Hypothesis} \label{sim: simple hypothesis}
We study a simple hypothesis testing problem with $H_0:$ $t$ distribution with d.f.\ $=1$ and $H_1:$ equally weighted mixture of two $t$ distributions, one with d.f.\ $=1$ and the other with d.f.\ $=1.1$ and noncentrality parameter $0.1$. We compare five methods: (i) our proposed $\mu_{DP}(\underline{\ell};\ep)$, (ii) \texttt{ncLLR} \citep{canonne2019structure}, (iii) \texttt{K-V} \citep{karwa_finite_2017}, (iv) the private Kolmogorov--Smirnov (KS) test, and (v) the private Cramér--von Mises (CvM) test \citep{awan2025differentially}. Note that, in this and the following simulations, the sample size ranges from $100$ to $3{,}200$, and the $p$-values for all the methods are computed using \eqref{eq: p-value}.

\begin{figure}[t]
    \centering
    \includegraphics[width= 0.75\linewidth]
    %[scale=0.165]
    {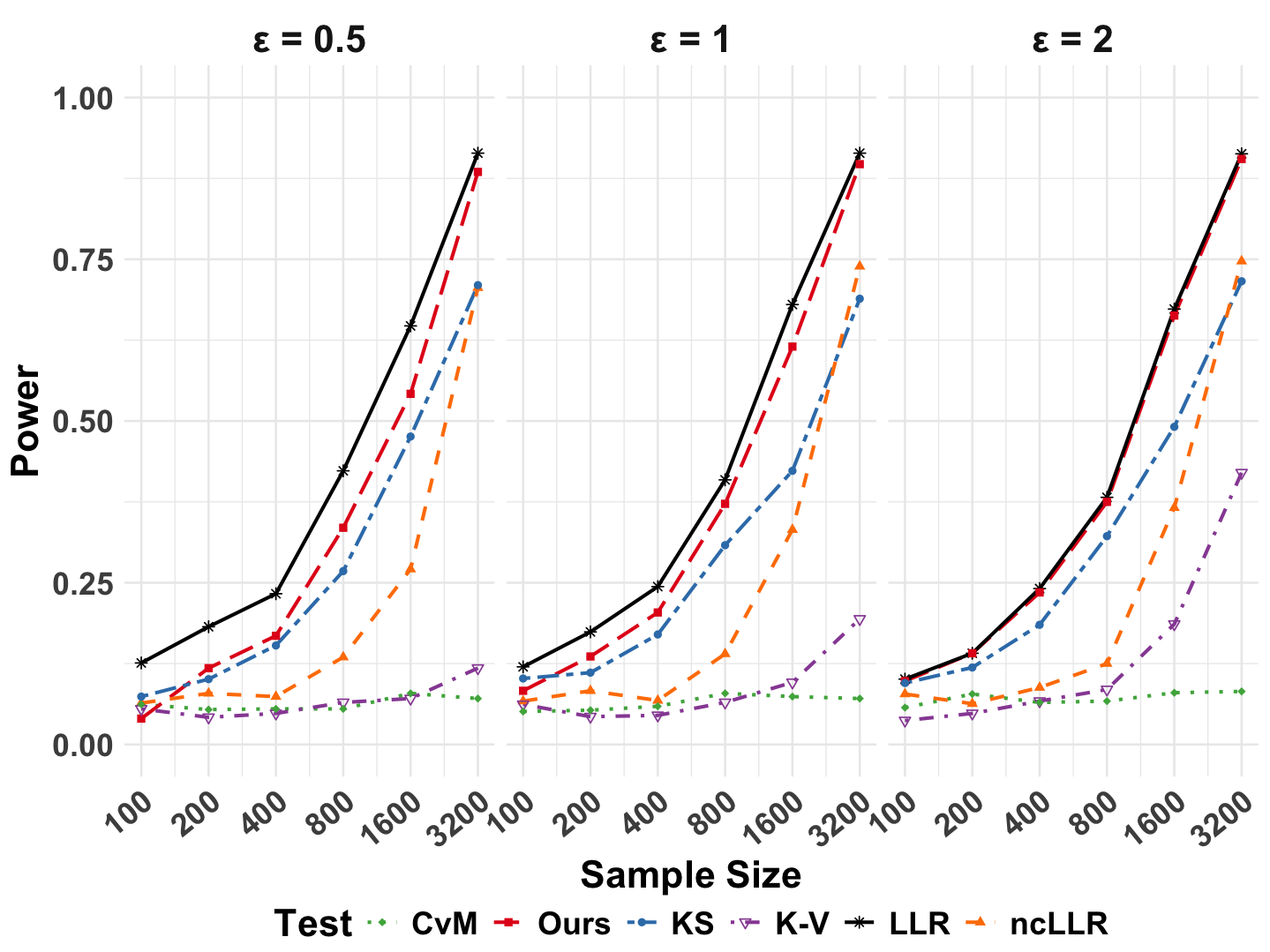}
    \caption{Simple hypothesis under $t$ data}
    \label{fig:SimpleHT_MixtureT}
\end{figure}   

Figure~\ref{fig:SimpleHT_MixtureT} shows that our testing method achieves substantially higher power than the competing methods and approaches the non-private \texttt{LLR} as the sample size increases, indicating that it effectively captures the signal in the mixture $t$ alternatives even under privacy constraints. In contrast, several competing methods exhibit noticeably lower power; in particular, \texttt{CvM} and \texttt{K-V} struggle to distinguish the mixture t alternatives. %This empirical pattern is consistent with the fast convergence behavior suggested by the ARE result in Theorem~\ref{thm: gdpSimpleHT}.

\subsection{One-Sided Hypothesis with MLR} \label{sim: one-sided hypothesis}

Consider a one-sided hypothesis testing problem with $H_0: N(0,1)$ versus $H_1: N(\theta_1,1)$, where $\theta_1>0$. We compare three methods: $\mu_{DP}(\underline{t};\ep)$, the private Kolmogorov--Smirnov (KS) test, and the private Cramér--von Mises (CvM) test. Note that \texttt{ncLLR} is not applicable to one-sided testing, as its test statistic requires specification of both the null and alternative distributions. 

\begin{figure}[t]
    \centering
    \includegraphics[width=0.75\linewidth]
    %[scale=0.165]
    {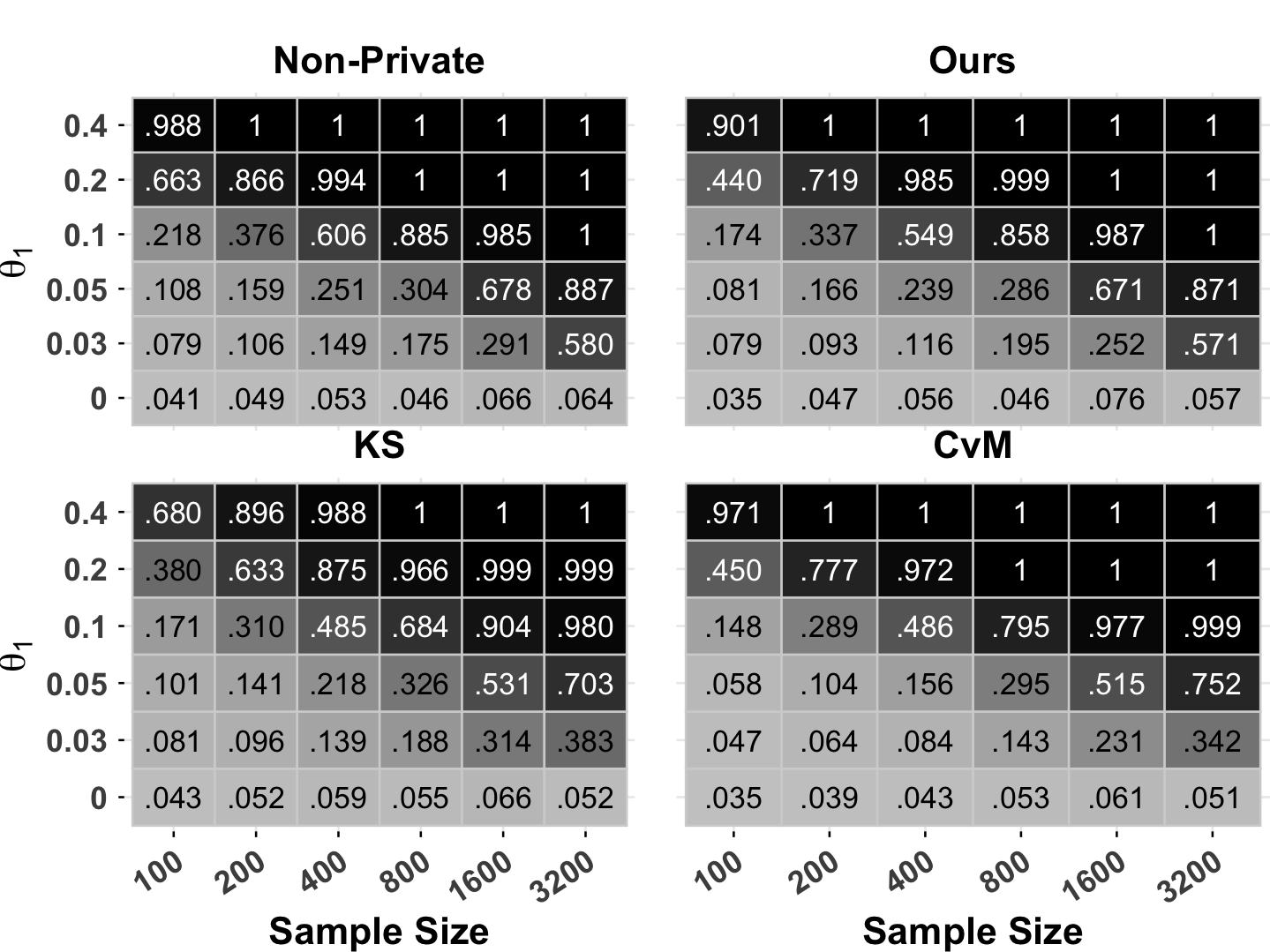}
    \caption{One-sided hypothesis under Gaussian data ($\epsilon=1$)}
    \label{fig:OneSidedHT_Normal_ep1}
\end{figure}

Figure~\ref{fig:OneSidedHT_Normal_ep1} shows that for sample sizes $n \ge 200$, our proposed $\mu_{DP}(\underline{t};\ep)$ attains the highest power among the DP methods. At a sample size of 100, $\mu_{DP}(\underline{t};\ep)$ is comparable to \texttt{CvM}, while \texttt{KS} achieves slightly higher power. As the sample size increases, the power of our method steadily improves and closely tracks the non-private benchmark, especially at moderate to large sample sizes.  %Moreover, for alternatives with $\theta_1 \leq 2$---which represent more challenging cases for distinguishing the null from the alternative---$\mu_{DP}(\underline{t};\ep)$ is the most powerful among the three DP methods. %These empirical findings are consistent with the fast convergence implied by the ARE result in Theorem~\ref{thm: gdpOneSidedHT}.

\subsection{Two-Sided Hypothesis with Exponential Family} \label{sim: two-sided hypothesis}

In this section, we consider a two-sided hypothesis testing problem with $H_0$ given by a Logistic distribution with location $0$ and scale $1$ and $H_1$ given by a Logistic distribution with location $\theta_1$ and scale $1$, where $\theta_1 \in \{\pm 0.05, \pm 0.1, \pm 0.2\}$. We compare three methods: $\mu_{DP}(\underline{t};\ep)$, the private Kolmogorov--Smirnov (KS) test, and the private Cramér--von Mises (CvM) test.

Figure~\ref{fig:TwoSidedHT_Logistic_ep1} shows that, among the DP tests, our proposed $\mu_{DP}(\underline{t};\ep)$ attains the highest power across most alternatives and sample sizes. Its advantage becomes more evident as the sample size increases, and it closely tracks the power of the non-private \texttt{LLR} test as $n \ge 800$, particularly for intermediate values of $\theta_1$. %This empirical behavior is consistent with the fast convergence implied by the ARE result in Theorem~\ref{thm: gdpTwoSidedHT}.

\begin{figure}[t]
    \centering
    \includegraphics[width=0.75\linewidth]
    %[scale=0.165]
    {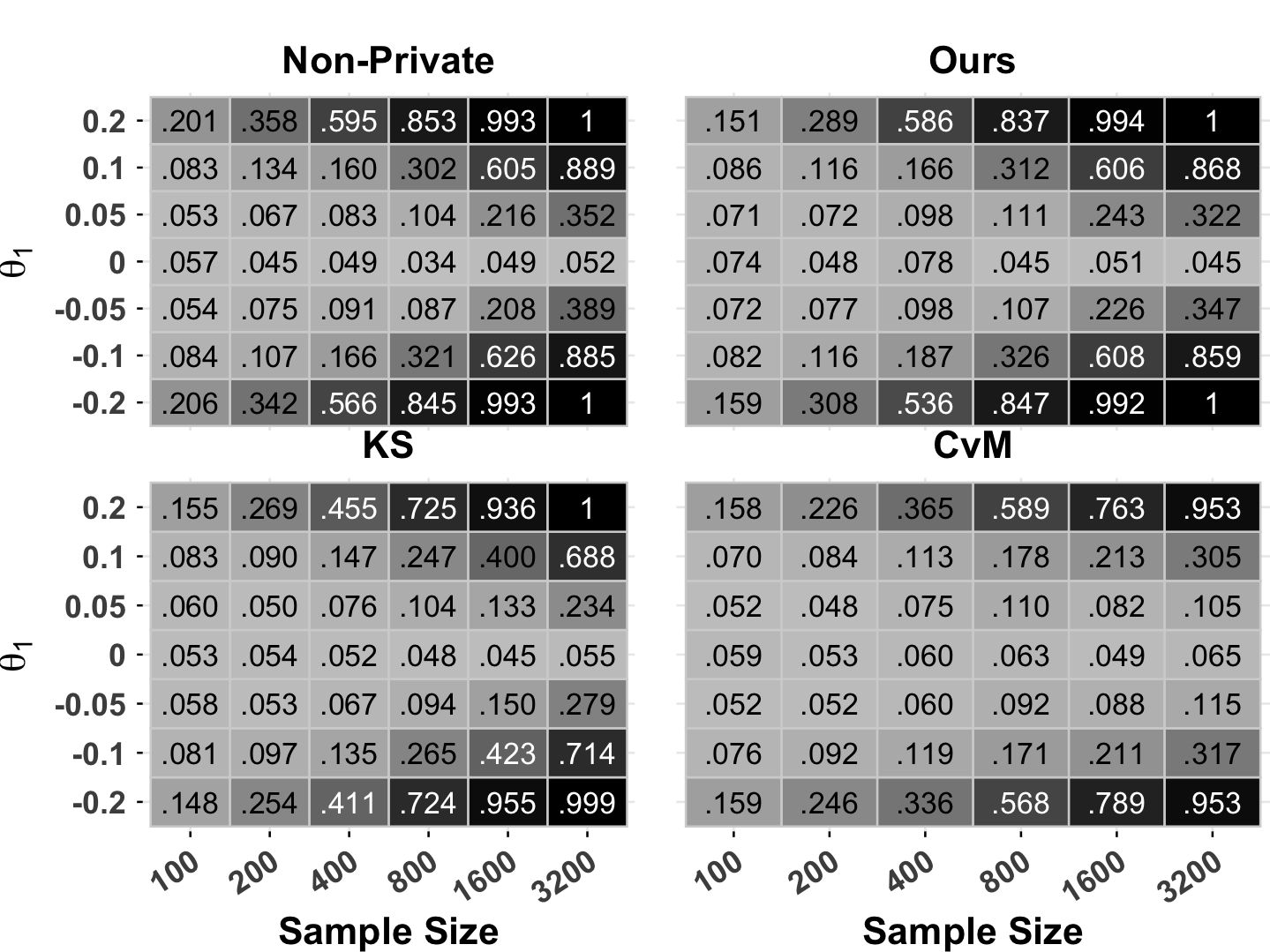}
    \caption{Two-sided hypothesis under Logistic data ($\epsilon=1$)}
    \label{fig:TwoSidedHT_Logistic_ep1}
\end{figure}

\section{Discussion}

We propose an adaptive clamping rule based on private quantiles for private mean estimation, optimally balancing DP noise and clamping bias by expanding and trimming the range to adapt to the data distribution. This design establishes population-level optimality---absent from prior private quantile-based approaches---recovers the correct constants, and yields near-optimal performance compared to non-private estimators. Applied to private hypothesis testing, our approach gives  tests for simple, and certain one- and two-sided hypotheses that achieve optimal asymptotic relative efficiency compared to the most powerful non-private tests. 

Despite these strengths, the framework has some notable limitations. The adaptive clamping rule is currently restricted to one-dimension. Extending it to higher dimensions would require identifying suitable projection strategies or norms that preserve efficiency under privacy constraints \citep{diakonikolas2019robust, hopkins2022efficientMean}.

While many applications focus on a single parameter of interest, nuisance parameters frequently arise in practice. Incorporating them in a principled manner under privacy constraints, for example via profile likelihood, conditional testing, or simulation-based calibration methods \citep{reid2003likelihood, andrews2016conditional, talts2018validating}, remains an important direction for future research.

Our approach relies on \texttt{GDP-Quant} as a preliminary step and therefore requires splitting the privacy budget, which can be less effective under very small privacy budgets or limited sample sizes (see Appendix~\ref{appendix: simulation} for empirical studies). Thus, identifying methods to optimize performance in these settings is a direction for future research.

For private tests, asymptotic relative efficiency is established, but we were unable to establish whether our method is rate-optimal compared to other DP methods, due to limitations of existing tools. Nevertheless, given the near-optimality proved for private mean estimation, we expect analogous results to hold, though a rigorous proof remains open.

%\newpage

% Acknowledgements and Disclosure of Funding should go at the end, before appendices and references
\acks{} 
The project was supported in part by NSF award numbers SES 2610910 and SES 2150615.
% Manual newpage inserted to improve layout of sample file - not
% needed in general before appendices/bibliography.

%\bibliography{references} 

\vskip 0.2in

\appendix

\section{More Simulation Results} \label{appendix: simulation}

\subsection{Complete Privacy Budgets for Private Mean Estimation}

The following two figures correspond to privacy budgets $\epsilon = 0.5$ and $\epsilon = 2$, completing the cases omitted from Section~\ref{sim: private mean estimation}.

\begin{figure}[htbp]
    \centering
    \includegraphics[width=0.75\linewidth]{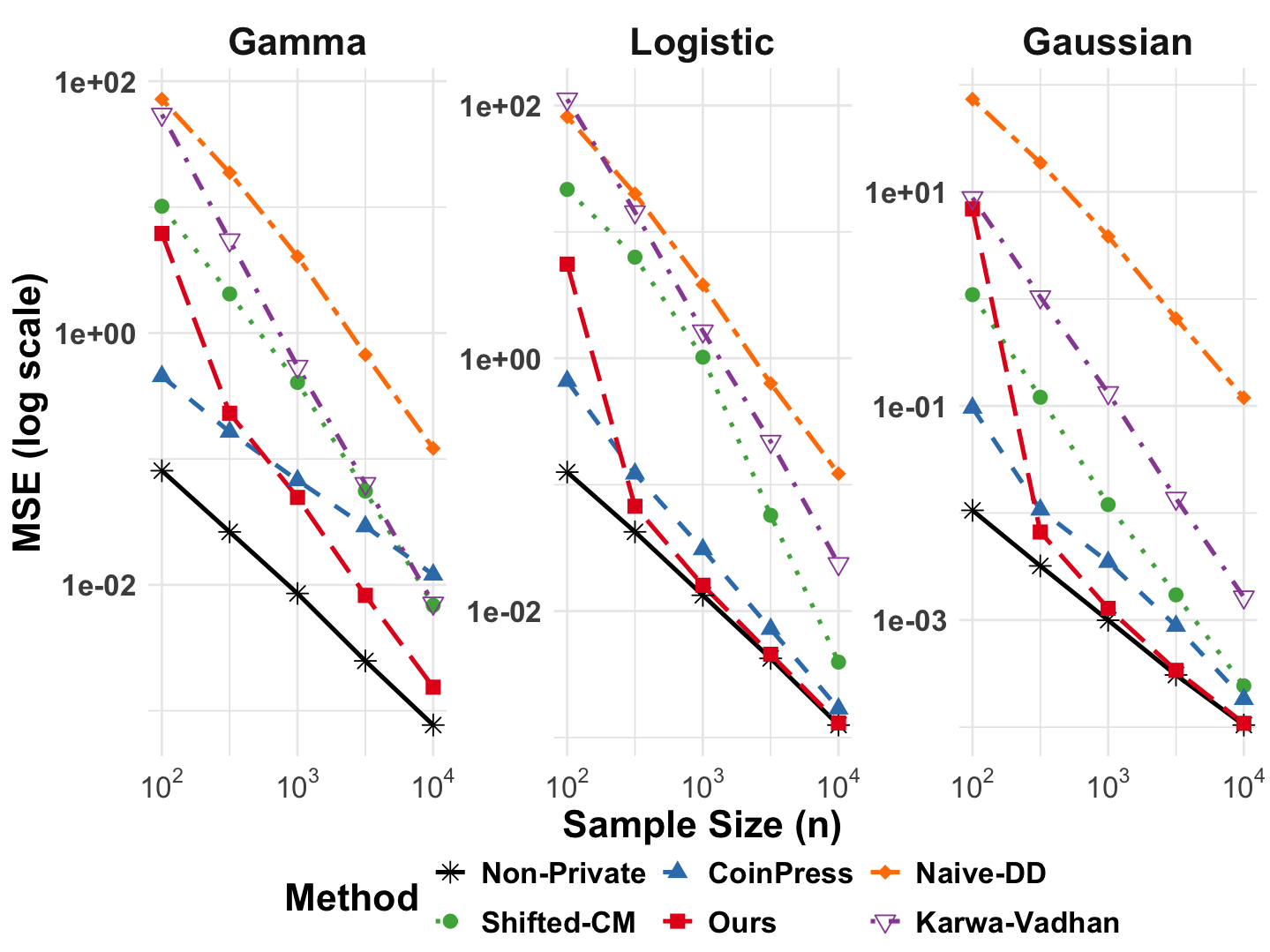}
    \caption{GDP mean estimation comparison ($\epsilon=0.5$)}
    \label{fig:GDP_MeanEst_Comparison_ep.5}
\end{figure}

Figure~\ref{fig:GDP_MeanEst_Comparison_ep.5} exhibits a trend similar to that in Figure~\ref{fig:GDP_MeanEst_Comparison_ep1}. However, when $\epsilon = 0.5$, the performance ranking is more pronounced: our method achieves the smallest mean squared error, while \texttt{Shifted-CM} incurs the largest.

\begin{figure}[htbp]
    \centering
    \includegraphics[width=0.75\linewidth]{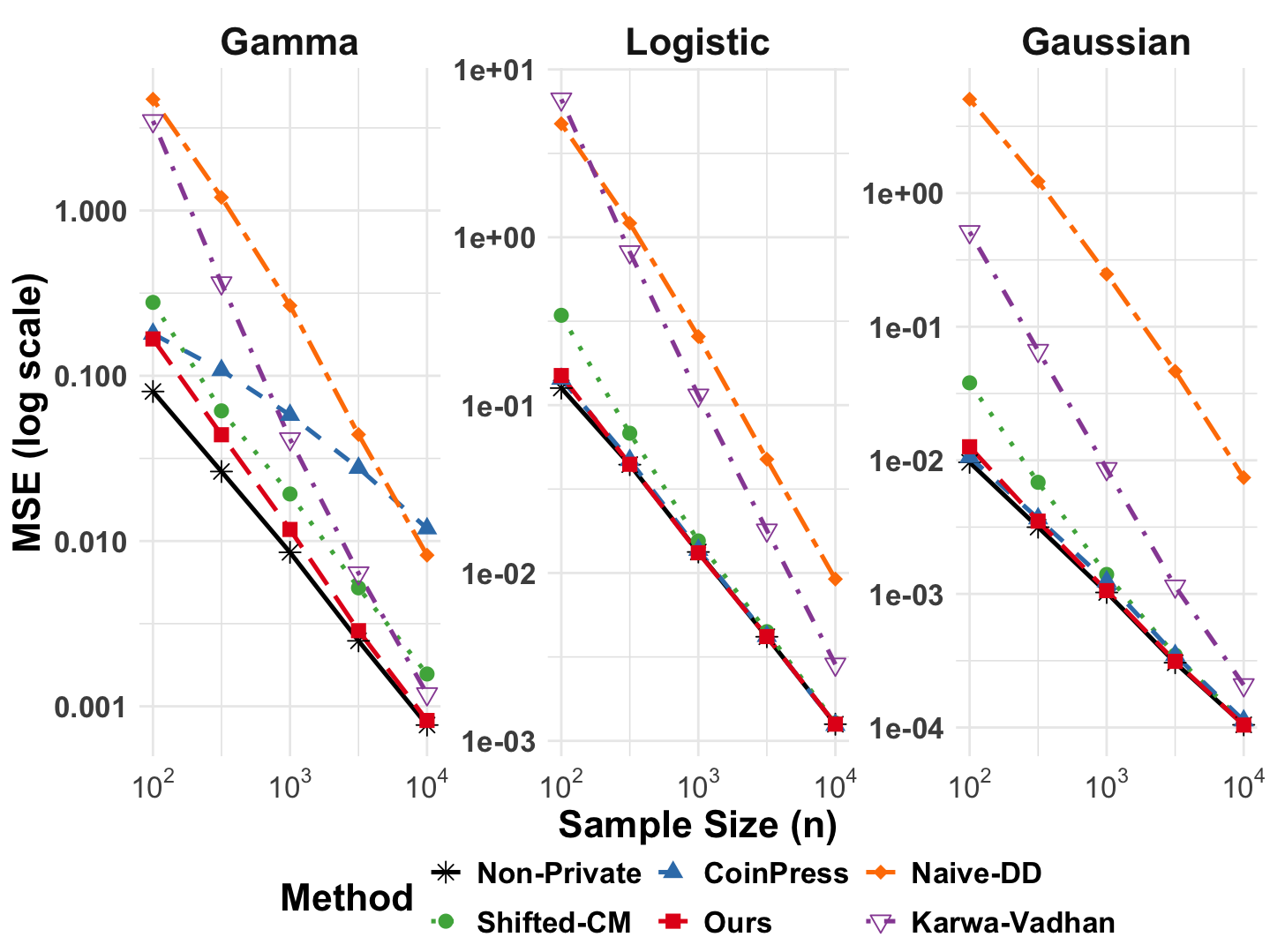}
    \caption{GDP mean estimation comparison ($\epsilon=2$)}
    \label{fig:GDP_MeanEst_Comparison_ep2}
\end{figure}

With $\epsilon = 2$, all four methods exhibit very similar slopes in Figure~\ref{fig:GDP_MeanEst_Comparison_ep2}. However, differences in the constant term again enable our method to achieve the smallest mean squared error.

\subsection{Complete Privacy Budgets for One-Sided Hypothesis}

The following two figures displays the cases with privacy budgets $\epsilon = 0.5$ and $\epsilon = 2$ that were omitted from Section~\ref{sim: one-sided hypothesis}.

\begin{figure}[htbp]
    \centering
    \includegraphics[width=0.75\linewidth]{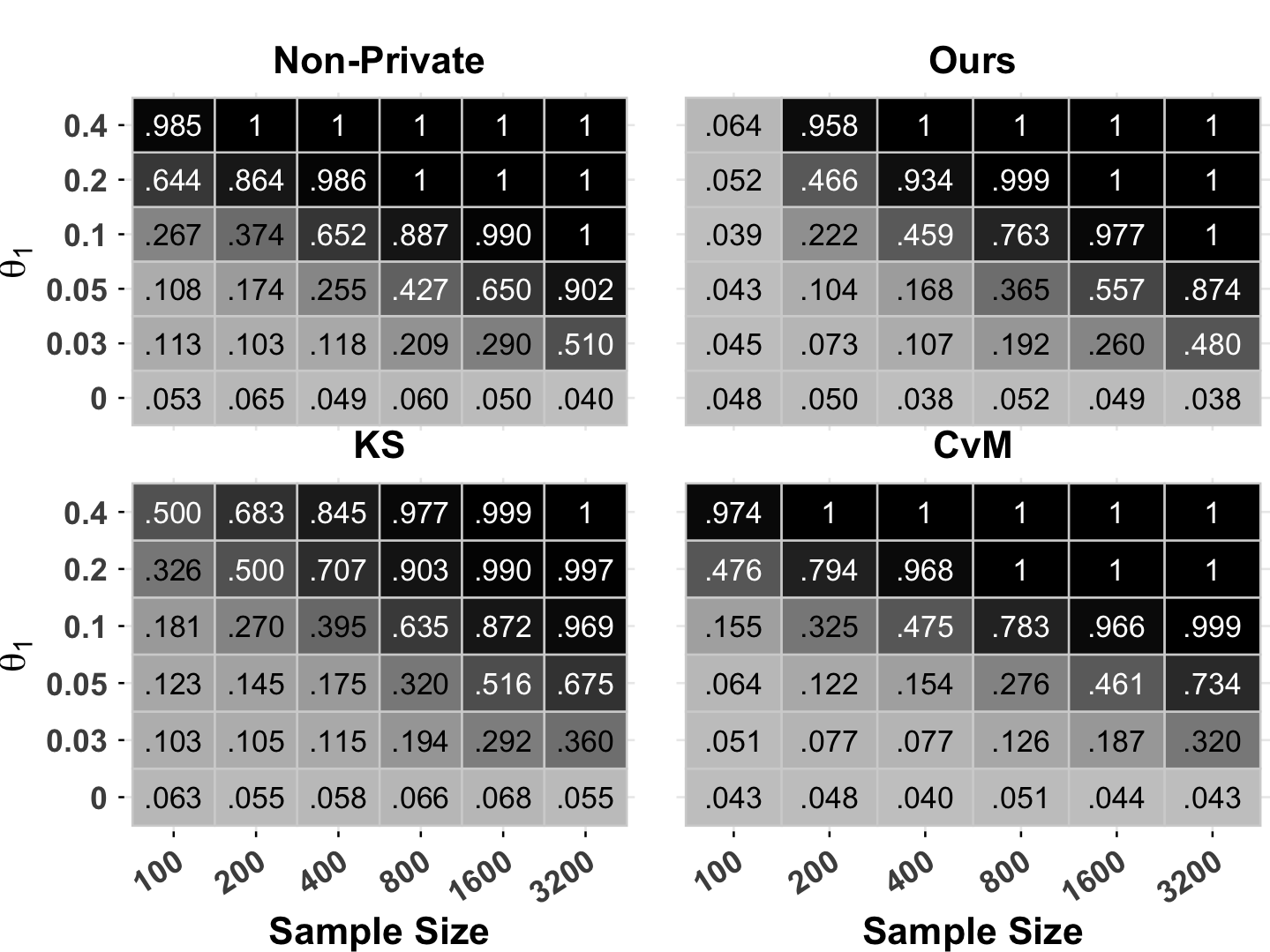}
    \caption{One-sided hypothesis under Gaussian data ($\epsilon=0.5$)}
    \label{fig:OneSidedHT_Cauchy_ep.5}
\end{figure}

When $\ep = 0.5$ and the sample size is 100, $\mu_{DP}(\underline{t};\ep)$ performs poorly in Figure~\ref{fig:OneSidedHT_Cauchy_ep.5}. This is because our method allocates part of the privacy budget to private quantile estimation. When the privacy budget is limited, the quantile estimator can be unstable, particularly for small sample sizes. However, $\mu_{DP}(\underline{t};\ep)$ improves rapidly and performs comparably to the other two methods at moderate sample sizes (200 and 400), and outperforms them when the sample size exceeds 800.

\begin{figure}[htbp]
    \centering
    \includegraphics[width=0.75\linewidth]{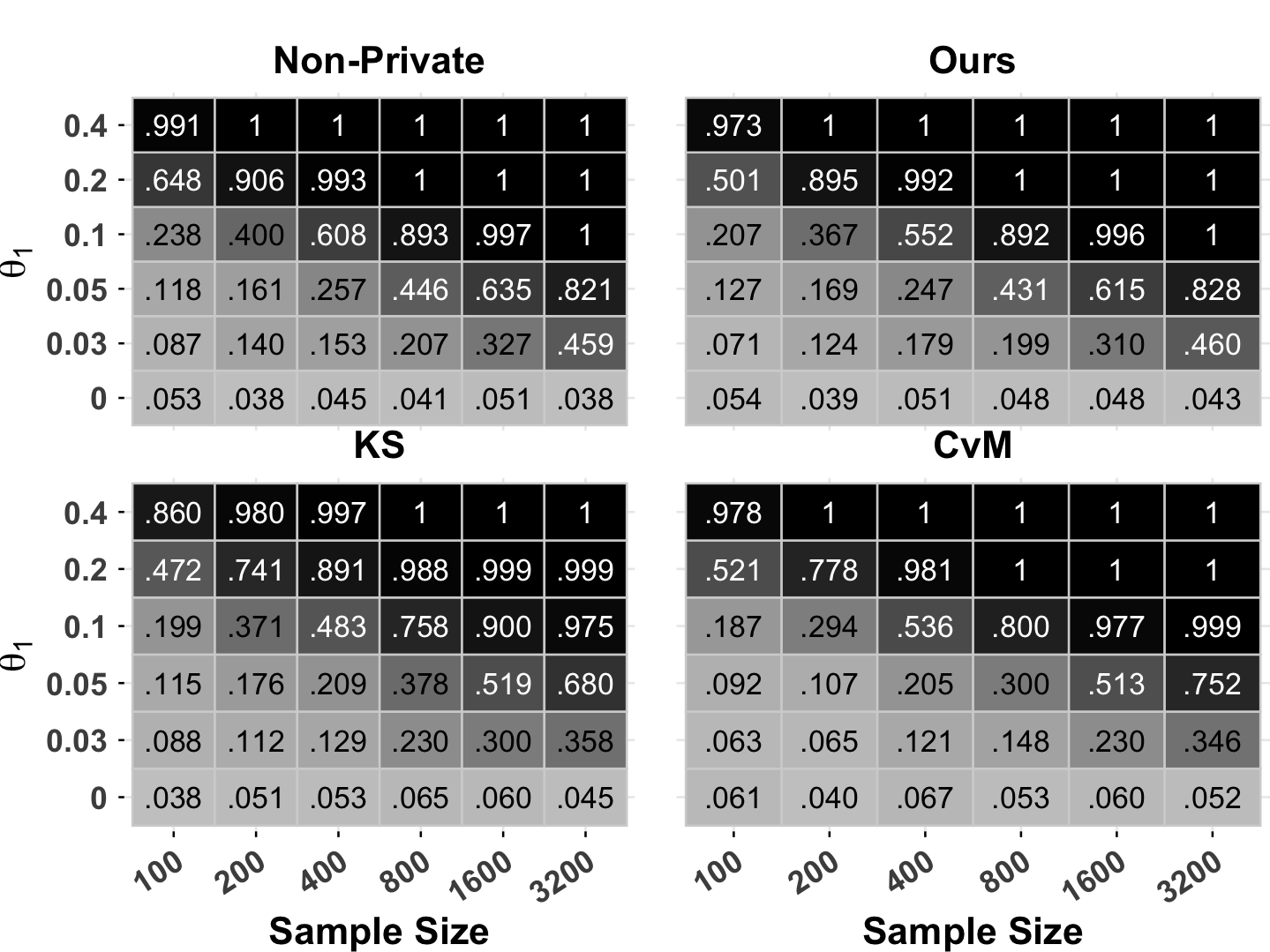}
    \caption{One-sided hypothesis under Gaussian data ($\epsilon=2$)}
    \label{fig:OneSidedHT_Cauchy_ep2}
\end{figure}

When $\epsilon = 2$, $\mu_{DP}(\underline{t};\ep)$ essentially dominates once the sample size exceeds 200 in Figure~\ref{fig:OneSidedHT_Cauchy_ep2}. More importantly, for alternatives with $\theta_1 \leq 2$---which correspond to more challenging cases for distinguishing the null from the alternative---$\mu_{DP}(\underline{t};\ep)$ is the most powerful among the three methods.

\subsection{Complete Privacy Budgets for Two-Sided Hypothesis}

The following two figures present the cases with privacy budgets $\epsilon = 0.5$ and $\epsilon = 2$ that were omitted from Section~\ref{sim: two-sided hypothesis}.

\begin{figure}[htbp]
    \centering
    \includegraphics[width=0.75\linewidth]{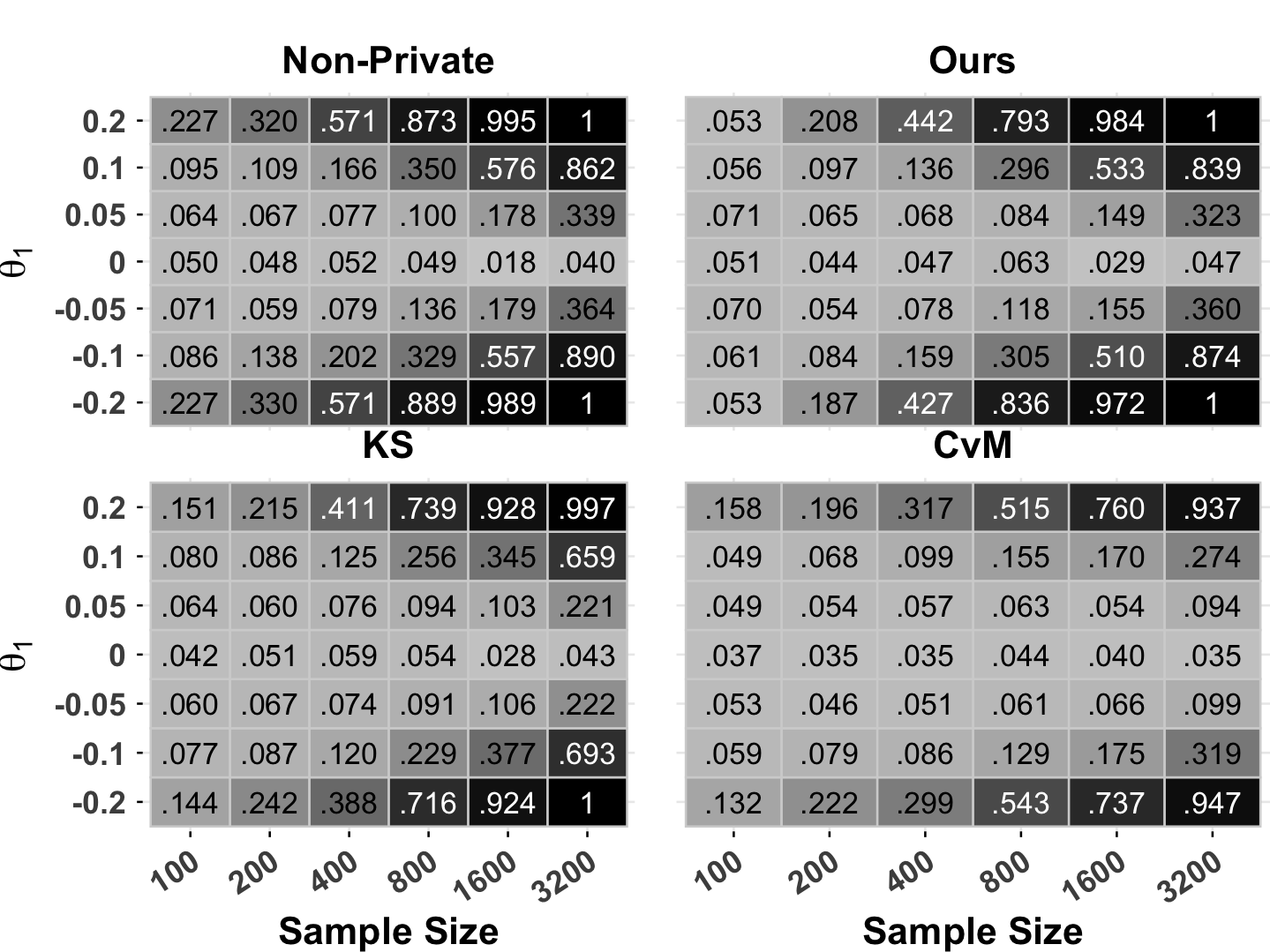}
    \caption{Two-sided hypothesis under Logistic data ($\epsilon=0.5$)}
    \label{fig:TwoSidedHT_Logistic_ep.5}
\end{figure}

While $\mu_{DP}(\underline{t};\ep)$ still suffers when the privacy budget is small ($\epsilon = 0.5$) and the sample size is 100, the other two methods also degrade under these conditions shown in Figure~\ref{fig:TwoSidedHT_Logistic_ep.5}. For alternatives with $-0.1 \leq \theta_1 \leq 0.1$, $\mu_{DP}(\underline{t};\ep)$ performs favorably relative to the the other two methods. This advantage becomes more pronounced across all values of $\theta_1$ for sample sizes exceeding 400.

\begin{figure}[htbp]
    \centering
    \includegraphics[width=0.75\linewidth]{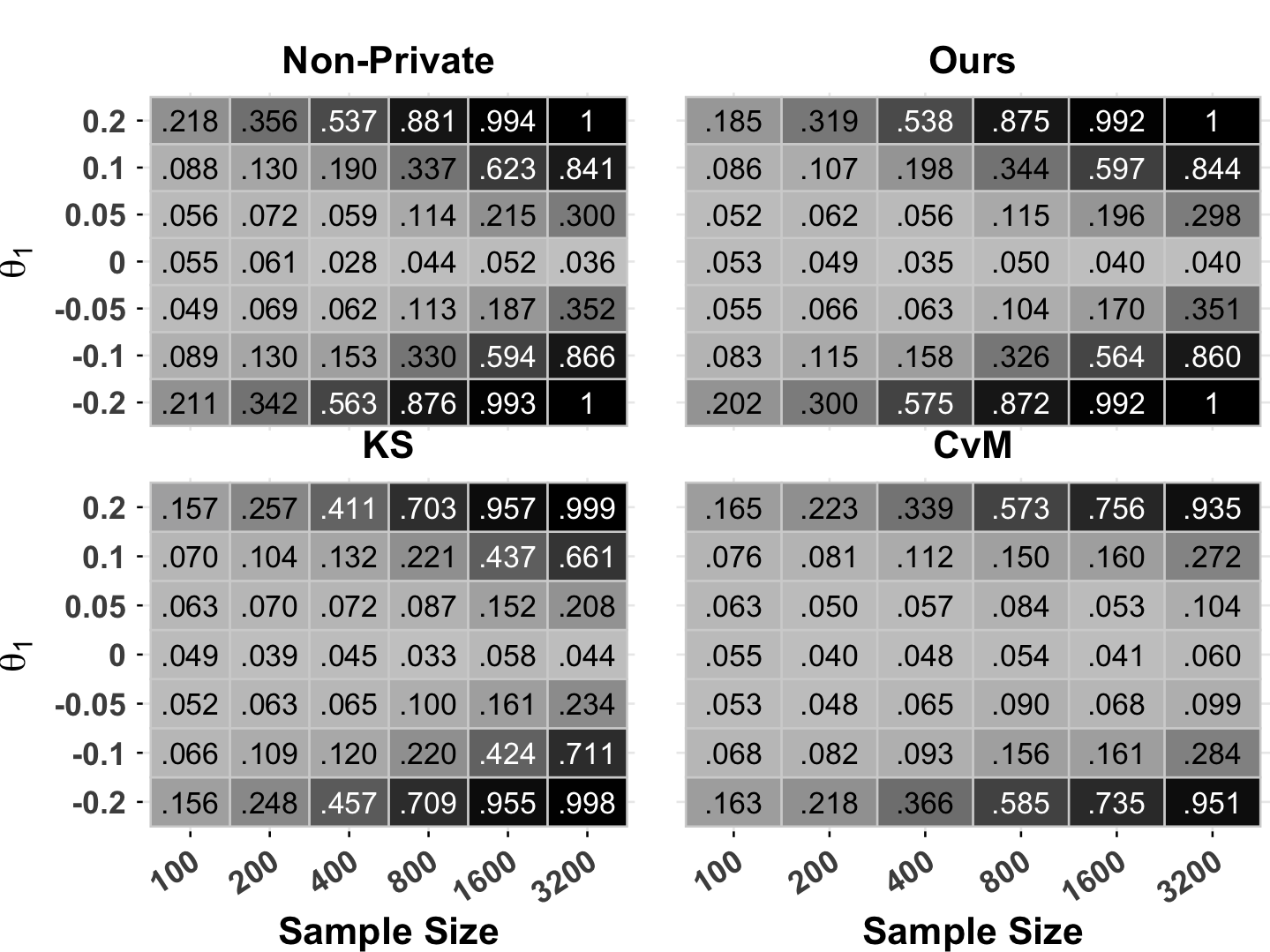}
    \caption{Two-sided hypothesis under Logistic data ($\epsilon=2$)}
    \label{fig:TwoSidedHT_Logistic_ep2}
\end{figure}

Consistent with the patterns observed in the previous sections, Figure~\ref{fig:TwoSidedHT_Logistic_ep2} demonstrates a clear advantage of $\mu_{DP}(\underline{t};\ep)$ over the other methods across all hypothesized values $\theta_1$ and sample sizes when $\epsilon = 2$.

\section{Proofs and Technical Details} \label{appendix: proof}

Before presenting the proofs in this paper, we first provide Example~\ref{eg: Huang's issues} as a counterexample to the claim in \citet{huang2021instance}.

\begin{example}[Noisy binary search with $\tau=1$] \label{eg: Huang's issues}
Let $D=\{1,\dots,10\}$, $n=10$, search range $[0,20]$, and target rank $m=3$ (true value $x_{(3)}=3$). 
Assume each noisy rank count satisfies $|\hat c - c|\le 1$ and equals $c-1$ in this example. 

\begin{itemize}
    \item \textbf{Iteration\ 1:} $(\text{left},\text{right},\text{mid})=(0,20,10)$; $c=10$, $\hat c=9$; since $\hat c>m$, set $\text{right}\leftarrow 10$.
    \item \textbf{Iteration\ 2:} $(0,10,5)$; $c=5$, $\hat c=4$; since $\hat c>m$, set $\text{right}\leftarrow 5$.
    \item \textbf{Iteration\ 3:} $(0,5,2)$; $c=2$, $\hat c=1$; since $\hat c<m$, set $\text{left}\leftarrow 3$.
    \item \textbf{Iteration\ 4:} $(3,5,4)$; $c=4$, $\hat c=3$; since $\hat c\le m$, set $\text{left}\leftarrow 5$.
    \item \textbf{Iteration\ 5:} $(5,5,5)$; loop ends and the algorithm outputs $v=5$.
\end{itemize}

The returned value has true rank $5$, yielding a rank error of $|5-3|=2$, which is $\tau + 1$; if there are $p$ data points taking the same value, then the rank error becomes $\tau + p$. These two facts were overlooked in \citet{huang2021instance}'s paper.
\end{example}

\subsection{Supplementary Material of Chapter~\ref{sec: private quantile estimation}: Additional Lemmas, Proofs, and Results}

Let the empirical CDF be $ F_n(\cdot): \mathbb{R} \to [0,1]$.  Recall that $w=(b-a)/2^T$ is discretization error and $[a,b]$ is partitioned into $2^T$ subintervals $B_k= [ a+ ( k- 1) w, a+ kw]$ with length $w$. For a dataset $\underline{x} =(x_1,\ldots,x_n)$, let 
\begin{align*}
N_k = \sum_{i=1}^n \mathbf{1}\{ x_i \in B_k \}   
\end{align*}
be the count of data points falling in each bin $B_k$. Recall that $q$ is the target quantile and $Z_1,\ldots,Z_T \overset{iid}{\sim}  N\left(0, \frac{T}{\ep^2}\right)$ Gaussian noise in Algorithm~\ref{alg: DP_quantile_selection}.

\begin{lemma} \label{lem: determinisitc property of GDP-Quant}
     Given a sample $\underline{x}=(x_1,\ldots,x_n)$ of size $n$. Let $l_t, m_t, r_t$ be the left, middle and right value for the $t$-th step in \texttt{GDP-Quant}. Let $D_t$ be a random variable such that $D_t=1$ if $q - F_n(m_t) >0$ and $D_t=-1$ if $q - F_n(m_t)<0$. Suppose
    \begin{enumerate}
        \item [(B)] $N_k\leq1$ for all $k=[2^T]$,
        \item [(C)] $-\infty < D_t Z_t < \tau$ for all $t \in [T]$.
    \end{enumerate} 
    Then the output of \texttt{GDP-Quant} has rank error less than or equal to $\tau + 1$.
\end{lemma}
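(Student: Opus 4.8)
The plan is to track the binary search invariant that, after every step $t$, the current interval $[l_t, r_t]$ contains the target sample quantile $x_{(\lceil nq \rceil)}$ (or, more precisely, a point of the right rank), provided the noisy decisions never err by more than the allotted slack. First I would set up notation: let $\hat c_t = \#\{j : a \le x_j \le m_t\} + Z_t = n F_n(m_t) + Z_t$ be the noisy count at step $t$, and recall the algorithm moves \texttt{left} $\leftarrow m_t$ when $\hat c_t < nq$ and \texttt{right} $\leftarrow m_t$ otherwise. The sign variable $D_t$ records whether the true (noiseless) count $nF_n(m_t)$ is below ($D_t = 1$) or above ($D_t = -1$) the target $nq$; condition (C), $-\infty < D_t Z_t < \tau$, says the noise never pushes a ``truly below'' count up by as much as $\tau$, and never pushes a ``truly above'' count down at all. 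I would then argue by induction that each recorded decision is ``rank-consistent'': whenever the algorithm sets \texttt{right} $\leftarrow m_t$, the rank of $m_t$ exceeds $nq - \tau$, and whenever it sets \texttt{left} $\leftarrow m_t$, the rank of $m_t$ is at most $nq$ (up to the bin-width correction). This uses (C) to convert the noisy comparison $\hat c_t \lessgtr nq$ into a one-sided statement about $nF_n(m_t)$.

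Next I would propagate these per-step facts to the terminal interval. After $T$ steps the interval has collapsed to width $w = (b-a)/2^T$, i.e. to a single bin $B_k$, and the output \texttt{mid} is an endpoint of that bin. The key point is that the left endpoint of the final interval was, at some step, the midpoint $m_s$ that triggered a \texttt{left}-update, so its rank is $\le nq$; and the right endpoint was, at some step, a midpoint $m_{s'}$ that triggered a \texttt{right}-update, so its rank is $> nq - \tau$. (One must handle the edge case where an endpoint is still $a$ or $b$, never having been a tested midpoint — there the rank is $0$ or $n$ and the bound is trivial.) Since the two endpoints lie in the same bin $B_k$, and by condition (B) $B_k$ contains at most one data point, the empirical CDF changes by at most $1/n$ across the bin; hence the rank of the returned \texttt{mid} differs from $nq$ by at most $\tau + 1$. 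This ``off by one bin, and a bin holds at most one point'' step is exactly the mechanism flagged in Example~\ref{eg: Huang's issues} and in the proof sketch of Lemma~\ref{lem: GDP-Quant}.

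The main obstacle I anticipate is bookkeeping the boundary/degenerate cases cleanly: making the induction hypothesis precise enough that it survives the asymmetry between the two update branches (the condition (C) is one-sided, $D_t Z_t < \tau$ with no matching lower bound, which is deliberate and must be used carefully so that ``truly above target'' midpoints are \emph{always} correctly sent to the \texttt{right}-branch), and handling steps where $F_n(m_t) = q$ exactly (so $D_t$ as defined is ambiguous) or where an interval endpoint is never tested. A secondary subtlety is that ``rank error'' must be defined as the gap between $nF_n(\text{output})$ and $nq$, or equivalently the index distance to $x_{(\lceil nq\rceil)}$; I would fix this definition at the outset so the final $\tau+1$ bound is unambiguous. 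Once the invariant is stated correctly, each step is a short deterministic argument, and the bin-width/Assumption-(B) input is what upgrades the raw ``$\tau$ from noise'' into the stated $\tau + 1$.
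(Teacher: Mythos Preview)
Your overall strategy is the paper's: both arguments rest on the per-step fact that whenever the algorithm sets $\texttt{right}\leftarrow m_t$ one has $nF_n(m_t) > nq - \tau$, and then invoke Assumption~(B) to absorb a one-bin discrepancy at termination. The paper packages this as a short proof by contradiction (assume $nF_n(m_T) < nq - (\tau+1)$, trace $r_T$ back to the step $t^*$ at which it first became a midpoint, and apply the per-step fact there); you propose to maintain it as a forward invariant. That difference is purely stylistic.

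However, you have misread condition~(C), and this produces a false invariant on the left branch. You say (C) ``never pushes a `truly above' count down at all'' and that therefore ``\,`truly above target' midpoints are \emph{always} correctly sent to the \texttt{right}-branch.'' This is not what (C) says. The lower bound $-\infty$ is vacuous; when $D_t=-1$ the condition $D_tZ_t<\tau$ reads $Z_t>-\tau$, i.e.\ the noise pushes down by \emph{at most} $\tau$, not by zero. Concretely, if $nF_n(m_t)=nq+1$ and $Z_t=-(\tau-1)$ with $\tau>2$, then (C) holds yet the noisy count is $nq+2-\tau<nq$ and the algorithm sets $\texttt{left}\leftarrow m_t$. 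So your claimed invariant ``$\texttt{left}\leftarrow m_t$ implies $nF_n(m_t)\le nq$'' is false under~(C), and the ``bin-width correction'' hedge does not save it.

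The repair is immediate: state the invariant symmetrically. When $\texttt{left}\leftarrow m_t$, one gets $nF_n(m_t) < nq+\tau$, the mirror of your correct right-branch bound. With that, the endpoints of the final bin satisfy $nF_n(l_T)<nq+\tau$ and $nF_n(r_T)>nq-\tau$ (handling the untouched-endpoint cases $l_T=a$ or $r_T=b$ trivially, as you note), and since~(B) forces $nF_n(r_T)-nF_n(l_T)\le 1$, the midpoint $m_T$ has $|nF_n(m_T)-nq|<\tau+1$. The paper's contradiction argument uses exactly this symmetric reading of~(C)---the written case uses $D_{t^*-1}=1$, and the suppressed ``without loss of generality'' case would use $D_{t^*-1}=-1$---so once you correct the left-branch invariant your direct argument goes through and is equivalent to theirs.
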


\begin{proof}
    We prove by contradiction. Suppose that the output $m_T$ has rank error larger than $\tau + 1$. Without loss of generality, we consider $nF_n (m_T) < nq - (\tau+1)$. By condition (B) and the fact that $r_T = m_T + \frac{w}{2}$, we have $nF_n(r_T) \leq nF_n(m_T)+1$, which gives
    \begin{align} \label{eq: result from contradiction}
        nF_n(r_T) < nq-\tau.
    \end{align}
    Now, let $t^* = \min \{t: r_t = r_T\}$ be the first step $t$ the right value reaches $r_T$, then 
    \begin{align} \label{eq: result from t^*}
        nF_n(m_{t^*-1}) + Z_{t^*-1} \geq nq,
    \end{align}
    which can be seen from line 8 in \texttt{GDP-Quant}.
    Since $m_{t^*-1} = r_{t^*}$ and $q -F_n(r_{t^*}) >0$ from \eqref{eq: result from contradiction}, $q -F_n(m_{t^*-1}) >0$. By Assumption (C), $D_t=1$ and hence $D_{t^*-1} Z_{t^*-1} = Z_{t^*-1} <\tau$ . Thus, \eqref{eq: result from t^*} gives
    \begin{align*}
        nF_n(r_T) = nF_n(m_{t^*-1}) \geq nq - Z_{t^*-1} > nq - \tau,
    \end{align*}
    which contradicts to \eqref{eq: result from contradiction}.
\end{proof}

Lemma~\ref{lem: determinisitc property of GDP-Quant}  establishes a deterministic rank error bound of $\tau+1$ on events (B) and (C), where (B) implies that $T$ is sufficiently large that the search grid is fine enough to distinguish every data points from a continuous distribution and (C) requires errors introduced by the Gaussian noise to remain below $\tau$.  Although $Z_t$ represents additive noise, the lemma analyzes the algorithm on a restricted event where this noise satisfies condition (C). 

In contrast, the following Lemma~\ref{lem: GDP-Quant} considers the full randomness of $Z_t$, under which condition (C) holds only with high probability. Our analysis shows in order that, at termination, our algorithm may be ``off by one bin'' from the target rank error $\tau$, which translates into a rank error of at most $\tau+1$ because each bin contains at most one data point. Then, we conclude that condition (C) holds with high probability. Consequently, Lemma~\ref{lem: GDP-Quant} yields a probabilistic guarantee.

\dpQuantile*

\begin{proof}   
    %Since $\#\{i: a \leq x_i \leq \text{mid} \text{ for all } i \in [n] \}$ is a count query, it has sensitivity $1$. So, the privacy guarantee for Gaussian DP is $\sqrt{T \left(\frac{T}{\ep^2}\right)^{-1}} = \ep$.
    We continue to use the notation defined in Lemma~\ref{lem: determinisitc property of GDP-Quant} for the proof. Recall that $D_t$ represents the underlying true direction in each step of \texttt{GDP-Quant}. That is, if $D_t =-1$, no matter how large of a positive $Z_t$ is added to $\#\{j: a \leq x_j \leq \text{mid}, \forall x_j \in \mathcal{D} \}$ at step $t$, \texttt{GDP-Quant} will move along the ground-truth direction. Therefore, we only need to bound $D_t Z_t$ instead of $|Z_t|$.
    
    To bound $D_t Z_t$, first recall that $Z_1,\ldots,Z_T \overset{iid}{\sim}  N\left(0, \sigma^2 = \frac{T}{\mu^2}\right)$, which are symmetric about zero. Let $Y_t = D_t Z_t$ and $\tau = \frac1\mu \sqrt{2T \log \frac{T}{\beta}}$. Then,
    \begin{align*} 
        P\left(\max_{1\leq i \leq T} Y_i > \tau \right) \leq  T \exp\left(\frac{-\tau^2}{2\sigma^2}\right) = \beta.
    \end{align*}
    Now, due to the discretization error, the output can have rank error $\tau + 1$.
\end{proof}

In Lemma~\ref{lem: GDP-Quant}, although $P\left(\max_{1\leq i \leq T} Y_i > \tau \right)$  can be tightly bounded using the product of Gaussian distribution functions, we adopt the maximal inequality because it provides a more straightforward interpretation of the resulting choice of $\tau$.

\subsection{Supplementary Material of Chapter~\ref{sec: private mean estimation}: Additional Lemmas, Proofs, and Results}

In this section, we develop a sequence of proofs, beginning with the simplest setting and building up into the setting required for Theorem~\ref{thm: GDP-MeanEst Utility}. Let
$$
S_{\underline{x}}=\{N_k\leq1 \text{ for all }k=1,\ldots,2^T\}
$$
be the event of all counts less than $1$ for all $2^T$ bins.

\begin{lemma} \label{lem: bounded domain assumed}
    Assume (A.1) wit a bounded domain $[a,b]$. Let number of steps $T = \lceil \log_2 [(b-a) n^\eta] \rceil$ with $\eta>0$. Then, probability $P(S_{\underline{x}})$  can be lower bounded by
    $$
    1-\binom{n}{2} \frac{M}{n^\eta}.
    $$
\end{lemma}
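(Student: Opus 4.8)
The statement is a second-moment/union-bound estimate, and the engine is the observation that the choice of $T$ forces the bin width to be at most $n^{-\eta}$. First I would record this: since $T = \lceil \log_2[(b-a)n^\eta]\rceil$, we have $2^T \ge (b-a)n^\eta$, hence the bin width satisfies $w = (b-a)/2^T \le n^{-\eta}$. Because the bins $B_k = [a+(k-1)w,\;a+kw]$, $k \in [2^T]$, partition $[a,b]$ and (A.1) places the whole distribution on $[a,b]$, the bin probabilities $\mu_k := P(X_i \in B_k) = \int_{B_k} f$ sum to one, and each is bounded by $\mu_k \le M w \le M n^{-\eta}$ by the density bound.

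Next I would pass to the complement event $S_{\underline{x}}^c$, which is ``some bin contains at least two sample points.'' Writing this as $\bigcup_{i<j}\{x_i,x_j \text{ lie in a common bin}\}$ and applying the union bound over the $\binom{n}{2}$ unordered pairs, it suffices to bound, for a fixed pair $i \ne j$,
\begin{align*}
P(x_i, x_j \text{ in a common } B_k) \;=\; \sum_{k=1}^{2^T} P(x_i \in B_k)\,P(x_j \in B_k) \;=\; \sum_{k=1}^{2^T} \mu_k^2,
\end{align*}
using independence and the identical distribution of the $x_i$. Then I would bound $\sum_k \mu_k^2 \le \big(\max_k \mu_k\big)\sum_k \mu_k \le (M n^{-\eta})\cdot 1 = M n^{-\eta}$. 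Summing over the $\binom{n}{2}$ pairs yields $P(S_{\underline{x}}^c) \le \binom{n}{2} M n^{-\eta}$, i.e.\ $P(S_{\underline{x}}) \ge 1 - \binom{n}{2}\tfrac{M}{n^\eta}$, which is the claim.

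There is essentially no hard step here; the only points requiring a word of care are (i) that adjacent bins share an endpoint, but continuity of $F$ under (A.1) makes the event $\{x_i = a+kw\}$ null, so no pair is double-counted and the partition identity $\sum_k \mu_k = 1$ is exact; and (ii) keeping track of the ceiling in the definition of $T$ so that the inequality $2^T \ge (b-a)n^\eta$—and hence $w \le n^{-\eta}$—is clean. Everything else is the routine first-moment bound on the number of ``colliding'' pairs.
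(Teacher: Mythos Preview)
Your proposal is correct and follows essentially the same argument as the paper: bound the bin width by $n^{-\eta}$ via the ceiling, write the complement of $S_{\underline{x}}$ as a union of pairwise collision events, and control $\sum_k \mu_k^2 \le (\max_k \mu_k)\sum_k \mu_k \le M n^{-\eta}$. The paper phrases the union bound as Markov's inequality applied to the number of colliding pairs, but this is the same step; your handling of the ceiling and the endpoint null sets is, if anything, slightly more careful than the paper's.
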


\begin{proof}
    Consider the probability of a data point falling in bin $B_k$,
    $$
    p_i \coloneqq P(X \in B_i) = \int_{B_i} f(x) dx,
    $$
    for all $1\leq k \leq 2^T$. Note that $\sum_{i=1}^{2^T} p_i = 1$ and the bin width $w = \frac{b-a}{2^T} \leq \frac{1}{ n^\eta}$.  Consider events $E_{ij}=\{ \text{points }i,j \text{ are in the same bin}\}$ and the number of colliding pairs can be defined as $E := \sum_{1\leq i<j \leq n} 1_{E_{ij}}$. Then, the probability of no collision is 
    \begin{align}
    P(S_{\underline{x}}) = P(E=0)= 1 - P(E \geq 1) 
    &\geq 1- \ex E \label{equ: markov} \\ 
    &= 1 - \sum_{1\leq i < j \leq n} P(E_{ij}) \nonumber \\
    &= 1 - \sum_{1\leq i < j \leq n} \sum_{k=1}^{2^T} P(X_i \in B_k) P(X_j \in B_k) \label{equ: independence} \\
    &= 1 - \sum_{1\leq i < j \leq n} \sum_{k=1}^{2^T} p_k^2 \nonumber \\
    &\geq 1-  \sum_{1\leq i < j \leq n}  \frac{(b-a)M}{2^T} \sum_{k=1}^{2^T} p_k  \label{equ: two bounds on pk}\\
    &\geq 1 - \binom{n}{2} \frac{M}{n^\eta}.\label{equ: 2^T is roundup from n^alpha}
    \end{align}

In the above, \eqref{equ: markov} is guaranteed by Markov's inequality, \eqref{equ: independence} holds as $X_i, X_j$ are independent for $i\neq j$, \eqref{equ: two bounds on pk} uses the fact that $p_k = \int_{B_k} f(x) dx \leq \frac{b-a}{2^T} M$, and \eqref{equ: 2^T is roundup from n^alpha} uses the fact that $T \geq \log (b-a) n^\eta$.
\end{proof}

Lemma~\ref{lem: bounded domain assumed} shows that a sufficiently large number of steps $T$ is required for the lower bound to converge to 1 as $\eta > 2$ and $n \to \infty$.

\begin{remark} 
    If the distribution in (A.1) is Uniform, then Lemma~\ref{lem: bounded domain assumed} reduces to the classic birthday problem. The probability of $n$ points lying in different bins are $P_{n,n^\eta} = \prod_{k=0}^{n-1}(1-\frac{k}{n^\eta})$, which converges to $1$ for $\eta>2$ as $n \to \infty$. (If $\eta = 2$, then the probability converges to $e^{-1/2}$.) Both cases can be seen from the Poisson approximation $P_{n,n^\eta} \approx e^{-\lambda}$ with rate $\lambda = \ex[E] = \binom{n}{2} / n^\eta$.
\end{remark}

\begin{lemma} \label{lem: colliding prob under conti dist}
    Assume (A.1). Let number of steps $T = \lceil \log_2 [(b-a) n^\eta] \rceil$ with $\eta>0$. Let search range $[a,b]$ be bounded and $x_i^c = \min \{\max\{x_i, a\}, b\}$, then probability $P(S_{\underline{x^c}})$ can be lower bounded by
    $$
    1-\binom{n}{2} \left( \frac{M}{n^\eta} +p_a^2+p_b^2 \right),
    $$
    where $p_{a} = P(X_1 < a+w)$ and $p_{b} = P(X_1 > b-w)$.
\end{lemma}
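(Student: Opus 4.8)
\textbf{Proof proposal for Lemma~\ref{lem: colliding prob under conti dist}.}

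The plan is to reduce the clamped-data collision event to the unclamped one studied in Lemma~\ref{lem: bounded domain assumed}, with a correction for the two boundary bins $B_1=[a,a+w]$ and $B_{2^T}=[b-w,b]$. After clamping $x_i^c=\min\{\max\{x_i,a\},b\}$, the data take values in $[a,b]$, so the bins $B_1,\dots,B_{2^T}$ form an exact partition of the support of $\underline{x^c}$. The subtlety is that clamping piles probability mass onto the endpoints: every original observation with $x_i<a+w$ (including all $x_i\le a$) lands in $B_1$, and symmetrically for $B_{2^T}$. Thus collisions are governed by the collision probabilities $\tilde p_k := P(X^c\in B_k)$, which satisfy $\tilde p_k = p_k$ for $2\le k\le 2^T-1$ but $\tilde p_1 \le p_a$ and $\tilde p_{2^T}\le p_b$, where $p_a=P(X_1<a+w)$, $p_b=P(X_1>b-w)$ — note $\tilde p_1$ is in fact \emph{equal} to $P(X_1<a+w)$ up to the convention on the shared endpoint $a+w$, which has probability zero under (A.1), so $\tilde p_1=p_a$ and $\tilde p_{2^T}=p_b$.

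Concretely, I would mirror the chain of inequalities \eqref{equ: markov}--\eqref{equ: 2^T is roundup from n^alpha}. Let $E=\sum_{1\le i<j\le n}\mathbf{1}_{E_{ij}}$ count colliding pairs among the clamped data. By Markov's inequality, $P(S_{\underline{x^c}}) = P(E=0) \ge 1-\ex E = 1 - \binom{n}{2}\sum_{k=1}^{2^T}\tilde p_k^2$, using i.i.d.\ of the $X_i$ and hence of the $X_i^c$. Now split the sum: $\sum_{k=1}^{2^T}\tilde p_k^2 = \tilde p_1^2 + \tilde p_{2^T}^2 + \sum_{k=2}^{2^T-1} p_k^2 \le p_a^2 + p_b^2 + \sum_{k=1}^{2^T} p_k^2$. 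For the interior sum, reuse exactly the bound from Lemma~\ref{lem: bounded domain assumed}: $p_k \le \frac{b-a}{2^T}M \le \frac{M}{n^\eta}$ since $T\ge\log_2[(b-a)n^\eta]$, so $\sum_{k=1}^{2^T}p_k^2 \le \frac{M}{n^\eta}\sum_k p_k = \frac{M}{n^\eta}$. Combining gives $P(S_{\underline{x^c}}) \ge 1 - \binom{n}{2}\left(\frac{M}{n^\eta} + p_a^2 + p_b^2\right)$, as claimed.

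The only point requiring care — and the closest thing to an obstacle — is handling the endpoint bins correctly: I must argue that the clamped mass in $B_1$ is exactly $P(X_1<a+w)$ (not merely bounded by the unclamped interior-bin bound, which would fail since this mass need not be $O(1/n^\eta)$), and that no double-counting occurs at the shared boundary $a+w$. Both are immediate from (A.1): the density is bounded, hence atomless, so single points contribute zero probability, and the clamping map sends exactly $\{X_i<a+w\}$ into $B_1$. Everything else is a verbatim repetition of the argument for Lemma~\ref{lem: bounded domain assumed}, so no new estimates are needed.
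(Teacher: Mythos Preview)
Your proof is correct and lands on the same bound as the paper, but the organization is slightly different. The paper separates the failure event into three pieces---interior collisions (reusing the count $E$ from Lemma~\ref{lem: bounded domain assumed}), at least two points falling below $a+w$ (an event $A$ with $P(A\ge 1)\le\binom{n}{2}p_a^2$), and symmetrically an event $B$ for the right tail---and then applies a union bound $P(E\ge 1)+P(A\ge 1)+P(B\ge 1)$. You instead apply a single Markov inequality to the total number of colliding pairs of the \emph{clamped} data, identify $\tilde p_1=p_a$, $\tilde p_{2^T}=p_b$, and $\tilde p_k=p_k$ for interior bins, and bound $\sum_k\tilde p_k^2$ directly. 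Your route is a bit more streamlined (one expectation instead of three separate events plus a union bound) and makes the role of clamping on the boundary bins more explicit; the paper's decomposition, on the other hand, keeps the interior piece literally as a black-box invocation of Lemma~\ref{lem: bounded domain assumed}. Both arguments rest on the same two ingredients---Markov on colliding pairs and the density bound $p_k\le Mw\le M/n^\eta$ for interior bins---so the difference is purely organizational.
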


\begin{proof}
    Since $X_1,\ldots, X_n$ are i.i.d. with left tail probability $p_{a} = P(X_1 < a+w)$, $C_a \sim \text{Binomial}(n,p_a)$, where $C_a = |\{i: X_i < a+w\}|$. As the probability of the event of $\{C_a\geq 2\}$:
    \begin{align} \label{eq: probability of Ca}
    P(C_a\geq2) =  1 - P(C_a=0) - P(C_a=1) = 1-(1-p_a)^n - n p_a (1-p_a)^{n-1} 
    \end{align}
    is hard to factorize, we instead consider events $A_{ij}=\{ \text{points } i,j \text{ are below } a+w\}$ and the number of colliding pairs can be defined as $A := \sum_{1\leq i<j \leq n} 1_{A_{ij}}$. Then, \eqref{eq: probability of Ca} can be upper bounded as follows:
    \begin{align}
        P(C_a\geq2)
        &= P(A \geq 1) \label{equ: more than one point on the tail means at least one colliding pair}\\
        &\leq \ex A \label{equ: markov1}\\
        &= \sum_{1\leq i<j \leq n} P(A_{ij}) \nonumber\\
        &= \sum_{1\leq i<j \leq n} P(X_i< a+w)  P(X_j< a+w) \label{equ: independence1}\\
        &= \binom{n}{2} p_a^2. \nonumber
    \end{align}
    The first equality \eqref{equ: more than one point on the tail means at least one colliding pair} holds because having no less than two points on the tail is equivalent to having at least one colliding pair over the same tail region,  \eqref{equ: markov1} is guaranteed by the Markov's inequality, and \eqref{equ: independence1} holds as $X_i, X_j$ are independent for $i\neq j$.
    
    Similarly, we can derive a similar result for event $B := \sum_{1\leq i<j \leq n} 1_{B_{ij}}$, where $B_{ij}=\{ \text{points } i,j \text{ are above } b-w\}$ and $p_b = P(X>b-w)$. Then, the probability of no collision within $[a,b]$ and no more than two points outside $[a+w,b-w]$ is 
    \begin{align*}
    1-P(E\geq1,A\geq1,B\geq1) 
    &\geq 1 - P(E\geq1) - P(A\geq1) - P(B\geq1) \\
    &= 1 - \binom{n}{2} \left( \frac{M}{n^\eta} + p_a^2+p_b^2 \right).
    \end{align*}    
\end{proof}

Lemma~\ref{lem: colliding prob under conti dist} relaxes the bounded-domain assumption by enforcing a bounded search range through clamping the data, and therefore yields a slightly smaller lower bound than Lemma~\ref{lem: bounded domain assumed}.

\begin{lemma} \label{lem: subexpoenntial assumed}
    Assume (A.1) and (A.2). Let number of steps $T = \lceil \log_2 [(b-a) n^\eta] \rceil$ with $\eta>0$. Let search range $[a,b] = [\mu - \frac{s\eta}{2}\log n, \mu + \frac{s\eta}{2}\log n]$. Let $x_i^c = \min \{\max\{x_i, a\}, b\}$, then probability of  $P(S_{\underline{x^c}})$ can be lower bounded by
    $$
    1 - \frac{1}{n^{\eta-2}} \left( \frac{M}{2}+1 \right).
    $$
\end{lemma}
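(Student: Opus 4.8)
The plan is to obtain this as a direct specialization of Lemma~\ref{lem: colliding prob under conti dist}. For an arbitrary bounded search range $[a,b]$ and clamped data $x_i^c=\min\{\max\{x_i,a\},b\}$, that lemma already gives
\[
P(S_{\underline{x^c}})\ \ge\ 1-\binom{n}{2}\Bigl(\tfrac{M}{n^\eta}+p_a^2+p_b^2\Bigr),\qquad p_a=P(X_1<a+w),\quad p_b=P(X_1>b-w),
\]
with $w=(b-a)/2^T$. So the whole job is to (i) handle the $M$-term and (ii) bound the two boundary probabilities $p_a,p_b$ for the specific choice $[a,b]=[\mu-\tfrac{s\eta}{2}\log n,\ \mu+\tfrac{s\eta}{2}\log n]$, using only assumption (A.2).

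For (i) I would use $\binom{n}{2}\le n^2/2$, so that $\binom{n}{2}\tfrac{M}{n^\eta}\le\tfrac{M}{2}\,n^{2-\eta}$, which is exactly the $M/2$ appearing in the target bound. For (ii), the point to exploit is that $T=\lceil\log_2[(b-a)n^\eta]\rceil$ forces $2^T\ge(b-a)n^\eta$, hence $w\le n^{-\eta}$; thus the shift from $a$ to $a+w$ is negligible and $\mu-(a+w)>0$ once $n$ is moderately large. Then (A.2) yields $p_a\le 2\exp\!\bigl(-(\mu-a-w)/s\bigr)=2\,n^{-\eta/2}e^{w/s}$, and symmetrically for $p_b$; since $e^{w/s}\to 1$ these are $O(n^{-\eta/2})$, so $\binom{n}{2}(p_a^2+p_b^2)=O(n^{2-\eta})$. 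Combining with (i) gives $P(S_{\underline{x^c}})\ge 1-\tfrac{1}{n^{\eta-2}}\bigl(\tfrac{M}{2}+1\bigr)$ once the absolute constant in the tail term has been absorbed.

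The conceptual content is light; the work is entirely in the constant-chasing for the tail term, and I expect that to be the one mildly delicate step. Two things conspire to leave a constant strictly larger than $1$ on the boundary term if one is careless: $p_a$ is the mass of $\{X_1<a+w\}$ rather than $\{X_1\le a\}$, and the bound in (A.2) is two-sided (the factor $2$). Both are harmless — $w=O(n^{-\eta})$ vanishes, and the leftover multiplicative constant is absorbed by restricting to $n$ past a fixed threshold (equivalently, by taking the search-range constant slightly larger, which only sharpens the downstream guarantee in Theorem~\ref{thm: GDP-MeanEst Utility}). After that, the claimed inequality is just the sum of the two displayed bounds, and one recognizes that it is informative precisely in the regime $\eta>2$, $n\to\infty$, as the following remark notes.
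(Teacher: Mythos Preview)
Your approach is the same as the paper's: invoke Lemma~\ref{lem: colliding prob under conti dist}, then use $\binom{n}{2}\le n^2/2$ on the $M$-term and the subexponential tail from (A.2) on $p_a,p_b$ with $c=\tfrac{s\eta}{2}\log n$. The paper in fact handles the $w$-shift by silently redefining $a+w=\mu-c$ (rather than $a=\mu-c$) and then asserts $p_a=n^{-\eta/2}$, so your treatment of the shift via $e^{w/s}\to 1$ is more honest than the paper's.

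One small correction to your constant-chasing: the leftover factor coming from the two-sided bound in (A.2) (the ``$2$'' in $P(|X-\mu|\ge c)\le 2e^{-c/s}$) cannot be ``absorbed by restricting to $n$ past a fixed threshold,'' because it multiplies a term with the \emph{same} $n^{2-\eta}$ scaling as the target --- taking $n$ large does not shrink the ratio. Your alternative (enlarge the search-range coefficient) does fix it, but changes the lemma's hypotheses. The paper simply drops the factor of $2$ when passing from the two-sided bound to the one-sided tail, which is a sloppiness shared with your proposal; as you note, the exact constant is immaterial for Theorem~\ref{thm: GDP-MeanEst Utility}, where only the $O(n^{2-\eta})$ order is used.
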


\begin{proof}
    By the definition of a subexponential distribution, for all $c>0$ 
    $$
    P(|X_1 -\mu|\geq c) = P(X_1 -\mu \leq -c, X_1 -\mu \geq c) \leq 2\exp(-c/s)
    $$ 
    for the scale parameter $s$. Let $\exp(-c/s) = n^{-\eta/2}$ to match the tail probability $p_a, \, p_b$ defined in Lemma~\ref{lem: colliding prob under conti dist}. We solve for $c(n)= \frac{s \eta}{2}\log n$, which is a function of $n$. Then, to have $p_{a(n)} = p_{b(n)} = n^{-\eta/2}$, the search range $[a(n), b(n)]$ should satisfy $a(n)+w = \mu-c(n)$ and $b(n)-w = \mu+c(n)$. Now, by the result of Lemma~\ref{lem: colliding prob under conti dist}, 
    \begin{align*}
        1-P(E\geq1,A\geq1,B\geq1)
        &\geq  1 - \binom{n}{2} \left( \frac{M}{n^\eta} + p_{a(n)}^2+p_{b(n)}^2 \right)  \\
        &= 1 - \binom{n}{2} \frac{M+2}{n^\eta} \\
        &\geq 1 - \frac{1}{n^{\eta-2}} \left( \frac{M}{2}+1 \right).
    \end{align*}
\end{proof}

Starting from Lemma~\ref{lem: subexpoenntial assumed}, Assumption (A.2) will be included in all subsequent lemma and theorem statements. To accommodate the subexponential data, the search range must expand logarithmically to preserve the lower bound established in Lemma~\ref{lem: colliding prob under conti dist}.

\begin{lemma} \label{lem: GDP-Quant small rank error by sufficient bins with high probability}
    Assume (A.1) and (A.2). Let number of steps $T = \lceil \log_2 [(b-a) n^\eta] \rceil$ with $\eta>0$. Let search range $[a,b] = [\mu - \frac{s\eta}{2}\log n, \mu + \frac{s\eta}{2}\log n]$. Let $\epsilon>0, T \geq 1$ and $\beta \in (0,1)$ such that $\tau = \frac1\ep \sqrt{2T \log \frac{T}{\beta}}$. Let $x_i^c = \min \{\max\{x_i, a\}, b\}$, then \texttt{GDP-Quant} satisfies $\epsilon$-GDP and returns a quantile with rank error less than $\tau + 1$ and  with probability at least 
    $$
    (1-\beta)\left( 1 - \frac{1}{n^{\eta-2}} \left( \frac{M}{2}+1 \right) \right).
    $$
\end{lemma}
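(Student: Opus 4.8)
The plan is to combine the two probabilistic ingredients already in hand—the deterministic analysis of \texttt{GDP-Quant} conditioned on the ``good'' noise event, and the high-probability bound for the binning event—via a union bound. Concretely, I would work on the intersection of two events: the event $S_{\underline{x^c}}$ that every bin $B_k$ contains at most one of the clamped data points (condition (B) of Lemma~\ref{lem: determinisitc property of GDP-Quant}), and the event $E^c$ that $D_t Z_t < \tau$ for every $t \in [T]$ (condition (C) of the same lemma). On this intersection, Lemma~\ref{lem: determinisitc property of GDP-Quant} gives the deterministic conclusion that \texttt{GDP-Quant} returns a quantile with rank error at most $\tau+1$.

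The GDP guarantee is the easy part: it is inherited directly from Lemma~\ref{lem: GDP-Quant} (equivalently, from the composition of $T$ Gaussian mechanisms each run with noise scale $\sqrt{T}/\ep$), and does not depend on the choice of search range or on the data distribution, so I would simply cite it. For the failure probability, I would first invoke Lemma~\ref{lem: subexpoenntial assumed}, whose hypotheses (A.1), (A.2), the stated number of steps $T = \lceil \log_2[(b-a)n^\eta]\rceil$, and the logarithmically-growing search range $[a,b] = [\mu - \tfrac{s\eta}{2}\log n,\ \mu + \tfrac{s\eta}{2}\log n]$ match exactly what is assumed here; this gives $P(S_{\underline{x^c}}) \ge 1 - n^{-(\eta-2)}(M/2 + 1)$. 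Next, the noise event: by the maximal-inequality computation in the proof of Lemma~\ref{lem: GDP-Quant} (display~\eqref{eq: maximal ineq on noise}), with $\tau = \tfrac1\ep\sqrt{2T\log\tfrac{T}{\beta}}$ we have $P(E) \le \beta$, hence $P(E^c) \ge 1-\beta$. Since the Gaussian noise $Z_1,\dots,Z_T$ is drawn independently of the data, the binning event and the noise event are independent, so
$$
P\bigl(S_{\underline{x^c}} \cap E^c\bigr) = P(S_{\underline{x^c}})\, P(E^c) \ge (1-\beta)\left(1 - \frac{1}{n^{\eta-2}}\left(\frac{M}{2}+1\right)\right),
$$
and on this event the rank error is at most $\tau+1$ by Lemma~\ref{lem: determinisitc property of GDP-Quant}. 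This yields exactly the claimed bound.

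The main subtlety—though not really an obstacle given the groundwork—is making sure the two events are genuinely combined correctly: condition (C) in Lemma~\ref{lem: determinisitc property of GDP-Quant} is phrased in terms of the signed quantities $D_t Z_t$ rather than $|Z_t|$, and the argument in the proof of Lemma~\ref{lem: GDP-Quant} already explains why bounding $D_t Z_t$ (rather than $|Z_t|$) suffices, using the fact that $D_t$ is the ``ground-truth direction'' at step $t$. I would reiterate that $D_t$ depends on the data (through $F_n$ and $m_t$) but not on the realized noise, so conditioning on $S_{\underline{x^c}}$ does not disturb the independence between the sign pattern $(D_1,\dots,D_T)$ and the Gaussian vector $(Z_1,\dots,Z_T)$; the maximal-inequality bound on $P(E)$ holds uniformly over any fixed sign pattern and hence after conditioning on the data. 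With that observation in place, the proof is a short three-line assembly: cite Proposition~\ref{prop:gdp_mechanism}/Lemma~\ref{lem: GDP-Quant} for $\ep$-GDP, cite Lemma~\ref{lem: subexpoenntial assumed} for $P(S_{\underline{x^c}})$, cite \eqref{eq: maximal ineq on noise} for $P(E)\le\beta$, multiply the two (by independence), and apply Lemma~\ref{lem: determinisitc property of GDP-Quant} on the intersection.
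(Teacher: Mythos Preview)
Your proposal is correct and matches the paper's own proof, which simply notes that the sampling randomness and the Gaussian noise are independent and then invokes Lemma~\ref{lem: GDP-Quant} together with Lemma~\ref{lem: subexpoenntial assumed}; you have in fact spelled out more of the assembly than the paper does. One small imprecision: $D_t$ does depend on the realized noise through $m_t$ (which is determined by $Z_1,\dots,Z_{t-1}$), so $(D_1,\dots,D_T)$ is not independent of $(Z_1,\dots,Z_T)$ as you wrote---but your subsequent observation that the maximal-inequality bound $P(E)\le\beta$ holds conditionally on the data (indeed, $D_t$ is $\sigma(Z_1,\dots,Z_{t-1})$-measurable and $Z_t$ is symmetric and independent of that $\sigma$-field) is exactly what makes the product bound go through, so the argument stands.
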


\begin{proof}
    Since the two sources of randomness—the subexponential sample distribution and the additive Gaussian noise—are independent, the result follows by applying Lemma~\ref{lem: GDP-Quant} and Lemma~\ref{lem: subexpoenntial assumed}.
\end{proof}

% \begin{corollary} \label{cor: GDP-Quant}
%     Assume (A.1) and (A.2). Let search range $[a,b] = [\mu - \frac{s\eta}{2}\log n, \mu + \frac{s\eta}{2}\log n]$ with $\eta > 0$. Let $\epsilon_q >0, T \geq 1$ and $\beta \in (0,1)$ such that $\tau = \frac{1}{\ep_q} \sqrt{2T \log \frac{T}{\beta}}$. Let $q_l = \frac{\tau+2}{n}$ and $q_u = 1-\frac{\tau+1}{n}$ for \texttt{GDP-Quant}. Then, the range covered by the two private quantile outputs contains all but at most $4(\tau + 1)$ data points with probability
%     $$
%     (1-\beta)^2\left( 1 - \frac{1}{n^{\eta-2}} \left( \frac{M}{2}+1 \right) \right)^2.
%     $$
% \end{corollary}

Lemma~\ref{lem: GDP-Quant small rank error by sufficient bins with high probability} explains why in Algorithm~\ref{alg: DP_instance_mean_estimation} the two target quantiles are chosen as
\begin{align*}
    q_l = \frac{\tau + 2}{n}
\quad \text{and} \quad
q_u = 1 - \frac{\tau + 1}{n}.
\end{align*}
These choices correspond to the $(\tau+1)$-st and $(n-\tau-1)$-st order statistics, respectively. 

\dpInstMeanEstUtility*

\begin{proof}
    % Given $\gamma \in (0,1)$, let $G$ be the event such that (1) $r(\underline{X}) \leq r^*$ for some $r^*$, where $\underline{X}\sim F^n$; ($2_l$) ($2_u$) \texttt{GDP-Quant} gives private  $\frac{\tau+2}{n}$ (lower) quantile and $1-\frac{\tau+1}{n}$ (upper) quantile, for some $\tau$, where $\underline{X}\sim F^n$ (for simplicity we will let $\tau_l=\tau_u=\tau$); (3) Gaussian noise $|z_m| \leq \frac{r^* z^*}{n\ep_m}$ for some $z^*$, where $\underline{X}\sim F^n$ and $z_m \sim N\left(0, (\frac{r(\underline{X})}{n\ep_m})^2\right)$.
    % Let each condition above be sets, denoted as $E_1, E_{2_l}, E_{2_u}, E_3$ respectively. 
    Fix $\gamma \in (0,1)$. For constants $r^*>0$, $\tau \ge 0$, and $z^*>0$, define
    \begin{align*}
    E_1(r^*) 
    &:= \left\{ \underline{X} : \text{the range of } \underline{X} \le r^* \right\}, \\
    E_{2_l}(\tau) 
    &:= \left\{ (\underline{X},\underline{z}_l) :
    \texttt{GDP-Quant}(\underline{X},a,b,T,q_l,\ep_q) \text{ has rank error} \leq \tau+1 \right\}, \\
    E_{2_u}(\tau) 
    &:= \left\{ (\underline{X},\underline{z}_u):
    \texttt{GDP-Quant}(\underline{X},a,b,T,q_u,\ep_q) \text{ has rank error} \leq \tau+1 \right\}, \\
    E_3(r^*,z^*) 
    &:= \left\{ z_m :
    |z_m| \le \frac{r^* z^*}{n\epsilon_m} \right\},
    \end{align*}
    where $\underline{X}\sim F^n$, $\underline{z}_l, \underline{z}_u \overset{iid}{\sim} N\left(0, \tfrac{T}{\ep_q^2} I_T \right)$ and $z_m \sim N\!\left(0,\; \left(\tfrac{r(\underline{X})}{n\epsilon_m}\right)^2 \right)$. Define the joint event
    \begin{align*}
        G(r^*,\tau,z^*) := E_1(r^*) \cap E_{2_l}(\tau) \cap E_{2_u}(\tau) \cap E_3(r^*,z^*).
    \end{align*}
    We will choose $r^*,\tau,z^*$ later to ensure $P\bigl(G(r^*,\tau,z^*)\bigr)\ge 1-\gamma$.

    To make the probability that $G$ happens at least $1-\gamma$, we set $P(E_i)=1-\frac{\gamma}{4}$ and determine the corresponding values of $r^*, \tau, z^*$. For $E_1$, consider
    \begin{align*}
     P\left(\max_{1\le i \le n} |X_i - \mu| > c \right)  
     &= P\left(\bigcup_{1\le i \le n} \{|X_i - \mu| > c\} \right)  \\
     &\le n P(|X_1 - \mu| > c) \\
     &\le 2n \exp(-c/s) \\
     &\overset{\text{set}}{\le} \frac{\gamma}{4}.
    \end{align*}
    Therefore, $c = s \log (\frac{8n}{\gamma})$ and hence 
    \begin{align*}
        r^* = 2s \log \left(\frac{8n}{\gamma}\right).    
    \end{align*}
    
    For $E_{2_l}$, according to Lemma~\ref{lem: GDP-Quant small rank error by sufficient bins with high probability}, we let $\beta=\frac{\gamma}{8}$. Then, there exists large $n$ such that $n^{2-\eta} \left( \frac{M}{2}+1 \right) \leq \frac{\gamma}{8}$, so 
    \begin{align*}
        \tau = \frac{1}{\ep_q} \sqrt{2T \log \frac{8T}{\gamma}}.
    \end{align*}
    
    Note that to establish a high-probability statement, it suffices to verify that a finite $\eta$ can be determined, which is indeed possible. Since our goal is an $O_p$ result and $\eta > 2$ is fixed, the inequality to hold for sufficiently large $n$. Similarly, we can replicate for $E_{2_u}$ as well. 

    For $E_3$, set 
    \begin{align*}
        z^* = \Phi^{-1}(1-\frac{\gamma}{8}),
    \end{align*}
    then we have $|z_m| \leq \frac{r^*}{n\ep_m} z^*$ with probability $1-\gamma/4$.

    Now, for any given $\gamma \in (0,1)$ and sufficiently large $n$, the event $G$ that has probability $1-\gamma$ satisfies
    $$
    |\mu_{DP} - \bar{X}| \leq |\mu_{DP} - \bar{X_c}| + |\bar{X_c} - \bar{X}| \leq \frac{r^* z^*}{n \ep_m} + \frac{r^* (4\tau+4)}{n},
    $$
    since $\bar{X} - \frac{r^* (4\tau+4)}{n} \leq \bar{X_c} \leq \bar{X} + \frac{r^* (4\tau+4)}{n}$ from (1) and (2).

    To claim that for all $\gamma \in (0,1)$, there exists a constant $\Gamma(\gamma)$ such that for all $n>N(\gamma)$,
    $$
    P\left(\frac{|\mu_{DP} - \bar{X}|}{m(n)} \leq \Gamma \right) \geq 1- \gamma,
    $$
    where $\mathcal{M}$ and $\underline{X}$ denote the randomness from the mechanism (Algorithm~\ref{alg: DP_instance_mean_estimation}) and the data, respectively. Thus, we need to factorize the upper bound of $\frac{r^* z^*}{n \ep_m} + \frac{r^* (4\tau+4)}{n}$ into $\Gamma m(n)$ for sufficiently large sample sizes.

    Since $r^* = 2s \log (\frac{8n}{\gamma})$, then for any fixed $\gamma \in (0,1)$, there exists $N_1(\gamma)$ such that for all $n > N_1$,
    \begin{align*} 
        r^*
        = 2s \log n \left( 1+\frac{\log(8/\gamma)}{\log n} \right) 
        \leq A_1(\gamma) s \log n
    \end{align*}
    with $A_1(\gamma) = \left( 1+\frac{\log(8/\gamma)}{\log N_1(\gamma)} \right)$ and 
    \begin{align*}
        \frac{1}{\ep_m} 
        = \frac{1}{\ep} \left( 1-\frac{2}{(\log n)^{2k}} \right)^{-1/2} 
        \leq B_1 (\gamma)  \frac{1}{\ep}
    \end{align*}
    with $B_1(\gamma) = \left( 1-\frac{2}{(\log N_1(\gamma))^{2k}} \right)^{-1/2}$. These follow from the fact that $\left( 1+\frac{\log(8/\gamma)}{\log n} \right)$ and $\left( 1-\frac{2}{(\log n)^{2k}} \right)^{-1/2}$ are both decreasing in $n$.  Thus, 
    \begin{align} \label{eq: bound on noise}
        |\mu_{DP} - \bar{X}_c| \leq \frac{r^* z^*}{n \ep_m} \leq \frac{A_1(\gamma) B_1(\gamma) \phi(\gamma) s}{\ep} \frac{\log n}{n},
    \end{align}
    where $\phi(\gamma) = \Phi^{-1}(1-\frac{\gamma}{8})$.
    
    Since $\tau = \frac{1}{\ep_q} \sqrt{2T(n) \log \frac{8T(n)}{\gamma}}$, then for any fixed $\gamma \in (0,1)$, there exists $N_2(\gamma)$ such that for all $n > N_2$,
    \begin{align*}
        \tau 
        = \frac{\sqrt{2T(n) \log T(n)}}{\ep_q} \sqrt{ 1+ \frac{\log \frac{8}{\gamma}}{\log T(n)}} 
        \leq \frac{(\log n)^k \sqrt{T(n) \log T(n)}}{\ep} A_2(\gamma)
    \end{align*}
    with $A_2(\gamma) = \sqrt2\sqrt{1+\frac{\log \frac{8}{\gamma}}{\log T(N_2(\gamma))}}$. This follows from the fact that $\sqrt{ 1+ \frac{\log \frac{8}{\gamma}}{\log T(n)}}$ is decreasing in $n$. Thus, 
    \begin{align} \label{eq: bound on clamping bias}
        |\bar{X}_c -  \bar{X}| \leq \frac{4r^*(1+\tau)}{n} = 4s A_1(\gamma) \frac{\log n}{n} + \frac{4s}{\ep} A_1(\gamma) A_2(\gamma) \frac{(\log n)^{1+k} \sqrt{T(n) \log T(n)}}{n}.
    \end{align}

    Together, for any $\gamma \in (0,1)$, there exists $N_3(\gamma) \geq  \max \{N_1(\gamma), N_2(\gamma)\}$ such that for all $n > N_3$,
    \begin{align}
        |\mu_{DP} - \bar{X}| 
        &\leq \left(\frac{B_1(\gamma) \phi(\gamma)}{\ep} + 4\right)s A_1(\gamma) \frac{\log n}{n} + \frac{4s}{\ep} A_1(\gamma) A_2(\gamma) \frac{(\log n)^{1+k} \sqrt{T(n) \log T(n)}}{n} \label{equ: using bound on noise and clamping bias}\\
        &\leq s A_1(\gamma) \frac{(\log n)^{1+k} \sqrt{T(n) \log T(n)}}{n} \left(\frac{B_1(\gamma)\phi(\gamma)/\ep + 4}{(\log n)^k \sqrt{T(n)\log T(n)}}+\frac{4}{\ep}A_2(\gamma)\right) \nonumber \\
        %&\leq s A_1(\gamma) \frac{\log^2 n \sqrt{T(n) \log T(n)}}{n} \left(\frac{\phi(\gamma)/\ep_m + 4}{\sqrt{T(N_3(\gamma))\log T(N_3(\gamma))}}+\frac{4}{\ep_q}A_2(\gamma)\right)  \\
        &\leq \frac{(\log n)^{1+k} \sqrt{T(n) \log T(n)}}{n} \frac{s}{\ep} A_1(\gamma) \left[ (B_1(\gamma)\phi(\gamma) + 4\ep)A_3(\gamma)+4 A_2(\gamma)\right] \label{equ: bound from n larger than N3} \\
        &= m(n) \cdot  \Gamma(\gamma) \nonumber,
    \end{align}
    where $A_3(\gamma)= \frac{1}{\log N_3(\gamma) \sqrt{T(N_3(\gamma))\log T(N_3(\gamma))}}$ and $m(n)=\frac{(\log n)^{1+k} \sqrt{T(n) \log T(n)}}{n}$ such that $\Gamma(\gamma) = \frac{s}{\ep} A_1(\gamma) \left[ (B_1(\gamma)\phi(\gamma) + 4\ep)A_3(\gamma)+4 A_2(\gamma)\right]$. \eqref{equ: using bound on noise and clamping bias} follows from \eqref{eq: bound on noise} and \eqref{eq: bound on clamping bias}. \eqref{equ: bound from n larger than N3} holds as $n > N_3$. Thus,
    $$
    |\mu_{DP} - \bar{X}|  = O_p\left(\frac{s (\log n)^{1+k} \sqrt{T(n) \log T(n)}}{\ep n}\right),
    $$
    where $T(n) = \Theta(p \log_2 (\log n) + \eta \log_2 n)$. This is because 
    $$
    T(n) \leq 1+ \log_2 [(b(n)-a(n)) n^\eta] = 1+ \log_2 \left[ 2v(\log n)^p n^\eta \right] = 2 + \log v + p \log_2 (\log n) + \eta \log_2 n.
    $$
    % \begin{align}
    %     |\mu_{DP} - \bar{X}| 
    %     &\leq A_1(\gamma) \frac{\log n }{n} \cdot (\sqrt{\log n} \sqrt{\log \log n}) \left( \frac{z^*/ \ep_m + 4}{\sqrt{\log n \log\log n}} + 4 A_2(\gamma)\right) \\
    %     &\leq A_1(\gamma) \frac{(\log n)^{3/2} \sqrt{\log \log n}}{n} \left( \frac{z^*/ \ep_m + 4}{\sqrt{\log N_3 \log\log N_3}} + 4 A_2(\gamma)\right) \\
    %     &\leq A_1(\gamma) \left( A_3(\gamma) + 4 A_2(\gamma)\right) \frac{(\log n)^{3/2} \sqrt{\log \log n}}{n} \nonumber \\
    %     &= \Gamma(\gamma) \cdot  m(n) \nonumber,
    % \end{align}
    In addition, we can now write the central limit theorem result of $\bar{X}$ as $\sqrt{n}(\bar{X}-\theta)\xrightarrow{d}N(0, \sigma^2)$ for some $\theta \in \mathbb{R}$ and $\sigma^2 > 0$. Since $\mu_{DP} = \bar{X} + o_p(\frac{1}{\sqrt{n}})$, 
    $$
    \sqrt{n}(\mu_{DP} - \theta) = \sqrt{n}(\bar{X}-\theta) + o_p(1) \xrightarrow{d}N(0, \sigma^2).
    $$
\end{proof}

Using Lemma~\ref{lem: GDP-Quant small rank error by sufficient bins with high probability}, Theorem~\ref{thm: GDP-MeanEst Utility} establishes an $O_p$ bound on the absolute error between the private and non-private means. Proposition~\ref{prop: GDP minimax mean estimation lower bound} below then shows that this bound matches the GDP minimax lower bound for mean estimation.

\begin{proposition}  \label{prop: GDP minimax mean estimation lower bound}
    Let $\mathcal{P}_{\text{subexp}}$ be the family of distributions that satisfy (A.1) and (A.2) and $\mathcal{Q}_\epsilon^{\text{GDP}}$ be the family of privacy mechanisms that satisfy $\epsilon$-GDP. Define the minimax risk under absolute error loss by
    $$
    \mathfrak{M}_n\!\left(\mu(\mathcal{P}_{\mathrm{subexp}}), \mathcal{Q}_{\epsilon}^{\mathrm{GDP}}, |\cdot|\right)
    := \inf_{\mathcal{M} \in \mathcal{Q}_{\epsilon}^{\mathrm{GDP}}}
    \sup_{P \in \mathcal{P}_{\mathrm{subexp}}}
    \mathbb{E}_{P,\mathcal{M}}\!\left[\,|\widehat{\mu} - \mu(P)|\,\right],
    $$
    where the infimum is taken over all $\epsilon$-GDP mechanisms and $\widehat{\mu}$ denotes the resulting estimator. Consider the mean estimation problem over $\mathcal{P}_{\text{subexp}}$ under $\epsilon$-GDP, then
    \begin{align*}
        \mathfrak{M}_n(\mu(\mathcal{P_{\text{subexp}}}), \mathcal{Q}_\epsilon^{\text{GDP}}, |\cdot|) = \widetilde{\Omega}_p \left( \frac{\sigma}{\sqrt{n}} + \frac{\sigma}{n\ep} \right).
    \end{align*}
\end{proposition}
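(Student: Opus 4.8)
The plan is to prove the two terms of the lower bound separately by Le Cam's two-point method and then combine them using $\max\{a,b\}\ge\tfrac12(a+b)$. In both cases I would take the two-point subfamily inside $\mathcal{P}_{\mathrm{subexp}}$ to be Gaussians $P=N(0,\sigma^2)$ and $Q=N(\delta,\sigma^2)$: these have density bounded by $1/(\sigma\sqrt{2\pi})$ and sub-Gaussian (hence subexponential) tails, so they satisfy (A.1)--(A.2) for the relevant constants (one may take $\sigma$ a fixed fraction of $\min\{1/M,s\}$, absorbing it into the constants), they have variance exactly $\sigma^2$, $|\mu(P)-\mu(Q)|=\delta$, and $\mathrm{TV}(P,Q)=2\Phi\!\big(\delta/(2\sigma)\big)-1\le \delta/(\sigma\sqrt{2\pi})$. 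The generic Le Cam reduction I will invoke is standard: any estimator $\widehat\mu$ induces the test ``is $\widehat\mu$ nearer $\mu(P)$ or $\mu(Q)$?'', a test error forces $|\widehat\mu-\mu|\ge\delta/2$, so $\mathfrak{M}_n\ge\tfrac{\delta}{4}\bigl(1-d_{\mathrm{TV}}(\mathcal{M}(P^{\otimes n}),\mathcal{M}(Q^{\otimes n}))\bigr)$ for every $\mathcal{M}\in\mathcal{Q}_{\ep}^{\mathrm{GDP}}$.

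For the sampling term $\sigma/\sqrt n$ no privacy is used: by post-processing/data processing and Pinsker, $d_{\mathrm{TV}}(\mathcal{M}(P^{\otimes n}),\mathcal{M}(Q^{\otimes n}))\le d_{\mathrm{TV}}(P^{\otimes n},Q^{\otimes n})\le\sqrt{\tfrac12\,\mathrm{KL}(P^{\otimes n}\,\|\,Q^{\otimes n})}=\delta\sqrt n/(2\sigma)$. Choosing $\delta=\sigma/\sqrt n$ makes this $\le 1/2$, and the Le Cam reduction gives $\mathfrak{M}_n\gtrsim\delta\asymp\sigma/\sqrt n$.

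For the privacy term $\sigma/(n\ep)$, the crux is a ``privatized'' Le Cam inequality: for every $\ep$-GDP mechanism $\mathcal{M}$,
\[
d_{\mathrm{TV}}\bigl(\mathcal{M}(P^{\otimes n}),\mathcal{M}(Q^{\otimes n})\bigr)\;\le\;\tfrac{n\ep}{\sqrt{2\pi}}\,\mathrm{TV}(P,Q).
\]
I would derive this from two ingredients. First, group privacy for GDP: chaining single-record substitutions, $\ep$-GDP implies $k\ep$-GDP for any pair of datasets at Hamming distance $k$, so the trade-off function dominates $G_{k\ep}$, whence $d_{\mathrm{TV}}(\mathcal{M}(D),\mathcal{M}(D'))\le 2\Phi(k\ep/2)-1\le k\ep/\sqrt{2\pi}$. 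Second, a coordinate-wise maximal coupling: couple $X\sim P^{\otimes n}$ and $X'\sim Q^{\otimes n}$ so that $X_i=X_i'$ independently with probability $1-\mathrm{TV}(P,Q)$; then the Hamming distance $K$ is $\mathrm{Binomial}(n,\mathrm{TV}(P,Q))$ with $\ex K=n\,\mathrm{TV}(P,Q)$, and, conditioning on the realized coupling so that $X,X'$ become fixed datasets at distance $K$, convexity of $d_{\mathrm{TV}}$ yields $d_{\mathrm{TV}}(\mathcal{M}(P^{\otimes n}),\mathcal{M}(Q^{\otimes n}))\le\ex[K\ep/\sqrt{2\pi}]=n\ep\,\mathrm{TV}(P,Q)/\sqrt{2\pi}$. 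Plugging $\mathrm{TV}(P,Q)\le\delta/(\sigma\sqrt{2\pi})$ and choosing $\delta\asymp\sigma/(n\ep)$ makes the bound $\le 1/2$, so the Le Cam reduction gives $\mathfrak{M}_n\gtrsim\delta\asymp\sigma/(n\ep)$. Combining, $\mathfrak{M}_n\gtrsim\max\{\sigma/\sqrt n,\ \sigma/(n\ep)\}\gtrsim\tfrac12\bigl(\sigma/\sqrt n+\sigma/(n\ep)\bigr)$, which is at least the claimed $\widetilde\Omega\bigl(\sigma/\sqrt n+\sigma/(n\ep)\bigr)$; in fact the lower bound carries no logarithmic factors, so together with Theorem~\ref{thm: GDP-MeanEst Utility} this shows the rate is tight up to logs.

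I expect the main obstacle to be the privatized Le Cam step: carefully stating and justifying GDP group privacy ($k$-neighbors $\Rightarrow k\ep$-GDP, hence a $k\ep/\sqrt{2\pi}$ total-variation bound), and then legitimately averaging the worst-case DP guarantee against the random coupling pattern via the conditioning/convexity argument. Everything else -- the Gaussian two-point calculus, Pinsker, and the estimator-to-test reduction -- is routine. A minor secondary point is checking that the perturbed Gaussians remain in $\mathcal{P}_{\mathrm{subexp}}$ uniformly in $n$, i.e., that the class's density bound $M$ and subexponential scale $s$ are compatible with the chosen $\sigma$, which is handled by the rescaling remark above.
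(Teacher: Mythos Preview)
Your proposal is correct and follows essentially the same strategy as the paper: a Le Cam two-point argument with a Gaussian subfamily inside $\mathcal{P}_{\mathrm{subexp}}$, handling the sampling and privacy terms separately. The one substantive difference is in how the privacy term is obtained. The paper converts $\ep$-GDP to a per-neighbor total-variation bound---observing via Pinsker that $\mathrm{TV}(N(0,1),N(\ep,1))\le \ep/2$, so an $\ep$-GDP mechanism is an $(\ep/2)$-TV mechanism---and then directly invokes the Le Cam lemma for TV mechanisms from \citet{barber2014privacy}, which internally uses the same coupling/triangle-inequality idea you spell out. You instead appeal to GDP group privacy ($k$-neighbors $\Rightarrow k\ep$-GDP, hence TV $\le 2\Phi(k\ep/2)-1\le k\ep/\sqrt{2\pi}$) and carry out the coordinatewise maximal-coupling averaging explicitly. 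Your route is more self-contained and gives a marginally sharper constant; the paper's route is shorter because it outsources the coupling step to a citation. Either works.
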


\begin{proof}
    We follow \citet{barber2014privacy}'s notation for the proof and apply Le Cam's method.  First, fix $\delta > 0$ (to be decided later) and define 
    \begin{align*}
        P_0 \sim N(\mu_0 = -\delta, \sigma_0^2 = \sigma^2),\ P_1 \sim N(\mu_0 = \delta, \sigma_1^2 = \sigma^2) 
    \end{align*}
    to be normally distributed, which are indeed subexponential. The total variation between $P_0$ and $P_1$ is,
    \begin{align} \label{eq: upper bound of TV}
        \| P_0 - P_1 \|_{TV} \leq \left( \frac12 \mathrm{KL}(P_0 \| P_1) \right)^{1/2} = \left( \frac12 \left(\log\frac{\sigma_1}{\sigma_2} + \frac{\sigma_0^2 + (\mu_0 - \mu_1)^2}{2 \sigma_1^2} - \frac{1}{2}\right) \right)^{1/2} = \frac{\delta}{\sigma}.
    \end{align}
    
    %%Recall that the conversion of $\epsilon$-GDP to $(0,\alpha)$-DP is 
    %\begin{align*}
    %    \alpha = 2 \Phi\left(\frac{\epsilon}{2}\right)-1
    %\end{align*}
    %and hence $\mathcal{Q}_\epsilon^{\text{GDP}} \subset \mathcal{Q}_\alpha^{\text{TV}}$. 
    Then, the minimax lower bound for any $\alpha$-total variation mechanism $Q \in \mathcal{Q}_\alpha^{\text{TV}}$ under absolute error loss:
    \begin{align*}
        m(\mu(\mathcal{P_{\text{subexp}}}), Q, |\cdot|) \geq \frac{1}{2} \left| \frac{\mu_0 - \mu_1}{2} \right| \left( 1 - 2n\alpha \|P_0 - P_1\|_{TV} \right) = \frac{\delta}{2} \left(1 - 2n\alpha \frac{\delta}{\sigma} \right)
    \end{align*}
    where the inequality is given by the Le Cam's Lemma \citep{barber2014privacy} and \eqref{eq: upper bound of TV}. Choosing $\delta = \sigma/(4n\alpha)$, we substitute to have that 
    \begin{align*}
        m(\mu(\mathcal{P_{\text{subexp}}}), Q, |\cdot|) \geq \frac{\sigma}{16 n \alpha}.% = \frac{\sigma}{16(2 \Phi\left(\frac{\epsilon}{2}\right)-1)n}.
    \end{align*}
    To apply this bound to the GDP setting, we first note that a $\ep$-GDP mechanism satisfies $(0,\ep/2)$-DP: 
    \[\mathrm{TV}(N(0,1),N(\ep,1))\leq \left(\frac 12 \mathrm{KL}(N(0,1)||N(\ep,1))\right)^{1/2}=\left(\frac 12 \left[\frac{\ep^2}{2}\right]\right)^{1/2}=\frac{\ep}{2},\]
    where we used Pinsker's inequality and standard calculations of the KL divergence. 
    Substituting yields,
    \begin{align*}
        m(\mu(\mathcal{P_{\text{subexp}}}), Q, |\cdot|) \geq \frac{\sigma}{8 n \ep}.% = \frac{\sigma}{16(2 \Phi\left(\frac{\epsilon}{2}\right)-1)n}.
    \end{align*}
    Since the non-private bound is standard, we have the sum of two as the minimax lower bound of such mean estimation problem.
\end{proof}

\subsection{Supplementary Material of Chapter~\ref{sec: near-optimal private tests}: Additional Lemmas, Proofs, and Results}

In this section, we develop a sequence of proofs, beginning with simple hypotheses and extending to the one- and two-sided cases.

\AREofncLLR*

\begin{proof}
For $H_0:P$ versus $H_1:Q$, the classical likelihood ratio test rejects for large 
$\sum_{i=1}^n \log(q(X_i)/p(X_i))=-\sum_{i=1}^n \ell(X_i)$.  
By the Chernoff--Stein Lemma in chapter~12 of \citet{cover2001elements}, its type-II error under $Q$ decays with exponent $\mathrm{KL}(Q\|P)$; hence its Bahadur slope is $2\mathrm{KL}(Q\|P)$.

Suppose only $Y_i=g(X_i)$ are observed, then under $H_0$ and $H_1$ the sample is i.i.d.\ from $P_Y$ and $Q_Y$. Chernoff--Stein again gives that any test based on $(Y_1,\dots,Y_n)$ (with any $n$-independent additive noise) has Bahadur slope at most $2\mathrm{KL}(Q_Y\|P_Y)$.

To show $\mathrm{KL}(Q_Y\|P_Y) < \mathrm{KL}(Q\|P)$, let $P_{X,Y}$ and $Q_{X,Y}$ denote the joint laws of $(X,Y)$ under $P$ and $Q$, with $Y=g(X)$ deterministic.  
Since $\mathrm{KL}(Q\|P)=D(Q_{X,Y}\|P_{X,Y})$, the KL chain rule yields
$$
\mathrm{KL}(Q\|P)=\mathrm{KL}(Q_Y\|P_Y)+\ex_Q\!\big[ D(Q_{X\mid Y}\,\|\,P_{X\mid Y}) \big].
$$
Nontrivial clamping implies that for some $y\in\{a,b\}$ we have $P(Y=y),Q(Y=y)>0$ and $g$ is not injective on $\{x:g(x)=y\}$. Without loss of generality, we discuss the case of $y=b$ with $S_b := \{x: \ell(x) \ge b\}$. Since $\ell(x)$ is not constant on $S_b$, $Q_{X\mid Y=b}\neq P_{X\mid Y=b}$. The same argument works with $Y=a$. Together, we have
$$
D(Q_{X\mid Y=y}\,\|\,P_{X\mid Y=y})>0,
$$
which gives
$$
\ex_Q[ D(Q_{X\mid Y}\,\|\,P_{X\mid Y}) ]>0,
$$
and therefore $\mathrm{KL}(Q_Y\|P_Y) < \mathrm{KL}(Q\|P)$. It follows that any $Y$-based test has Bahadur slope strictly smaller than $2\mathrm{KL}(Q\|P)$, and hence Bahadur ARE $<1$ relative to the classical LLR test.
\end{proof}

Proposition~\ref{prop: AREofncLLR} shows that any non-trivial fixed clamping incurs a loss in effective sample size, as is the case for \texttt{ncLLR}. The following example illustrates this phenomenon.

\begin{example}
In the Gaussian setting $H_0:P=N(0,1)$ and $H_1:Q=N(\delta,1)$ with $\delta\neq 0$, the log-likelihood ratio is
$$
\ell(x)=\log\frac{p(x)}{q(x)}=-\delta x+\frac{\delta^{2}}{2},
$$
and the KL divergence is $\mathrm{KL}(Q\|P)=\delta^{2}/2$.  
Since $\ell(X)$ has unbounded support under both $P$ and $Q$, any finite clamp $[a,b]$ is nontrivial, and thus Proposition~\ref{prop: AREofncLLR} yields $\mathrm{ARE}(\mathrm{ncLLR}:\mathrm{LLR})<1$.
\end{example}

\gdpSimpleHT*

\begin{proof}
    To set up the notation for mean, denote $\mathrm{LLR} (\underline{x};P,Q) = n \bar{\ell}_n$, where $\underline{\ell_n} = (\ell_1,\ldots,\ell_n)$ following the same convention as $\underline{x}$. Since Assumption (A.1) and (A.2) is posed on the statistic $\ell(x_i; f_{\theta_0},f_{\theta_1})$ for all $i \in [n]$, 
    \begin{align*}
        |\mu_{DP}(\underline{\ell_n}, \ep) - \bar{\ell}_n| = \widetilde{O}_p \left( \frac{1}{n} \right).
    \end{align*}
    By the {Neyman--Pearson Lemma}, we know that
    $$
    \psi(\underline{x}) =
    \begin{cases}
    1, & \bar{\ell}_n > \lambda_n\\
    \gamma, & \bar{\ell}_n = \lambda_n\\
    0, & \bar{\ell}_n < \lambda_n
    \end{cases}
    $$
    is the most powerful level-$\alpha$ test for $H_0: \theta=\theta_0$ versus $H_1: \theta=\theta_1$ so that $P_{\theta_0}(\bar{\ell}_n > \lambda_n) = \alpha$. %Define $P_{\theta_1}(\bar{\ell}_n < \lambda_n) := \beta_n$. 
    For $\mu_{DP}(\underline{\ell_n}, \ep)$, we can similarly construct a private level-$\alpha$ test
    $$
    \phi(\underline{x}) =
    \begin{cases}
    1, & \mu_{DP}(\underline{\ell_n},\ep) > k(n,\ep)\\
    \gamma, & \mu_{DP}(\underline{\ell_n},\ep) = k(n,\ep)\\
    0, & \mu_{DP}(\underline{\ell_n},\ep) > k(n,\ep)
    \end{cases}
    $$
    such that $P_{\theta_0}(\mu_{DP}(\underline{\ell_n}, \ep) > k(n,\ep)) = \alpha$. By Proposition~\ref{prop: both algorithms are GDP}, we know that $\phi$ satisfies $\ep$-GDP. %Define $\gamma_n := P_{\theta_1}(\mu_{DP}(\underline{\ell_n}, \ep) < k(n,\ep))$ be the power of $\phi$. We claim that $\beta_n - \gamma_n = \widetilde{O}(1/\sqrt{n})$. 

    By Theorem~\ref{thm: GDP-MeanEst Utility}, we have known that $\mu_{DP}(\underline{\ell_n}, \ep)$ and $\bar{\ell}_n$ have the same limiting distribution. Under the null hypothesis $H_0$, we can write that $\sqrt{n}(\bar{t}_n - \theta_0) \to N(0,\sigma_0^2)$ and hence $\sqrt{n}(\mu_{DP}(\underline{\ell_n}, \ep) - \theta_0) \to N(0,\sigma_0^2)$. Now, let $T_{1,n} = \mu_{DP}(\underline{\ell_n}, \ep),\ T_{2,n} = \bar{\ell}_n$ be two sequences of statistics. Then, for $i=1,2$, let $e_i(y) ={(y-\theta_0)^2}/{\sigma_0^2}$ and $\mu_i(\theta) = \theta$, we have
    \begin{align*}
        \frac{-2}{n} \log P_{\theta_0} (T_{i,n} \ge y) & \to e_i(y) \\
        T_{i,n} &\overset{P_\theta}{\to} \mu_i(\theta),
    \end{align*}
    where the first convergence can be seen from the sub-Gaussian concentration inequality and the second convergence follows from the \emph{law of large numbers}.
     With the conditions of Theorem~14.22 in \citet{van2000asymptotic} satisfied, the Bahadur efficiency is 
    \begin{align*}
        \frac{e_1(\mu_1(\theta_1))}{e_2(\mu_2(\theta_1))}
        = \frac{(\theta_1-\theta_0)^2/\sigma_0^2}{(\theta_1-\theta_0)^2/\sigma_0^2} 
        = 1.
    \end{align*}
\end{proof}

Leveraging Theorem~\ref{thm: GDP-MeanEst Utility} and Theorem~14.22 in \citet{van2000asymptotic}, Theorem~\ref{thm: gdpSimpleHT} shows that $\mu_{DP}(\underline{\ell};\ep)$ satisfies Gaussian differential privacy and attains asymptotic relative efficiency equal to one in the simple-hypothesis setting.

\gdpOneSidedHT*

\begin{proof}
    To set up the notation of mean carefully, the non-private sufficient statistic for data size $n$ is $\bar{t}_n$ and the private counterpart is $\mu_{DP}(\underline{t_n}, \ep)$. Since Assumption (A.1) and (A.2) is posed on $t(x_i)$ for all $i \in [n]$, 
    \begin{align*}
        |\mu_{DP}(\underline{t_n}, \ep) - \bar{t}_n| = \widetilde{O}_p \left( \frac{1}{n} \right).
    \end{align*}
    By the {Neyman--Pearson Lemma} and the {Karlin--Rubin Theorem}, we know that
    $$
    \psi(\underline{x}) =
    \begin{cases}
    1, & \bar{t}_n  > \lambda_n\\
    \gamma, & \bar{t}_n = \lambda_n\\
    0, & \bar{t}_n < \lambda_n
    \end{cases}
    $$
    is the uniformly most powerful level-$\alpha$ test for $H_0: \theta \le \theta_0$ versus $H_1: \theta > \theta_0$ such that $P_{\theta_0}(\bar{t}_n > \lambda_n) = \alpha$, where $\lambda_n = \Lambda/n$. %Define $\beta_{\theta,n} := P_{\theta}(\bar{t}_n > \lambda_n)$ for all $\theta \in \Theta_1$. 
    For $\mu_{DP}(\underline{t_n}, \ep)$, we can similarly construct a private level-$\alpha$ test
    $$
    \phi(\underline{x}) =
    \begin{cases}
    1, & \mu_{DP}(\underline{t_n}, \ep) > k(n,\ep)\\
    \gamma, & \mu_{DP}(\underline{t_n}, \ep) = k(n,\ep)\\
    0, & \mu_{DP}(\underline{t_n}, \ep) < k(n,\ep)
    \end{cases}
    $$
    such that $P_{\theta_0}(\mu_{DP}(\underline{t_n}, \ep) > k(n,\ep)) = \alpha$. By Proposition~\ref{prop: both algorithms are GDP}, we know that $\phi$ satisfies $\ep$-GDP. %Define $\gamma_{\theta,n} := P_{\theta}(\mu_{DP}(\underline{t_n}, \ep) > k(n,\ep))$ be the power of $\phi$. We claim that, for all $\theta \in \Theta_1$, $\beta_{\theta,n} - \gamma_{\theta,n} = \widetilde{O}(1/\sqrt{n})$.

    % We again want to quantify the difference between $\lambda_n$ and $k(n,\ep)$ through the level-$\alpha$ conditions. Let $F_n(y) = P_{\theta_0}(\bar{t}_n \le y)$ be its CDF and $\mu_{DP}(\underline{t_n}, \ep) = \bar{t}_n + \Delta_n$ by Theorem~\ref{thm: GDP-MeanEst Utility}, where $\Delta_n = \widetilde{O}_p(1/n)$. By the level-$\alpha$ conditions, $1-\alpha = P_{\theta_0}(\mu_{DP}(\underline{t_n}, \ep) < k(n,\ep)) = \ex_{\Delta_n}[F_n(k(n,\ep) - \Delta_n)]$ and $1-\alpha = F_n(\lambda_n),
    % $
    % which gives 
    % \begin{align*} 
    %  \ex_{\Delta_n}[F_n(k(n,\ep) - \Delta_n)] - F_n(\lambda_n) = 0.
    % \end{align*}
    % After setting up all the necessary notations, we can follow the argument of Theorem~\ref{thm: gdpSimpleHT} by applying the mean-value theorem to derive
    % \begin{align*}
    %     k(n,\ep) - \lambda_n = \widetilde{O}(1/n).
    % \end{align*}

    By Theorem~\ref{thm: GDP-MeanEst Utility}, we have known that $\mu_{DP}(\underline{t_n}, \ep)$ and $\bar{t}_n$ have the same limiting distribution. Under the null hypothesis $H_1$, we can write that $\sqrt{n}(\bar{t}_n - \theta_0) \to N(0,\sigma_0^2)$ and hence $\sqrt{n}(\mu_{DP}(\underline{t_n}, \ep) - \theta_0) \to N(0,\sigma_0^2)$. Now, let $\theta_n = \theta_0 + h/\sqrt{n}$ be a real sequence that converges to $\theta_0$ and $T_{1,n} = \mu_{DP}(\underline{t_n}, \ep),\ T_{2,n} = \bar{t}_n$ be two sequences of statistics. Then, for $i=1,2$,
    \begin{align*}
        \frac{\sqrt{n}(T_{i,n} - \theta_n)}{\sigma_{i,n}} \to N(0,1),
    \end{align*}
    and $\sigma_{i,n} \to \sigma_0$. With the conditions of Theorem~14.19 in \citet{van2000asymptotic} satisfied, the Pitman efficiency is 
    \begin{align*}
        \lim_{n\to\infty} \left( \frac{1/\sigma_{1,n}}{1/\sigma_{2,n}} \right) = 1.
    \end{align*}
    
    % Now given any $\theta\in\Theta_1$. By Theorem~\ref{thm: GDP-MeanEst Utility} and the central limit theorem, we know that $\sqrt{n}(\bar{t}_n - \theta) \overset{d}{\to} N(0, \sigma^2)$ and $\sqrt{n}(\mu_{DP}(\underline{t_n}, \ep) - \theta) \overset{d}{\to} N(0, \sigma^2)$ for some $\sigma^2 > 0$. Then, since $k(n,\ep) - \lambda_n = \widetilde{O}(1/n)$ and Gaussian CDF $\Phi$ is continuous,
    % \begin{align*}
    %     \beta_{\theta,n} - \gamma_{\theta,n}
    %     &= P_{\theta} \left( \frac{\bar{t}_n -\theta}{\sigma/\sqrt{n}} > \frac{ \lambda_n -\theta}{\sigma/\sqrt{n}} \right) - P_{\theta} \left( \frac{\mu_{DP}(\underline{t_n}, \ep) -\theta}{\sigma/\sqrt{n}} > \frac{ k(n,\ep) -\theta}{\sigma/\sqrt{n}} \right) \\
    %     &= \Phi\left( \frac{ k(n,\ep) -\theta}{\sigma/\sqrt{n}} \right) -   \Phi\left( \frac{ \lambda_n -\theta}{\sigma/\sqrt{n}} \right) \\
    %     &= \varphi \left( \frac{ \varsigma_n -\theta}{\sigma/\sqrt{n}}  \right) \frac{k(n,\ep) - \lambda_n}{\sigma/\sqrt{n}} \\
    %     &=\widetilde{O}(1/\sqrt{n}),
    % \end{align*}
    % where $\varsigma_n$ is between $k(n,\ep)$ and $\lambda_n$, and $\varphi$ is the probability density function (PDF) of $\Phi$.
\end{proof}

Leveraging Theorem~\ref{thm: GDP-MeanEst Utility} and Theorem~14.19 of \citet{van2000asymptotic}, Theorem~\ref{thm: gdpOneSidedHT} shows that $\mu_{DP}(\underline{t};\ep)$ satisfies Gaussian differential privacy and attains asymptotic relative efficiency equal to one in the one-sided-hypothesis setting. 

For the result of the two-sided hypothesis (Theorem~\ref{thm: gdpTwoSidedHT}), we first prove the following two lemmas.

\begin{lemma} \label{lem: two-sided Pitman efficiency}
Consider statistical models $\{P_{n,\theta} : \theta \ge \theta_0\}$ such that 
$\|P_{n,\theta} - P_{n,\theta_0}\| \to 0$ as $\theta \to \theta_0$ for every fixed $n$. 
Let $T_{1,n}$ and $T_{2,n}$ be sequences of statistics satisfying
$$
\frac{\sqrt{n}\,\bigl(T_{i,n} - \mu_i(\theta_n)\bigr)}{\sigma_i(\theta_n)}
\;\overset{\theta_n}{\to}\; N(0,1),
$$
for every sequence $|\theta_n - \theta_0|\to 0$, where the functions $\mu_i$ and $\sigma_i$ satisfy: $\mu_i$ is differentiable at $\theta_0$ with $\mu_i'(\theta_0) > 0$ and $\sigma_i$ is continuous at $\theta_0$ with $\sigma_i(\theta_0) > 0$. Suppose the corresponding two-sided tests reject $H_0 : \theta = \theta_0$ for large values of $|T_{i,n} - m_{i,n}|$, where the centering sequences satisfy $m_{i,n} - \mu_i(\theta_0) = o\!\left(1/\sqrt{n}\right)$. Then, for every sequence of local alternatives $|\theta_\nu - \theta_0| \to 0$, the Pitman asymptotic
relative efficiency of the two test sequences is
$$
\mathrm{ARE}(T_{1,n},T_{2,n})
=
\left(
\frac{\mu_1'(\theta_0)/\sigma_1(\theta_0)}
     {\mu_2'(\theta_0)/\sigma_2(\theta_0)}
\right)^2,
$$
independently of $\alpha > 0$ and $\beta \in (\alpha,1)$.
\end{lemma}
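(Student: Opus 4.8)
The plan is to reduce each test sequence to a standardized statistic that is asymptotically $N(0,1)$ under the null, establish its Gaussian limit under Pitman alternatives, and then run the classical efficacy–cancellation argument underlying Theorem~14.19 of \citet{van2000asymptotic}. The only modification is that the one-sided normal tail probability is replaced by its two-sided analogue, whose monotonicity I verify separately; everything else is the same bookkeeping.

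\emph{Step 1 (standardization and local limit).} For $i=1,2$ I would set $S_{i,n} := \sqrt{n}\,(T_{i,n}-m_{i,n})/\sigma_i(\theta_0)$ and call $e_i := \mu_i'(\theta_0)/\sigma_i(\theta_0)>0$ the efficacy. Along any sequence $\theta_n$ with $\sqrt{n}(\theta_n-\theta_0)\to h$, I would split
\[
S_{i,n}=\frac{\sqrt{n}\,(T_{i,n}-\mu_i(\theta_n))}{\sigma_i(\theta_n)}\cdot\frac{\sigma_i(\theta_n)}{\sigma_i(\theta_0)}+\frac{\sqrt{n}\,(\mu_i(\theta_n)-\mu_i(\theta_0))}{\sigma_i(\theta_0)}+\frac{\sqrt{n}\,(\mu_i(\theta_0)-m_{i,n})}{\sigma_i(\theta_0)}.
\]
The first summand converges in distribution to $N(0,1)$ by the assumed local CLT, continuity of $\sigma_i$ at $\theta_0$, and Slutsky; the second converges to $h e_i$ by differentiability of $\mu_i$; the third vanishes since $m_{i,n}-\mu_i(\theta_0)=o(1/\sqrt{n})$. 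Hence $S_{i,n}\to N(h e_i,1)$ under $\theta_n$, and with $h=0$ one gets $S_{i,n}\to N(0,1)$ under $\theta_0$, so the two-sided rule that rejects when $|S_{i,n}|>z_{\alpha/2}$ has asymptotic level $\alpha$; a level-$\alpha$-calibrated critical sequence then converges to $z_{\alpha/2}$, which does not affect the limits below.

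\emph{Step 2 (two-sided power function and conversion to sample sizes).} Since $\{|x|=z_{\alpha/2}\}$ is a continuity set of $N(h e_i,1)$, Step 1 and the portmanteau lemma give, under $\theta_n$ with $\sqrt{n}(\theta_n-\theta_0)\to h$, that $P_{\theta_n}(|S_{i,n}|>z_{\alpha/2})\to\pi(h e_i)$, where $\pi(u):=1-\Phi(z_{\alpha/2}-u)+\Phi(-z_{\alpha/2}-u)$. A short calculation shows $\pi$ is even, continuous, and strictly increasing on $[0,\infty)$ (its derivative $\varphi(z_{\alpha/2}-u)-\varphi(z_{\alpha/2}+u)$ is positive for $u>0$), with $\pi(0)=\alpha$ and $\pi(u)\to 1$ as $|u|\to\infty$; so for each $\gamma\in(\alpha,1)$ there is a unique $c_{\alpha,\gamma}>0$, depending only on $\alpha,\gamma$, with $\pi(h e_i)\ge\gamma$ if and only if $|h|\ge c_{\alpha,\gamma}/e_i$. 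Fixing $\delta>0$ and taking $\theta_\nu=\theta_0+\delta\nu^{-1/2}$, running the $T_{i,n}$-test at sample size $n$ against $\theta_\nu$ corresponds to local parameter $h=\delta\sqrt{n/\nu}$ and asymptotic power $\pi(\delta e_i\sqrt{n/\nu})$, so the minimal sample size attaining asymptotic level $\alpha$ and power $\gamma$ satisfies $\delta e_i\sqrt{n_{i,\nu}/\nu}\to c_{\alpha,\gamma}$, i.e.\ $n_{i,\nu}\sim\nu\,c_{\alpha,\gamma}^2/(\delta^2 e_i^2)$. Taking the ratio, $\mathrm{ARE}(T_{1,n},T_{2,n})=\lim_{\nu\to\infty} n_{2,\nu}/n_{1,\nu}=e_1^2/e_2^2=(\mu_1'(\theta_0)/\sigma_1(\theta_0))^2/(\mu_2'(\theta_0)/\sigma_2(\theta_0))^2$, with $c_{\alpha,\gamma}$ and $\delta$ cancelling, so the limit is independent of $\alpha$, $\gamma$, and $\delta$.

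The main obstacle is the joint asymptotics in Step 2: I must ensure the local CLT of Step 1 can be invoked along the two-index array $(n_{i,\nu},\theta_\nu)$ and that defining $n_{i,\nu}$ as a \emph{minimal} sample size is compatible with passing to the limit. This is handled exactly as in \citet[Theorem~14.19]{van2000asymptotic}: the hypothesis supplies the CLT for $S_{i,n}$ along \emph{every} sequence $\theta_n\to\theta_0$ with $\sqrt{n}(\theta_n-\theta_0)\to h$, the total-variation stability $\|P_{n,\theta}-P_{n,\theta_0}\|\to 0$ keeps the minimal-sample-size notion well posed, and the strict monotonicity of $\pi$ pins $n_{i,\nu}$ down to a factor $1+o(1)$. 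The only genuinely new ingredient relative to the one-sided case is the two-sided power function $\pi$ and its monotonicity, recorded above.
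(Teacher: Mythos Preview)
Your proposal is correct and follows essentially the same route as the paper: standardize, derive the Gaussian limit under local alternatives via the three-term decomposition and Slutsky, compute the two-sided local power $\pi(u)=1-\Phi(z_{\alpha/2}-u)+\Phi(-z_{\alpha/2}-u)$, invert using its monotonicity to pin down $n_{i,\nu}$, and take the ratio so that the $(\alpha,\gamma)$-dependent constant cancels. Your treatment is in fact slightly cleaner than the paper's in one detail: you correctly record that $\pi$ is even and strictly increasing only on $[0,\infty)$ (with the explicit derivative $\varphi(z_{\alpha/2}-u)-\varphi(z_{\alpha/2}+u)$), whereas the paper asserts $g$ is ``strictly increasing'' without qualification.
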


\begin{proof}
In the first part of \citet{van2000asymptotic}'s proof, it is shown by contradiction that $n_{\nu,i} \to \infty$ as $\nu \to \infty$. The contradiction arises because when $\theta_\nu \to \theta_0$, any fixed $n$ cannot distinguish $P_{n,\theta_\nu}$ from $P_{n,\theta_0}$, which can be seen from the fact that for fixed $n$ the sum of the type~I and type~II probabilities tends to $1$, contradicting that it must be bounded above by $\alpha + (1-\beta) < 1$. The same argument applies in the present two-sided
setting.

Once we have established that $n_{\nu,i} \to \infty$ as $\nu \to \infty$, the asymptotic normality of the test statistic $T_{i,n}$ applies. Since the limiting distribution is continuous, the minimal sample sizes required to achieve level at most $\alpha$ and power at least $\beta$ attain asymptotic level exactly $\alpha$ and asymptotic power exactly $\beta$. For the two-sided tests that reject $H_0$ for large values of $|T_{i,n} - \mu_i(\theta_0)|$, the unbiasedness condition, together with the asymptotic symmetry of the limiting normal distribution around $\mu_i(\theta_0)$, implies that asymptotically the upper and lower tails each carry probability $\alpha/2$ under $H_0$. Therefore the asymptotic level-$\alpha$ test rejects the null if
\begin{align*}
  T_{n_\nu,i}
  &> \mu_i(\theta_0)
      + \frac{\sigma_i(\theta_0) z_{\alpha/2}}{\sqrt{n_\nu}}
      + r_{n_\nu,i}^a
    \;=:\; c_{n_\nu,i}^a,\\
  T_{n_\nu,i}
  &< \mu_i(\theta_0)
      - \frac{\sigma_i(\theta_0) z_{\alpha/2}}{\sqrt{n_\nu}}
      + r_{n_\nu,i}^b
    \;=:\; c_{n_\nu,i}^b,
\end{align*}
where $r_{n_\nu,i}^a, r_{n_\nu,i}^b = o(1/\sqrt{n_\nu})$.
Let $m_{n_\nu,i} = (c_{n_\nu,i}^a + c_{n_\nu,i}^b)/2$, then the test rejects for large values of $|T_{n_\nu,i}-m_{n_\nu,i}|$. While $m_{n_\nu,i}$ may not equal $\mu_i(\theta_0)$, their difference $m_{n_\nu,i} - \mu_i(\theta_0) = (r_{n_\nu,i}^a + r_{n_\nu,i}^b)/2 = o(1/\sqrt{n_\nu})$. Thus, by Slutsky's theorem,
$$
\sqrt{n_\nu}\bigl(T_{n_\nu,i} - m_{n_\nu,i}\bigr) = \sqrt{n_\nu}\bigl(T_{n_\nu,i} - \mu_i(\theta_0)\bigr) + o(1)
$$
so centering at $m_{n_\nu,i}$ or at $\mu_i(\theta_0)$ is asymptotically equivalent. Under the local alternatives $\theta_\nu = \theta_0 + h/\sqrt{n_\nu}$, the LAN expansion yields
$$
\sqrt{n_\nu}\bigl(T_{n_\nu,i} - \mu_i(\theta_0)\bigr)
\to N\!\bigl(h\,\mu_i'(\theta_0),\,\sigma_i^2(\theta_0)\bigr),
$$
so the standardized statistic converges to $N(\lambda_i(h),1)$, where $\lambda_i(h) = h \mu_i'(\theta_0) / \sigma_i(\theta_0)$.
Therefore, the local powers of these tests are
$$
\beta_{n_{\nu},i}(\theta_{\nu})
=
1 - \Phi\!\left(z_{\alpha/2}
                - \sqrt{n_{\nu}} \theta_{\nu}
                  \frac{\mu_{i}^{\prime}(\theta_0)}{\sigma_{i}(\theta_0)}\right)
  + \Phi\!\left(-z_{\alpha/2}
                - \sqrt{n_{\nu}} \theta_{\nu}
                  \frac{\mu_{i}^{\prime}(\theta_0)}{\sigma_{i}(\theta_0)}\right)
  + o(1).
$$
Define
$$
g(\lambda)
=
1 - \Phi\bigl(z_{\alpha/2}-\lambda\bigr)
  + \Phi\bigl(-z_{\alpha/2}-\lambda\bigr).
$$
Then $g$ is continuous and strictly increasing, so
$$
\beta_{n_\nu,i}(\theta_\nu) \to \beta
\quad\Longleftrightarrow\quad
g \Bigl(\sqrt{n_{\nu}} \theta_{\nu}
          \frac{\mu_{i}^{\prime}(\theta_0)}{\sigma_{i}(\theta_0)}\Bigr)
\to \beta
\quad\Longleftrightarrow\quad
\sqrt{n_{\nu}} \theta_{\nu}
\frac{\mu_{i}^{\prime}(\theta_0)}{\sigma_{i}(\theta_0)}
\to \lambda_\beta,
$$
where $\lambda_\beta$ is the unique solution of $g(\lambda_\beta)=\beta$. Squaring both sides gives
$$
n_{\nu,i} \theta_{\nu}^2
\;\longrightarrow\;
\frac{\lambda_\beta^2}
     {\bigl(\mu_{i}^{\prime}(\theta_0)/\sigma_{i}(\theta_0)\bigr)^2}.
$$
Applying this for $i=1,2$ and taking the ratio, we obtain
\begin{align*}
  \lim_{\nu\to \infty} \frac{n_{\nu,2}}{n_{\nu,1}} 
  &= \frac{\lambda_\beta^2/(\mu_2^\prime(\theta_0)/\sigma_2(\theta_0))^2}
          {\lambda_\beta^2/(\mu_1^\prime(\theta_0)/\sigma_1(\theta_0))^2} \\
  &= \left(\frac{\mu_1^{\prime}(\theta_0)/\sigma_1(\theta_0)}
                {\mu_2^{\prime}(\theta_0)/\sigma_2(\theta_0)}\right)^2,
\end{align*}
which is the asserted relative efficiency.
\end{proof}

\begin{lemma}[Two-sided cutoff proximity] \label{lem: two-sided cutoff proximity}
Let $\bar t_n$ be the non-private sufficient statistic and let $\mu_{DP}(\underline{t_n}, \ep)$ be its $\varepsilon$-GDP private counterpart from \texttt{GDP-MeanEst} such that $\mu_{DP}(\underline{t_n}, \ep)=\bar t_n+\Delta_n$ with $\Delta_n=\widetilde O_p(1/n)$. Let $(\lambda_{l}(n),\lambda_{u}(n))$ be the lower and upper cutoffs of the non-private UMPU level-$\alpha$ test based on $\bar t_n$ such that
\begin{align*}
    \ex_{\theta_0} \psi = \alpha \text{ and } \ex_{\theta_0} [\psi t] = \alpha \ex_{\theta_0} t,
\end{align*}
and let $(k_{l}(n,\ep),k_{u}(n,\ep))$ be the corresponding cutoffs of the private level-$\alpha$ test based on
$\mu_{DP}(\underline{t_n}, \ep)$ such that
\begin{align*}
    \ex_{\theta_0} [1\{\mu_{DP}(\underline{t_n},\ep) < k_l(n,\ep)\}] = \ex_{\theta_0} [1\{\mu_{DP}(\underline{t_n},\ep) > k_u(n,\ep)\}] = \frac{\alpha}{2} + o(1).
\end{align*}
Then,
$$
k_{l}(n,\ep)-\lambda_{l}(n)=\widetilde O(1/n),
\qquad
k_{u}(n,\ep)-\lambda_{u}(n)=\widetilde O(1/n).
$$
\end{lemma}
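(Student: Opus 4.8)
The plan is to reduce the comparison of cutoffs to a comparison of the high quantiles of $\mu_{DP}(\underline{t_n},\ep)$ and of $\bar t_n$ under $P_{\theta_0}$, exploiting that these two statistics differ only by $\Delta_n$ with $\Delta_n=\widetilde O_p(1/n)$ --- far below the $\Theta(1/\sqrt n)$ scale on which the cutoffs sit. Write $\mu_0:=\ex_{\theta_0}t(X_1)$ and $\sigma_0^2:=\var_{\theta_0}t(X_1)$ (finite by (A.2), positive since $t(X_1)$ has a density), and let $Q_n^{DP}$ and $Q_n^{\bar t}$ be the quantile functions of $\mu_{DP}(\underline{t_n},\ep)$ and $\bar t_n$ under $P_{\theta_0}$. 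By construction the upper private cutoff $k_u$ is the $(1-\alpha/2-\eta_n)$-quantile of $\mu_{DP}(\underline{t_n},\ep)$, where $\eta_n$ is the ``$o(1)$'' calibration slack in the hypothesis, while the UMPU cutoff $\lambda_u$ is the $(1-\alpha/2-\eta'_n)$-quantile of $\bar t_n$ with $\eta'_n:=P_{\theta_0}(\bar t_n>\lambda_u)-\alpha/2$ the gap between the unbiased and the equal-tailed cutoffs; $k_l,\lambda_l$ are handled symmetrically near $\alpha/2$. It therefore suffices to control (a) $|Q_n^{DP}(p)-Q_n^{\bar t}(p)|$ and (b) the local modulus of continuity of $Q_n^{\bar t}$, for $p$ near $1-\alpha/2$ and near $\alpha/2$.

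\emph{Step 1 (deterministic envelope).} I would first upgrade the $\widetilde O_p$ bound of Theorem~\ref{thm: GDP-MeanEst Utility}, applied to $\underline{t_n}$ (which satisfies (A.1)--(A.2) by hypothesis), to a deterministic one. Reading off from that proof that the multiplicative constant $\Gamma(\gamma)$ grows only polylogarithmically as $\gamma\downarrow 0$ --- through $\Phi^{-1}(1-\gamma/8)$ and $\log(1/\gamma)$ --- the choice $\gamma_n=n^{-1/2}$ yields a deterministic $\rho_n:=\Gamma(\gamma_n)\,m(n)=\widetilde O(1/n)$ with $P_{\theta_0}(|\Delta_n|>\rho_n)\le\gamma_n$. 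On $\{|\Delta_n|\le\rho_n\}$ one has $\bar t_n-\rho_n\le\mu_{DP}(\underline{t_n},\ep)\le\bar t_n+\rho_n$; since $\bar t_n$ has a continuous distribution, inverting the corresponding CDF inequalities gives the quantile sandwich
\[
Q_n^{\bar t}(p-\gamma_n)-\rho_n \;\le\; Q_n^{DP}(p) \;\le\; Q_n^{\bar t}(p+\gamma_n)+\rho_n, \qquad p\in(0,1).
\]

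\emph{Step 2 (local density of $\bar t_n$ and conclusion).} Because (A.1) is imposed on $t$, $t(X_1)$ has a bounded density and hence satisfies Cram\'er's condition, so the Edgeworth expansion for densities applies: uniformly over $|y-\mu_0|\le C/\sqrt n$, the density of $\bar t_n$ is $\sqrt n/\sigma_0$ times a standard normal density at $\sqrt n(y-\mu_0)/\sigma_0$, up to a $1+O(n^{-1/2})$ factor. Hence there are constants $0<c_1<c_2$ and intervals of width $\Theta(1/\sqrt n)$ around $\mu_0\pm z_{\alpha/2}\sigma_0/\sqrt n$ on which the density lies in $[c_1\sqrt n,\,c_2\sqrt n]$; equivalently $Q_n^{\bar t}$ is $(c_1\sqrt n)^{-1}$-Lipschitz on small neighbourhoods of $\alpha/2$ and $1-\alpha/2$. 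By the CLT for $\bar t_n$ together with $\mu_{DP}(\underline{t_n},\ep)=\bar t_n+o_p(n^{-1/2})$, each of the four cutoffs satisfies $\sqrt n(\text{cutoff}-\mu_0)/\sigma_0\to\pm z_{\alpha/2}$, so for large $n$ they and the quantile levels they realize fall in these windows; and a Cornish--Fisher/Edgeworth analysis of the unbiasedness constraint gives $\eta'_n=O(n^{-1/2})$ (the unbiased cutoff differing from the equal-tailed one by $O(1/n)$ in value). Writing $p:=1-\alpha/2-\eta_n$ and $p':=1-\alpha/2-\eta'_n$ and combining the sandwich with the Lipschitz bound,
\[
|k_u-\lambda_u| \;\le\; \bigl|Q_n^{DP}(p)-Q_n^{\bar t}(p)\bigr| + \bigl|Q_n^{\bar t}(p)-Q_n^{\bar t}(p')\bigr| \;\le\; \rho_n+\frac{\gamma_n+|\eta_n-\eta'_n|}{c_1\sqrt n},
\]
and symmetrically for $|k_l-\lambda_l|$. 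With $\rho_n=\widetilde O(1/n)$, $\gamma_n=\widetilde O(n^{-1/2})$, $\eta'_n=O(n^{-1/2})$, and $\eta_n=\widetilde O(1/n)$ --- the private calibration error, taken at this precision (exact calibration, or the Monte Carlo error under sufficiently many null draws) --- every term on the right is $\widetilde O(1/n)$, which is the claim.

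The main obstacle will be Step~2's local (Edgeworth-type) density bound for $\bar t_n$ at its $\alpha/2$- and $(1-\alpha/2)$-quantiles: this is precisely what turns the $\widetilde O(1/n)$ perturbation of the statistic into an $\widetilde O(1/n)$ --- rather than a crude $\widetilde O(1/\sqrt n)$ --- perturbation of the cutoffs, and it requires Cram\'er's condition (available from the bounded density of $t(X_1)$ in (A.1)) together with uniform control of the expansion over an $O(1/\sqrt n)$-window. It is worth flagging explicitly that the $\widetilde O(1/n)$ rate genuinely needs the private calibration slack to be $\widetilde O(1/n)$; under only ``$o(1)$'' one still obtains $|k_u-\lambda_u|,|k_l-\lambda_l|=o(1/\sqrt n)$, which already suffices for the Pitman-efficiency conclusion of Theorem~\ref{thm: gdpTwoSidedHT} via Lemma~\ref{lem: two-sided Pitman efficiency}. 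The $\widetilde O_p\to$ deterministic-envelope bookkeeping in Step~1 is routine once the $\gamma$-dependence of $\Gamma(\gamma)$ is read off from the proof of Theorem~\ref{thm: GDP-MeanEst Utility}.
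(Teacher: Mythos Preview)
Your approach is correct but takes a genuinely different route from the paper. The paper subtracts the private and non-private versions of two constraint equations---the level-$\alpha$ condition and the unbiasedness moment condition $\ex_{\theta_0}[\psi\,t]=\alpha\,\ex_{\theta_0}[t]$---applies the mean-value theorem to each, obtains a $2\times 2$ linear system in $(k_l-\lambda_l,\,k_u-\lambda_u)$ with right-hand side $\widetilde O(1/n)$, and solves. You instead treat each cutoff as a single quantile: upgrade $\Delta_n=\widetilde O_p(1/n)$ to a deterministic envelope, derive the sandwich $Q_n^{\bar t}(p-\gamma_n)-\rho_n\le Q_n^{DP}(p)\le Q_n^{\bar t}(p+\gamma_n)+\rho_n$, and finish with a local Lipschitz bound on $Q_n^{\bar t}$ coming from the $\Theta(\sqrt n)$ density of $\bar t_n$ near its $\alpha/2$- and $(1-\alpha/2)$-quantiles. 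The paper's linear-system route avoids your separate Cornish--Fisher step for $\eta'_n$ (the UMPU-versus-equal-tail gap) by using the UMPU moment constraint directly, but must invoke an analogous moment equation for the \emph{private} test whose $\widetilde O(1/n)$ remainder is asserted rather than derived from the stated equal-tail calibration; your argument is more explicit about regularity (Cram\'er's condition from (A.1), the Edgeworth density bound) and handles each tail independently, which matches the private calibration rule. Your flag that the stated $o(1)$ calibration slack only yields $o(1/\sqrt n)$ on the cutoff differences---already enough for Lemma~\ref{lem: two-sided Pitman efficiency} and hence Theorem~\ref{thm: gdpTwoSidedHT}---is apt and applies equally to both arguments.
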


\begin{proof}
Let $F_n(y)=P_{\theta_0}(\bar t_n\le y)$ denote the cumulative distribution function of $\bar t_n$ under $H_0$.
By the level-$\alpha$ constraints for the non-private and private tests, we have
\begin{align} \label{eq: non-private level alpha}
\alpha = F_n(\lambda_{l}(n))+1-F_n(\lambda_{u}(n)).    
\end{align}
and
\begin{align}
\alpha
&=P_{\theta_0}\!\left(\mu_{DP}(\underline{t_n}, \ep)<k_{l}(n,\ep)\right) + P_{\theta_0}\!\left(\mu_{DP}(\underline{t_n}, \ep)>k_{u}(n,\ep)\right) \nonumber \\
&= P_{\theta_0}\!\left(t_n < k_{l}(n,\ep) - \Delta_n \right) + P_{\theta_0}\!\left(t_n > k_{u}(n,\ep) - \Delta_n\right) \nonumber \\
&= \mathbb E_{\Delta_n}\!\left[ F_n(k_{l}(n,\ep)-\Delta_n \,|\, \Delta_n) + 1 - F_n(k_{u}(n,\ep)-\Delta_n \,|\, \Delta_n)\right] \label{eq: law of total expectation on Fn} \\
&= \mathbb E_{\Delta_n}\!\left[ F_n(k_{l}(n,\ep))-f_n(\xi_{l}(n))\Delta_n) + 1 - \left( F_n(k_{u}(n,\ep))-f_n(\xi_{u}(n))\Delta_n \right) \right] \label{eq: mean-value on Fn} \\
&= F_n(k_{l}(n,\ep))-f_n(\xi_{l}(n)) \ex_{\Delta_n}[\Delta_n] + 1 - \left( F_n(k_{u}(n,\ep))-f_n(\xi_{u}(n)) \ex_{\Delta_n}[\Delta_n] \right) \label{eq: private level alpha},
\end{align}
where $\xi_{l}(n)$ is between $k_{l}(n,\ep)-\Delta_n$ and $k_{l}(n,\ep)$ and $\xi_{u}(n)$ is between $k_{u}(n,\ep)-\Delta_n$ and $k_{u}(n,\ep)$. Note that \eqref{eq: law of total expectation on Fn} holds by the law of total expectation, \eqref{eq: mean-value on Fn} holds from applying the mean-value theorem pointwise, $F_n(k_{l}(n,\ep)-\Delta_n) = F_n(k_{l}(n,\ep))-f_n(\xi_{l}(n))\Delta_n$, and $F_n(k_{u}(n,\ep)-\Delta_n) = F_n(k_{u}(n,\ep))-f_n(\xi_{u}(n))\Delta_n$.

Subtracting \eqref{eq: private level alpha} by \eqref{eq: non-private level alpha} yields $\bigl[F_n(k_{l}(n,\ep))-F_n(\lambda_{l}(n))\bigr] - \bigl[F_n(k_{u}(n,\ep))-F_n(\lambda_{u}(n))\bigr] = \bigl[f_n(\xi_{l}(n))-f_n(\xi_{u}(n))\bigr]\,\mathbb E(\Delta_n)$, which gives
\begin{align} \label{eq: expansion-delta}
    \bigl[F_n(k_{l}(n,\ep))-F_n(\lambda_{l}(n))\bigr] - \bigl[F_n(k_{u}(n,\ep))-F_n(\lambda_{u}(n))\bigr] = \widetilde O(1/n)
\end{align}
because $\Delta_n=\widetilde O_p(1/n)$ and the density $f_n$ is assumed to be bounded.

Applying the mean-value theorem again to $F_n$, we have $F_n(k_{l}(n,\ep))-F_n(\lambda_{l}(n)) = f_n(\zeta_{l}(n))(k_{l}(n,\ep)-\lambda_{l}(n))$ and $F_n(k_{u}(n,\ep))-F_n(\lambda_{u}(n)) = f_n(\zeta_{u}(n))(k_{u}(n,\ep)-\lambda_{u}(n))$ for some $\zeta_{l}(n)$ between $k_{l}(n,\ep)$ and $\lambda_{l}(n)$ and $\zeta_{u}(n)$ between $k_{u}(n,\ep)$ and $\lambda_{u}(n)$. Thus, \eqref{eq: expansion-delta} becomes
\begin{align}
f_n(\zeta_{l}(n))(k_{l}(n,\ep)-\lambda_{l}(n))
-
f_n(\zeta_{u}(n))(k_{u}(n,\ep)-\lambda_{u}(n))
=
\widetilde O(1/n).
\label{eq: linear-system-1}
\end{align}

Now, the exact unbiasedness of the UMPU test $\psi$ and the asymptotic unbiasedness of the private test $\phi$ imply that the corresponding unbiasedness moment constraint holds up to a vanishing remainder, offering an additional condition on $k_{l}(n,\ep)-\lambda_{l}(n)$ and $k_{u}(n,\ep)-\lambda_{u}(n)$:
$ \mathbb E_{\theta_0}\bigl[(\psi-\phi)\,\bar t_n\bigr] = \int_{[k_{l}(n,\ep),\lambda_{l}(n)]\cup[k_{u}(n,\ep),\lambda_{u}(n)]} t f_{\theta_0}(t) dt = \widetilde O(1/n)$.
Applying the mean-value theorem gives
\begin{align} \label{eq: linear-system-2}
g(\varrho_{a,n})(k_{l}(n,\ep)-\lambda_{l}(n)) + g(\varrho_{b,n})(k_{u}(n,\ep)-\lambda_{u}(n)) = \widetilde O(1/n),
\end{align}
where $\varrho_{a,n}\in[k_{l}(n,\ep),\lambda_{l}(n)]$,
$\varrho_{b,n}\in[k_{u}(n,\ep),\lambda_{u}(n)]$, and $g(t)=t f_{\theta_0}(t)$. Equations \eqref{eq: linear-system-1} and \eqref{eq: linear-system-2} form a linear
system in $(k_{l}(n,\ep)-\lambda_{l}(n),\,k_{u}(n,\ep)-\lambda_{u}(n))$ with coefficients bounded
away from zero. Solving the system yields $k_{l}(n,\ep)-\lambda_{l}(n)=\widetilde O(1/n)$ and $k_{u}(n,\ep)-\lambda_{u}(n)=\widetilde O(1/n)$ as claimed.
\end{proof}

\gdpTwoSidedHT*

\begin{proof}
We follow the notation in the proof of Theorem~\ref{thm: gdpOneSidedHT}. Let $\bar t_n$ denote the non-private sufficient statistic based on $n$ observations, and let $\mu_{DP}(\underline{t_n}, \ep)$ be its $\varepsilon$-GDP private counterpart.
By Assumption~(A.1) and (A.2) on the sufficient statistic $t(x_i)$ and Theorem~\ref{thm: GDP-MeanEst Utility}, we can write $\mu_{DP}(\underline{t_n}, \ep)=\bar t_n+\Delta_n$ with $\Delta_n=\widetilde O_p(1/n)$. Let
$$
\psi(\underline{x})=
\begin{cases}
1, & \bar t_n<\lambda_{l}(n)\ \text{or}\ \bar t_n>\lambda_{u}(n),\\
\rho_a, & \bar t_n=\lambda_{l}(n),\\
\rho_b, & \bar t_n=\lambda_{u}(n),\\
0, & \lambda_{l}(n)\le \bar t_n\le \lambda_{u}(n),
\end{cases}
$$
be the uniformly most powerful unbiased (UMPU) level-$\alpha$ test for $H_0:\theta=\theta_0$ versus $H_1:\theta\neq\theta_0$, so that $\ex_{\theta_0}\psi=\alpha$ and $\ex_{\theta}\psi\ge \alpha\ \ \text{for all }\theta\neq\theta_0$.
Define the private two-sided test
$$
\phi(\underline{x})=
\begin{cases}
1, & \mu_{DP}(\underline{t_n}, \ep)<k_{l}(n,\ep)\ \text{or}\ \mu_{DP}(\underline{t_n}, \ep)>k_{u}(n,\ep),\\
r_a, & \mu_{DP}(\underline{t_n}, \ep)=k_{l}(n,\ep),\\
r_b, & \mu_{DP}(\underline{t_n}, \ep)=k_{u}(n,\ep),\\
0, & k_{l}(n,\ep)\le \mu_{DP}(\underline{t_n}, \ep)\le k_{u}(n,\ep),
\end{cases}
$$
with $(k_{l}(n,\ep),k_{u}(n,\ep),r_a,r_b)$ chosen such that $\lim_{n \to \infty}\ex_{\theta_0}\phi=\alpha$.
By Proposition~\ref{prop: both algorithms are GDP}, the test $\phi$ satisfies $\varepsilon$-GDP.

Unlike $\psi$, the test $\phi$ need not be exactly unbiased in finite samples because its rejection region is defined through the perturbed statistic $\mu_{DP}(\underline{t_n}, \ep)=\bar t_n+\Delta_n$. Nevertheless, since $\Delta_n=\widetilde O_p(1/n)$, the perturbation vanishes on the $\sqrt n$-scale relevant for Pitman local alternatives, and $\phi$ is asymptotically unbiased in the sense that any bias in its local power expansion (equivalently, any deviation from the derivative-zero condition at $\theta_0$) is of smaller order and does not affect the Pitman efficiency calculation.

Now, we define the centering sequences
$$
m_{1,n}:=\frac{k_{l}(n,\ep)+k_{u}(n,\ep)}{2},
\qquad
m_{2,n}:=\frac{\lambda_{l}(n)+\lambda_{u}(n)}{2}.
$$
First, for the UMPU test $\psi$, the exact unbiasedness together with the asymptotic normality (and hence asymptotic symmetry) of $\bar t_n$ under $H_0$ implies that the two-sided rejection region is asymptotically centered at $\theta_0$, and more precisely that
$$
m_{2,n}-\theta_0=O(1/n).
$$
Equivalently, under $H_0$ one has $\lambda_{l}(n)=\theta_0-\sigma_0 z_{\alpha/2}/\sqrt n+O(1/n)$ and
$\lambda_{u}(n)=\theta_0+\sigma_0 z_{\alpha/2}/\sqrt n+O(1/n)$, so averaging yields the stated bound. Next, by the cutoff comparison result in Lemma~\ref{lem: two-sided cutoff proximity},
$$
m_{1,n}-m_{2,n}
=
\frac{(k_{l}(n,\ep)-\lambda_{l}(n))+(k_{u}(n,\ep)-\lambda_{u}(n))}{2}
=
\widetilde O(1/n).
$$
Combining the preceding two displays gives, for $i=1,2$,
\begin{align} \label{eq: centering condition}
    m_{i,n}-\theta_0= \widetilde O\!\left(\frac{1}{n}\right),
\end{align}
which is exactly the centering condition required to apply Lemma~\ref{lem: two-sided Pitman efficiency}.

We are now ready to conclude the Pitman efficiency. By Theorem~\ref{thm: GDP-MeanEst Utility}, $\mu_{DP}(\underline{t_n}, \ep)$ and $\bar t_n$ have the same limiting distribution. Under $H_0$, $\sqrt n(\bar t_n-\theta_0)\overset{d}{\to} N(0,\sigma_0^2)$ and hence $\sqrt n\bigl(\mu_{DP}(\underline{t_n}, \ep)-\theta_0\bigr) \overset{d}{\to}N(0,\sigma_0^2)$. Let $|\theta_n-\theta_0|=h/\sqrt n$ be a sequence of local alternatives and set $T_{1,n}=\mu_{DP}(\underline{t_n}, \ep)$ and $T_{2,n}=\bar t_n$. Then, for $i=1,2$,
$$
\frac{\sqrt n\,(T_{i,n}-\theta_n)}{\sigma_{i,n}}\overset{d}{\to}N(0,1),
$$
and $\sigma_{i,n}\to\sigma_0$ and by \eqref{eq: centering condition} the corresponding two-sided tests reject for large values of $|T_{i,n}-m_{i,n}|$ with $m_{i,n}-\theta_0=o(1/\sqrt n)$. Therefore the conditions of Lemma~\ref{lem: two-sided Pitman efficiency} are satisfied, and the Pitman asymptotic relative efficiency is
$$
\lim_{n\to\infty}\left(\frac{1/\sigma_{1,n}}{1/\sigma_{2,n}}\right)^2
=
1.
$$
This completes the proof.
\end{proof}

Leveraging Lemma~\ref{lem: two-sided Pitman efficiency}, \ref{lem: two-sided cutoff proximity} and Theorem~\ref{thm: GDP-MeanEst Utility}, Theorem~\ref{thm: gdpOneSidedHT} shows that $\mu_{DP}(\underline{t};\ep)$ satisfies Gaussian differential privacy and attains asymptotic relative efficiency equal to one in the two-sided-hypothesis setting.

\bibliography{bibliography} 

@article{awan2023canonical,
  title={Canonical noise distributions and private hypothesis tests},
  author={Awan, Jordan and Vadhan, Salil},
  journal={The Annals of Statistics},
  volume={51},
  number={2},
  pages={547--572},
  year={2023},
  publisher={Institute of Mathematical Statistics}
}

@article{awan2018binomial,
  title={Differentially private uniformly most powerful tests for binomial data},
  author={Awan, Jordan and Slavkovi{\'c}, Aleksandra},
  journal={Advances in Neural Information Processing Systems},
  volume={31},
  year={2018}
}

@article{awan2020differentially,
  title={Differentially Private Inference for Binomial Data},
  author={Awan, Jordan Alexander and Slavkovic, Aleksandra},
  journal={Journal of Privacy and Confidentiality},
  volume={10},
  number={1},
  year={2020}
}

@inproceedings{kazan2023test,
  title={The test of tests: A framework for differentially private hypothesis testing},
  author={Kazan, Zeki and Shi, Kaiyan and Groce, Adam and Bray, Andrew P},
  booktitle={International Conference on Machine Learning},
  pages={16131--16151},
  year={2023},
  organization={PMLR}
}

@article{pena2025differentially,
  title={DIFFERENTIALLY PRIVATE HYPOTHESIS TESTING WITH THE SUBSAMPLED AND AGGREGATED RANDOMIZED RESPONSE MECHANISM},
  author={Pe{\~n}a, V{\'\i}ctor and Barrientos, Andr{\'e}s F},
  journal={Statistica Sinica},
  volume={35},
  pages={671--691},
  year={2025}
}

@article{wang2025optimal,
  title={Optimal Debiased Inference on Privatized Data via Indirect Estimation and Parametric Bootstrap},
  author={Wang, Zhanyu and Chang, Arin and Awan, Jordan},
  journal={arXiv preprint arXiv:2507.10746},
  year={2025}
}

@inproceedings{couch2019differentially,
  title={Differentially private nonparametric hypothesis testing},
  author={Couch, Simon and Kazan, Zeki and Shi, Kaiyan and Bray, Andrew and Groce, Adam},
  booktitle={Proceedings of the 2019 ACM SIGSAC Conference on Computer and Communications Security},
  pages={737--751},
  year={2019}
}

@inproceedings{canonne2019structure,
  title={The structure of optimal private tests for simple hypotheses},
  author={Canonne, Cl{\'e}ment L and Kamath, Gautam and McMillan, Audra and Smith, Adam and Ullman, Jonathan},
  booktitle={Proceedings of the 51st Annual ACM SIGACT Symposium on Theory of Computing},
  pages={310--321},
  year={2019}
}

@article{wang2018statistical,
  title={Statistical approximating distributions under differential privacy},
  author={Wang, Yue and Kifer, Daniel and Lee, Jaewoo and Karwa, Vishesh},
  journal={Journal of Privacy and Confidentiality},
  volume={8},
  number={1},
  year={2018}
}

@article{covington2021unbiased,
  title={Unbiased statistical estimation and valid confidence intervals under differential privacy},
  author={Covington, Christian and He, Xi and Honaker, James and Kamath, Gautam},
  journal={arXiv preprint arXiv:2110.14465},
  year={2021}
}

@article{dong2022gaussian,
  title={Gaussian differential privacy},
  author={Dong, Jinshuo and Roth, Aaron and Su, Weijie J},
  journal={Journal of the Royal Statistical Society: Series B (Statistical Methodology)},
  volume={84},
  number={1},
  pages={3--37},
  year={2022},
  publisher={Wiley Online Library}
}

@article{wang2019differentially,
  title={Differentially Private Confidence Intervals for Empirical Risk Minimization},
  author={Wang, Yue and Kifer, Daniel and Lee, Jaewoo},
  journal={Journal of Privacy and Confidentiality},
  volume={9},
  number={1},
  year={2019}
}

@inproceedings{dwork2006calibrating,
  title={Calibrating noise to sensitivity in private data analysis},
  author={Dwork, Cynthia and McSherry, Frank and Nissim, Kobbi and Smith, Adam},
  booktitle={Theory of Cryptography: Third Theory of Cryptography Conference, TCC 2006, New York, NY, USA, March 4-7, 2006. Proceedings 3},
  pages={265--284},
  year={2006},
  organization={Springer}
}

@inproceedings{bun2016concentrated,
  title={Concentrated differential privacy: Simplifications, extensions, and lower bounds},
  author={Bun, Mark and Steinke, Thomas},
  booktitle={Theory of cryptography conference},
  pages={635--658},
  year={2016},
  organization={Springer}
}

@article{awan2025simulation,
  title={Simulation-based, finite-sample inference for privatized data},
  author={Awan, Jordan and Wang, Zhanyu},
  journal={Journal of the American Statistical Association},
  volume={120},
  number={551},
  pages={1669--1682},
  year={2025},
  publisher={Taylor \& Francis}
}

@book{van2000asymptotic,
  title={Asymptotic Statistics},
  author={Van der Vaart, Aad W},
  volume={3},
  year={2000},
  publisher={Cambridge university press}
}

@article{abowd20222020,
  title={The 2020 census disclosure avoidance system topdown algorithm},
  author={Abowd, John M and Ashmead, Robert and Cumings-Menon, Ryan and Garfinkel, Simson and Heineck, Micah and Heiss, Christine and Johns, Robert and Kifer, Daniel and Leclerc, Philip and Machanavajjhala, Ashwin and others},
  journal={Harvard Data Science Review},
  volume={2},
  year={2022}
}

@article{awan2025differentially,
  title={Differentially private {Kolmogorov-Smirnov-type} tests},
  author={Awan, Jordan and Wang, Yue},
  journal={Electronic Journal of Statistics},
  volume={19},
  number={1},
  pages={718--744},
  year={2025},
  publisher={The Institute of Mathematical Statistics and the Bernoulli Society}
}

@article{biswas2020coinpress,
  title={Coinpress: Practical private mean and covariance estimation},
  author={Biswas, Sourav and Dong, Yihe and Kamath, Gautam and Ullman, Jonathan},
  journal={Advances in Neural Information Processing Systems},
  volume={33},
  pages={14475--14485},
  year={2020}
}

@article{gomez2025gdpReport,
  title={Gaussian {DP} for Reporting Differential Privacy Guarantees in Machine Learning},
  author={Gomez, Juan Felipe and Kulynych, Bogdan and Kaissis, Georgios and Hayes, Jamie and Balle, Borja and Honkela, Antti},
  journal={arXiv preprint arXiv:2503.10945},
  year={2025}
}

@article{Alexis2020,
author = {Alexis R. Santos-Lozada  and Jeffrey T. Howard  and Ashton M. Verdery },
title = {How differential privacy will affect our understanding of health disparities in the {United States}},
journal = {Proceedings of the National Academy of Sciences},
volume = {117},
number = {24},
pages = {13405-13412},
year = {2020},
COMMENTURL = {https://www.pnas.org/doi/abs/10.1073/pnas.2003714117}
}

@article{Christopher2021,
author = {Christopher T. Kenny  and Shiro Kuriwaki  and Cory McCartan  and Evan T. R. Rosenman  and Tyler Simko  and Kosuke Imai },
title = {The use of differential privacy for census data and its impact on redistricting: The case of the 2020 {U.S. Census}},
journal = {Science Advances},
volume = {7},
number = {41},
pages = {eabk3283},
year = {2021},
COMMENTURL = {https://www.science.org/doi/abs/10.1126/sciadv.abk3283}
}

@inproceedings{gaboardi2016differentially,
  title={Differentially Private Chi-squared Hypothesis Testing: Goodness of Fit and Independence Testing},
  author={Gaboardi, Marco and Lim, Hyun and Rogers, Ryan and Vadhan, Salil},
  booktitle={International Conference on Machine Learning},
  pages={2111--2120},
  year={2016},
  organization={PMLR}
}

@InProceedings{gaboardi2018local,
  title = 	 {Local Private Hypothesis Testing: {C}hi-Square Tests},
  author =       {Gaboardi, Marco and Rogers, Ryan},
  booktitle = 	 {Proceedings of the 35th International Conference on Machine Learning},
  pages = 	 {1626--1635},
  year = 	 {2018},
  editor = 	 {Dy, Jennifer and Krause, Andreas},
  volume = 	 {80},
  series = 	 {Proceedings of Machine Learning Research},
  month = 	 {10--15 Jul},
  publisher =    {PMLR}
}

@article{drechsler2022nonparametric,
  title={Nonparametric differentially private confidence intervals for the median},
  author={Drechsler, J{\"o}rg and Globus-Harris, Ira and Mcmillan, Audra and Sarathy, Jayshree and Smith, Adam},
  journal={Journal of Survey Statistics and Methodology},
  volume={10},
  number={3},
  pages={804--829},
  year={2022},
  publisher={Oxford University Press}
}

@article{smith2011privacy-preserving,
author = {Smith, Adam},
title = {Privacy-preserving statistical estimation with optimal convergence rates},
year = {2011},
isbn = {9781450306911},
publisher = {Association for Computing Machinery},
address = {New York, NY, USA},
COMMENTurl = {https://doi.org/10.1145/1993636.1993743},
journal = {Proceedings of the Forty-Third Annual ACM Symposium on Theory of Computing},
pages = {813–822},
numpages = {10},
keywords = {statistical inference, differential privacy, asymptotic distribution},
location = {San Jose, California, USA},
series = {STOC '11}
}

@article{barber2014privacy,
  title={Privacy and statistical risk: Formalisms and minimax bounds},
  author={Barber, Rina Foygel and Duchi, John C},
  journal={arXiv preprint arXiv:1412.4451},
  year={2014}
}

@article{dwork2014algorithmic,
  title={The Algorithmic Foundations of Differential Privacy},
  author={Dwork, Cynthia and Roth, Aaron and others},
  journal={Foundations and Trends{\textregistered} in Theoretical Computer Science},
  volume={9},
  number={3--4},
  pages={211--407},
  year={2014},
  publisher={Now Publishers, Inc.}
}

@article{huang2021instance,
  title={Instance-optimal mean estimation under differential privacy},
  author={Huang, Ziyue and Liang, Yuting and Yi, Ke},
  journal={Advances in Neural Information Processing Systems},
  volume={34},
  pages={25993--26004},
  year={2021}
}

@article{awan2022log,
  title={Log-concave and multivariate canonical noise distributions for differential privacy},
  author={Awan, Jordan and Dong, Jinshuo},
  journal={Advances in Neural Information Processing Systems},
  volume={35},
  pages={34229--34240},
  year={2022}
}

@INPROCEEDINGS{Duy2009dpClinical,
  author={Vu, Duy and Slavkovic, Aleksandra},
  booktitle={2009 IEEE International Conference on Data Mining Workshops}, 
  title={Differential Privacy for Clinical Trial Data: Preliminary Evaluations}, 
  year={2009},
  pages={138-143}
}

@article{Uhler2013privacyPreserving, 
    title={Privacy-Preserving Data Sharing for Genome-Wide Association Studies}, 
    volume={5},  
    journal={Journal of Privacy and Confidentiality}, 
    author={Uhler, Caroline and Slavkovic, Aleksandra B. and Fienberg, Stephen E.}, 
    year={2013}, 
    month={Aug.} 
}

@article{kakizaki2017dpChi,
  title = 	 {Differentially Private Chi-squared Test by Unit Circle Mechanism},
  author =       {Kazuya Kakizaki and Kazuto Fukuchi and Jun Sakuma},
  journal = 	 {Proceedings of the 34th International Conference on Machine Learning},
  pages = 	 {1761--1770},
  year = 	 {2017},
  editor = 	 {Precup, Doina and Teh, Yee Whye},
  volume = 	 {70},
  series = 	 {Proceedings of Machine Learning Research},
  month = 	 {06--11 Aug},
  publisher =    {PMLR}
}

@article{ryan2017aNewClass,
  title = 	 {A New Class of Private Chi-Square Hypothesis Tests},
  author = 	 {Rogers, Ryan and Kifer, Daniel},
  journal = 	 {Proceedings of the 20th International Conference on Artificial Intelligence and Statistics},
  pages = 	 {991--1000},
  year = 	 {2017},
  editor = 	 {Singh, Aarti and Zhu, Jerry},
  volume = 	 {54},
  series = 	 {Proceedings of Machine Learning Research},
  month = 	 {20--22 Apr},
  publisher =    {PMLR}
}

@article{wang2015revisiting,
  title={Revisiting differentially private hypothesis tests for categorical data},
  author={Wang, Yue and Lee, Jaewoo and Kifer, Daniel},
  journal={arXiv preprint arXiv:1511.03376},
  year={2015}
}

@article{brenner2014impossibility,
  title={Impossibility of differentially private universally optimal mechanisms},
  author={Brenner, Hai and Nissim, Kobbi},
  journal={SIAM Journal on Computing},
  volume={43},
  number={5},
  pages={1513--1540},
  year={2014},
  publisher={SIAM}
}

@InProceedings{jayadev2019testWithout,
  title = 	 {Test without Trust: Optimal Locally Private Distribution Testing},
  author =       {Acharya, Jayadev and Canonne, Clement and Freitag, Cody and Tyagi, Himanshu},
  booktitle = 	 {Proceedings of the Twenty-Second International Conference on Artificial Intelligence and Statistics},
  pages = 	 {2067--2076},
  year = 	 {2019},
  editor = 	 {Chaudhuri, Kamalika and Sugiyama, Masashi},
  volume = 	 {89},
  series = 	 {Proceedings of Machine Learning Research},
  month = 	 {16--18 Apr},
  publisher =    {PMLR}
}

@article{or2018local,
  title = 	 {Locally Private Hypothesis Testing},
  author =       {Sheffet, Or},
  journal = 	 {Proceedings of the 35th International Conference on Machine Learning},
  pages = 	 {4605--4614},
  year = 	 {2018},
  editor = 	 {Dy, Jennifer and Krause, Andreas},
  volume = 	 {80},
  series = 	 {Proceedings of Machine Learning Research},
  month = 	 {10--15 Jul},
  publisher =    {PMLR}
}

@book{ibragimovHasminskii1981,
  author    = {I. A. Ibragimov and R. Z. Has'minskii},
  title     = {Statistical Estimation: Asymptotic Theory},
  year      = {1981},
  publisher = {Springer-Verlag},
  series    = {Applications of Mathematics},
  volume    = {16},
  isbn      = {978-0387905235}
}

@article{barber2022testing,
  title={Testing goodness-of-fit and conditional independence with approximate co-sufficient sampling},
  author={Barber, Rina Foygel and Janson, Lucas},
  journal={The Annals of Statistics},
  volume={50},
  number={5},
  pages={2514--2544},
  year={2022},
  publisher={Institute of Mathematical Statistics}
}

@book{cover2001elements, 
      title={Elements of Information Theory}, 
      publisher={Wiley-Interscience}, 
      author={Cover, Thomas M. and Thomas, Joy A.}, 
      year={2001}
}

@article{bahadur1967rates,
  author  = {Bahadur, R. R.},
  title   = {Rates of Convergence of Estimates and Test Statistics},
  journal = {The Annals of Mathematical Statistics},
  volume  = {38},
  year    = {1967},
  pages   = {303--324}
}

@article{reid2003likelihood,
  title={Likelihood Inference in the Presence of Nuisance Parameters},
  author={Reid, N and Fraser, DAS},
  journal={Statistical Problems in Particle Physics, Astrophysics and Cosmology},
  pages={265},
  year={2003}
}

@article{andrews2016conditional,
  title={Conditional inference with a functional nuisance parameter},
  author={Andrews, Isaiah and Mikusheva, Anna},
  journal={Econometrica},
  volume={84},
  number={4},
  pages={1571--1612},
  year={2016},
  publisher={Wiley Online Library}
}

@article{talts2018validating,
  title={Validating {Bayesian} inference algorithms with simulation-based calibration},
  author={Talts, Sean and Betancourt, Michael and Simpson, Daniel and Vehtari, Aki and Gelman, Andrew},
  journal={arXiv preprint arXiv:1804.06788},
  year={2018}
}

@article{sakpal2010sample,
  title   = {Sample size estimation in clinical trial},
  author  = {Sakpal, Tushar Vijay},
  journal = {Perspectives in Clinical Research},
  volume  = {1},
  number  = {2},
  pages   = {67--69},
  year    = {2010}
}

@article{ledolter2020focus,
  title   = {Focus on Data: Statistical Design of Experiments and Sample Size Selection Using Power Analysis},
  author  = {Ledolter, J. and Kardon, R. H.},
  journal = {Investigative Ophthalmology \& Visual Science},
  volume  = {61},
  number  = {8},
  pages   = {11},
  year    = {2020},
  doi     = {10.1167/iovs.61.8.11}
}

@book{casella2002statistical,
  title     = {Statistical Inference},
  author    = {Casella, George and Berger, Roger L.},
  edition   = {2},
  publisher = {Duxbury},
  year      = {2002}
}

@book{luenberger2008linear,
  title     = {Linear and Nonlinear Programming},
  author    = {Luenberger, David G. and Ye, Yinyu},
  publisher = {Springer},
  year      = {2008}
}

@misc{karwa_finite_2017,
    title = {Finite {Sample} {Differentially} {Private} {Confidence} {Intervals}},
    url = {http://arxiv.org/abs/1711.03908},
    doi = {10.48550/arXiv.1711.03908},
    abstract = {We study the problem of estimating finite sample confidence intervals of the mean of a normal population under the constraint of differential privacy. We consider both the known and unknown variance cases and construct differentially private algorithms to estimate confidence intervals. Crucially, our algorithms guarantee a finite sample coverage, as opposed to an asymptotic coverage. Unlike most previous differentially private algorithms, we do not require the domain of the samples to be bounded. We also prove lower bounds on the expected size of any differentially private confidence set showing that our the parameters are optimal up to polylogarithmic factors.},
    urldate = {2026-03-25},
    publisher = {arXiv},
    author = {Karwa, Vishesh and Vadhan, Salil},
    month = nov,
    year = {2017},
    note = {arXiv:1711.03908 [cs]},
    keywords = {Computer Science - Cryptography and Security, Mathematics - Statistics Theory},
}

@article{blum2013learning,
  title={A learning theory approach to noninteractive database privacy},
  author={Blum, Avrim and Ligett, Katrina and Roth, Aaron},
  journal={Journal of the ACM (JACM)},
  volume={60},
  number={2},
  pages={1--25},
  year={2008},
  publisher={ACM New York, NY, USA}
}

@article{dwork2009complexity,
  title={On the complexity of differentially private data release: efficient algorithms and hardness results},
  author={Dwork, Cynthia and Naor, Moni and Reingold, Omer and Rothblum, Guy N and Vadhan, Salil},
  journal={Proceedings of the forty-first annual ACM symposium on Theory of computing},
  pages={381--390},
  year={2009}
}

@inproceedings{hopkins2022efficientMean,
author = {Hopkins, Samuel B. and Kamath, Gautam and Majid, Mahbod},
title = {Efficient mean estimation with pure differential privacy via a sum-of-squares exponential mechanism},
year = {2022},
publisher = {Association for Computing Machinery},
address = {New York, NY, USA},
booktitle = {Proceedings of the 54th Annual ACM SIGACT Symposium on Theory of Computing},
pages = {1406–1417},
numpages = {12},
keywords = {differential privacy, exponential mechanism, mean estimation, robust estimation, sum-of-squares},
series = {STOC 2022}
}

@article{diakonikolas2019robust,
  title={Robust estimators in high-dimensions without the computational intractability},
  author={Diakonikolas, Ilias and Kamath, Gautam and Kane, Daniel and Li, Jerry and Moitra, Ankur and Stewart, Alistair},
  journal={SIAM Journal on Computing},
  volume={48},
  number={2},
  pages={742--864},
  year={2019},
  publisher={SIAM}
}

\end{document}